\documentclass[11pt]{article}
\usepackage[margin=1in]{geometry}
\usepackage[utf8]{inputenc}
\usepackage[T1]{fontenc} 

\usepackage{mathpazo}
\usepackage{xcolor}
\usepackage{color}
\definecolor{RED}{RGB}{250,0,0}
\definecolor{NICERED}{RGB}{190,38,38}
\definecolor{niceRed}{RGB}{190,38,38}
\definecolor{niceYellow}{HTML}{f5b400}
\definecolor{blueGrotto}{HTML}{059DC0}
\definecolor{royalBlue}{HTML}{057DCD}
\definecolor{navyBlue}{HTML}{0B579C}
\definecolor{limeGreen}{HTML}{81B622}
\definecolor{nicePurple}{HTML}{9c27b0}
\definecolor{lightRoyalBlue}{HTML}{def2ff}  
\definecolor{gold}{HTML}{ffa300}
\definecolor{codexTeal}{HTML}{00897B}
\definecolor{codexUltraPurple}{HTML}{6D28D9}

\newcommand{\white}[1]{\textcolor{white}{#1}}

\usepackage[suppress]{color-edits} 
\addauthor[Jane]{jl}{blue}
\addauthor[Vikram]{vk}{niceRed} 
\addauthor[Anay]{am}{gold}
\addauthor[Manolis]{mz}{teal}
\addauthor[GPT]{gpt}{nicePurple}
\addauthor[Claude]{claude}{orange}
\addauthor[Codex]{codex}{codexTeal}
\addauthor[Codex Ultra]{codexultra}{codexUltraPurple}

\usepackage{todonotes}

\usepackage{hyperref}
\hypersetup{
  colorlinks = true, 
  urlcolor = {blueGrotto},
  linkcolor = {royalBlue},
  citecolor = {navyBlue}
}
\usepackage[
  backref=true,
  backend=biber,
  natbib=true,
  style=alphabetic,
  sorting=alphabeticlabel,
  sortcites=true,
  minbibnames=3,
  maxbibnames=999,
  mincitenames=1,
  maxcitenames=4,
  minalphanames=1,
  maxalphanames=6,
  doi=false, url=false, eprint=false, isbn=false, 
]{biblatex}

\DeclareSortingTemplate{alphabeticlabel}{
  \sort[final]{%
    \field{labelalpha} 
  }
  \sort{%
    \field{year}   
  }
  \sort{%
    \field{title}  
  }
}

\AtBeginRefsection{\GenRefcontextData{sorting=ynt}}
\AtEveryCite{\localrefcontext[sorting=ynt]}
\usepackage{booktabs} 
\usepackage{multicol} 
\usepackage{multirow} 
\usepackage{longtable}

\usepackage{subcaption}
\usepackage{graphicx}
\usepackage{float} 
\usepackage{tikz}
\usepackage{pgfplots}
\pgfplotsset{compat=1.17}
\usepgfplotslibrary{fillbetween}
\usetikzlibrary{arrows.meta}
\usetikzlibrary{calc}

\tikzset{
  myNodeFlex/.style={
    draw,
    rectangle,
    rounded corners,
    text centered,
    minimum height=1.5em,
  }
}

\tikzset{
  myNode/.style={
    draw,
    rectangle,
    rounded corners,
    text centered,
    minimum height=1.5em,
    minimum width=3cm,
    text width=5cm,    
  }
}

\tikzset{
  myNodeNarrow/.style={
    draw,
    rectangle,
    rounded corners,
    text centered,
    minimum height=1.5em,
    minimum width=1cm,
  }
}

\tikzset{
  myNodeWide/.style={
    draw,
    rectangle,
    rounded corners,
    text centered,
    minimum height=1.5em,
    minimum width=6cm,
  }
}

\usepackage{physics} 
\usepackage{amsmath}
\usepackage{amssymb}
\usepackage{amsthm}
\usepackage{bbm}
\usepackage{blkarray}
\usepackage{mathtools}
\usepackage{nicefrac}
\usepackage{dsfont}
\usepackage{pifont} 
\usepackage{thm-restate} 
\usepackage[capitalise,noabbrev,nameinlink,sort]{cleveref} 

\usepackage{setspace}

\usepackage{typed-checklist}
\usepackage{enumerate} 
\usepackage{enumitem} 
\usepackage{tcolorbox}
\usepackage[framemethod=TikZ]{mdframed}
\mdfsetup{%
backgroundcolor=gray!0, , 
roundcorner=4pt,
skipabove=15pt,
skipbelow=5pt,
linewidth=1pt}
\usepackage{changepage} 

\usepackage{algorithm}
\usepackage{algpseudocode}[1]

\usepackage{mathrsfs} 
\usepackage{soul} 
\setuldepth{Berlin}

\theoremstyle{plain} 
\newtheorem{theorem}{Theorem}[section]
\newtheorem{corollary}[theorem]{Corollary}
\newtheorem{proposition}[theorem]{Proposition}
\newtheorem{lemma}[theorem]{Lemma}

\newtheorem{claim}[theorem]{Claim}

\newtheorem{assumption}{Assumption}
\newtheorem{condition}{Condition}

\newtheorem{definition}{Definition}

\newtheorem*{definition*}{Definition}
\newtheorem{problem}{Problem}

\theoremstyle{definition} 
\newtheorem{example}[theorem]{Example}

\theoremstyle{remark}
\newtheorem{remark}[theorem]{Remark}

\AfterEndEnvironment{definition}{\noindent\ignorespaces}
\AfterEndEnvironment{claim}{\noindent\ignorespaces}
\AfterEndEnvironment{infdefinition}{\noindent\ignorespaces}
\AfterEndEnvironment{assumption}{\noindent\ignorespaces}
\AfterEndEnvironment{problem}{\noindent\ignorespaces}
\AfterEndEnvironment{openproblem}{\noindent\ignorespaces}
\AfterEndEnvironment{lemma}{\noindent\ignorespaces}
\AfterEndEnvironment{theorem}{\noindent\ignorespaces}
\AfterEndEnvironment{proposition}{\noindent\ignorespaces}
\AfterEndEnvironment{fact}{\noindent\ignorespaces}
\AfterEndEnvironment{question}{\noindent\ignorespaces}
\AfterEndEnvironment{corollary}{\noindent\ignorespaces}
\AfterEndEnvironment{model}{\noindent\ignorespaces}
\AfterEndEnvironment{remark}{\noindent\ignorespaces}
\AfterEndEnvironment{proof}{\noindent\ignorespaces}
\AfterEndEnvironment{fact}{\noindent\ignorespaces}
\AfterEndEnvironment{minftheorem}{\noindent\ignorespaces}
\AfterEndEnvironment{inftheorem}{\noindent\ignorespaces}
\AfterEndEnvironment{maintheorem}{\noindent\ignorespaces}
\AfterEndEnvironment{restatable}{\noindent\ignorespaces}
\AfterEndEnvironment{infassumption}
{\noindent\ignorespaces}
\AfterEndEnvironment{infcorollary}{\noindent\ignorespaces}

\crefname{section}{Section}{Sections}
\crefname{theorem}{Theorem}{Theorems}
\crefname{lemma}{Lemma}{Lemmas}
\crefname{problem}{Problem}{Problems}
\crefname{program}{Program}{Programs}
\crefname{definition}{Definition}{Definitions}
\crefname{conjecture}{Conjecture}{Conjectures}
\crefname{corollary}{Corollary}{Corollaries}
\crefname{construction}{Construction}{Constructions}
\crefname{conjecture}{Conjecture}{Conjectures}
\crefname{claim}{Claim}{Claims}
\crefname{observation}{Observation}{Observations}
\crefname{proposition}{Proposition}{Propositions}
\crefname{fact}{Fact}{Facts}
\crefname{question}{Question}{Questions}
\crefname{problem}{Problem}{Problems}
\crefname{remark}{Remark}{Remarks}
\crefname{example}{Example}{Examples}
\crefname{equation}{Equation}{Equations}
\crefname{appendix}{Appendix}{Appendices}
\crefname{algorithm}{Algorithm}{Algorithms}
\crefname{model}{Model}{Models}
\crefname{figure}{Figure}{Figures}
\crefname{infassumption}{Informal Assumption}{Informal Assumptions}
\crefname{inftheorem}{Informal Theorem}{Informal Theorems}
\crefname{infdefinition}{Informal Definition}{Informal Definitions}
\crefname{minftheorem}{Main Informal Theorem}{Main Informal Theorems}
\crefname{maintheorem}{Main Theorem}{Main Theorems}
\crefname{assumption}{Assumption}{Assumptions}
\crefname{step}{Step}{Steps}
\crefname{result}{Result}{Results}
\crefname{event}{Event}{Events}
\crefname{none}{}{}

\newlist{asmpenum}{enumerate}{1} 
\setlist[asmpenum]{label={\arabic*.},ref=\theassumption.{\arabic*}}
\crefname{asmpenumi}{Assumption}{Assumptions}

\usepackage{etoolbox}

\BeforeBeginEnvironment{subenvironment}{\white{.}\\ \vspace{-10mm}}
\AfterEndEnvironment{subenvironment}{\vspace{4mm}}

\newcommand{\yesnum}{\addtocounter{equation}{1}\tag{\theequation}} 
\makeatletter
\newcommand{\tagnum}[2]{%
    \refstepcounter{equation}%
    \tag{#1) \ (\theequation}%
    \protected@write \@auxout {}{%
        \string \newlabel {#2}{{\theequation}{\thepage}{}{equation.\theequation}{}}%
    }%
}
\makeatother

\makeatletter
\renewcommand{\eqref}[1]{\textup{\eqrefform@{\ref{#1}}}}
\let\eqrefform@\tagform@

\newcommand{\changetag}[1]{%
  \renewcommand\tagform@[1]{\maketag@@@{(\ignorespaces#1\unskip\@@italiccorr)}}%
}
\makeatother

\newcommand{\quadtext}[1]{\quad\text{#1}\quad}
\newcommand{\qquadtext}[1]{\qquad\text{#1}\qquad}
 
\newcommand{\quadand}{\quadtext{and}}
\newcommand{\qquadand}{\qquadtext{and}}

\newcommand{\quadwhere}{\quadtext{where}}

\def\abs#1{\left| #1 \right|}
\def\sabs#1{| #1 |}
\newcommand{\given}{\,\middle|\,}
\newcommand{\sinparen}[1]{(#1)}

\newcommand{\sinbrace}[1]{\{#1\}}

\newcommand{\inbrace}[1]{\left\{#1\right\}}

\newcommand{\inparen}[1]{\left(#1\right)}
\newcommand{\insquare}[1]{\left[#1\right]}
\newcommand{\inangle}[1]{\left\langle#1\right\rangle}

\newcommand{\snorm}[1]{\ensuremath{\| #1 \|}}
\let\norm\relax
\newcommand{\norm}[1]{\ensuremath{\left\lVert #1 \right\rVert}}

\newcommand{\R}{\mathbb{R}}

\newcommand{\E}{\operatornamewithlimits{\mathbb{E}}} 
\newcommand{\Ex}{\E}

\newcommand{\cov}{\ensuremath{\operatornamewithlimits{\rm Cov}}}
\newcommand{\Var}{\ensuremath{\operatornamewithlimits{\rm Var}}}
\newcommand{\corr}{\ensuremath{\operatornamewithlimits{\rm Corr}}}

\newcommand{\argmin}{\operatornamewithlimits{arg\,min}}
\newcommand{\argmax}{\operatornamewithlimits{arg\,max}}

\newcommand{\tv}[2]{\operatorname{d}_{\mathsf{TV}}\sinparen{#1,#2}}

\newcommand{\zo}{\ensuremath{\inbrace{0, 1}}}

\newcommand{\sfrac}[2]{{#1/#2}} 
\newcommand{\nfrac}[2]{\nicefrac{#1}{#2}}

\newcommand{\supp}{\operatorname{supp}}

\newcommand{\iid}{i.i.d.}

\newcommand{\eps}{\varepsilon}
\renewcommand{\epsilon}{\varepsilon}
\makeatletter
\newcommand*{\tran}{{\mathpalette\@tran{}}}
\newcommand*{\@tran}[2]{\raisebox{\depth}{$\m@th#1\intercal$}}
\makeatother

\mathchardef\NABLA"272
\newcommand*{\Nabla}{\boldsymbol\NABLA}
\let\nabla\Nabla

\renewcommand{\hat}{\widehat}
\newcommand{\wh}[1]{\widehat{#1}}

\renewcommand{\bar}{\overline}

\renewcommand{\tilde}{\widetilde}
\newcommand{\wt}[1]{\widetilde{#1}}

\newcommand{\customcal}[1]{\euscr{#1}}

\newcommand{\cC}{\customcal{C}}
\newcommand{\cD}{\customcal{D}}

\newcommand{\cF}{\customcal{F}}

\newcommand{\cN}{\customcal{N}}

\newcommand{\cP}{\customcal{P}} 
\newcommand{\cQ}{\customcal{Q}}

\newcommand{\cX}{\customcal{X}}

\newcommand{\cZ}{\customcal{Z}}

\DeclareMathAlphabet{\mathdutchcal}{U}{dutchcal}{m}{n}
\SetMathAlphabet{\mathdutchcal}{bold}{U}{dutchcal}{b}{n}
\DeclareMathAlphabet{\mathdutchbcal}{U}{dutchcal}{b}{n}

\DeclareMathAlphabet\urwscr{U}{urwchancal}{b}{n}%
\DeclareMathAlphabet\rsfscr{U}{rsfso}{m}{n}
\DeclareMathAlphabet\euscr{U}{eus}{m}{n}
\DeclareFontEncoding{LS2}{}{}
\DeclareFontSubstitution{LS2}{stix}{m}{n}
\DeclareMathAlphabet\stixcal{LS2}{stixcal}{m} {n}

    \renewcommand{\paragraph}[1]{\bigskip \noindent\textbf{#1}~}

\newcommand{\ie}{\textit{i.e.}}
\newcommand{\eg}{\textit{e.g.}}

\newcommand{\hypo}[1]{\mathbbmss{#1}}
\newcommand{\hyA}{\hypo{A}}
\newcommand{\hyB}{\hypo{B}}
\newcommand{\hyC}{\hypo{C}}
\newcommand{\hyD}{\hypo{D}}

\newcommand{\hyF}{\hypo{F}}

\newcommand{\hyH}{\hypo{H}}

\newcommand{\hyQ}{\hypo{Q}}

\newcommand{\hyS}{\hypo{S}}

\newcommand{\hyX}{\hypo{D}}

\newcommand{\hyDX}{\hypo{D}}
\newcommand{\hyDxi}{\hypo{Q}}

\renewcommand{\d}{{\rm d}}

\renewcommand{\hat}{\widehat}
\renewcommand{\tilde}{\widetilde}

\newcommand{\fstar}{f^\star}
\newcommand{\sstar}{s^\star}
\newcommand{\cDxi}{\cQ^\star}

\newcommand{\cDX}{\cD^\star}

\usepackage{array}
\newcolumntype{L}[1]{>{\raggedright\let\newline\\\arraybackslash\hspace{0pt}}m{#1}}
\newcolumntype{C}[1]{>{\centering\let\newline\\\arraybackslash\hspace{0pt}}m{#1}}
\newcolumntype{R}[1]{>{\raggedleft\let\newline\\\arraybackslash\hspace{0pt}}m{#1}}

\newcommand{\normal}[2]{\cN{\inparen{#1,#2}}}

\newcommand{\hyFlin}{\hyF_{\rm lin}}
\newcommand{\cover}[3]{N_{#2,#3}(#1)}

\newcommand\blfootnote[1]{%
  \begingroup
  \renewcommand\thefootnote{}\footnote{#1}%
  \addtocounter{footnote}{-1}%
  \endgroup
}

\title{Learning-Enabled Estimation:\\ Tight Characterizations under Sample Selection Biases}
\date{}

\author{ 
        \centering
        \hspace{4mm}
        \begin{tabular}{C{6.75cm}C{6.75cm}}
        {\bf Vikram Kher} & {\bf Jane H. Lee}\\[0mm]
        Yale University & Yale University\\[-1mm]
        \mbox{\small\href{mailto:vikram.kher@yale.edu}{\texttt{vikram.kher@yale.edu}}} &
        \mbox{\small\href{mailto:jane.h.lee@yale.edu}{\texttt{jane.h.lee@yale.edu}}}
        \\[4mm]
        {\bf Anay Mehrotra} & {\bf Manolis Zampetakis}\\[0mm]
        Stanford University & Yale University\\[-1mm]
        \mbox{\small\href{mailto:anaymehrotra1@gmail.com}{\texttt{anaymehrotra1@gmail.com}}} &
        \mbox{\small\href{mailto:manolis.zampetakis@yale.edu}{\texttt{manolis.zampetakis@yale.edu}}}
        \end{tabular} 
}

\begin{document}

\maketitle  
\setcounter{tocdepth}{2} 

\begin{abstract}
When can we learn from biased samples? We study regression when outcomes are observed only after passing through selection filters that depend on both covariates and outcomes themselves, a ubiquitous challenge spanning clinical trials with patient dropout, labor markets with self-selection, and auctions with strategic entry. Ignoring such selection yields systematically biased conclusions with real-world consequences.
This challenge has a long history in econometrics and statistics, starting with Heckman's seminal two-stage model \cite[Econometrica]{heckman1979sample} and followed by numerous generalizations, \eg{}, \cite[Rev. Econ. Stud.]{lee1982selectivity}, \cite[J. Econometrics]{Ahn1993semiparametric}, \cite[Rev. Econ. Stud.]{das2003nonparametric}. While these works provide various sufficient conditions for identification, a complete characterization of when such regression is possible has remained elusive.

In this work, we provide a characterization for when regression is possible in the presence of sample selection bias.
Our {results establish} the minimal assumptions required on the functional forms of selection processes under which regression remains possible, which are particularly relevant in modern settings where selection mechanisms are increasingly complex and opaque.
As a corollary of our characterization, we show that there are settings where the regression function can be identified even when the selection filter itself cannot. This observation already goes beyond the ``estimate selection filter, then debias regression'' paradigm that is followed by virtually all existing approaches. 
Under natural strengthenings of our identification conditions, we {also} establish finite-sample estimation guarantees with explicit convergence rates and provide oracle-efficient algorithms. 
This yields the first general-purpose estimation method for this broad class of selection problems.
Finally, we explore the implications of our results {for} several well-studied econometric settings with complex selection mechanisms such as auctions with entry costs and labor markets.
\blfootnote{This work is the full version of the extended abstract which is to appear in the proceedings of The Twenty-Seventh ACM Conference on Economics and Computation (EC'26).}

\end{abstract}

\setcounter{page}{1}

\newpage 
\tableofcontents
\newpage

\section{Introduction} \label{sec:intro}
\vspace{-2mm}

In virtually every scientific discipline, data pass through complex selection filters before being observed and analyzed \cite{heckman1979sample,little2019statistical,rawData2013}. 
For instance, in clinical trials we have to draw conclusions despite censoring caused by patient drop-out, in economics and social sciences we rely on data shaped by self-selection and endogenous participation, and in data coming from online platforms we have to deal with feedback loops in which users respond strategically to algorithmic decisions. All of these selection filters, if unaccounted for, lead to fallacious conclusions with consequences extending beyond scientific integrity and can have profound real-world implications. A striking illustration of this comes from the hormone replacement therapy literature, where observational studies suggested large health benefits \cite{stampfer1991estrogen} that were later overturned by evidence from randomized trials once selection into treatment was properly accounted for \cite{rossouw2002risks}.

Methods for correcting errors caused by such selection filters are nearly as old as statistical science itself: Daniel Bernoulli's celebrated smallpox memoir \cite{bernoulli1760essai} corrected life-table estimates for the competing risk of smallpox mortality. Subsequent milestones span epidemiology \cite{greenwood1915statistics}, survey sampling \cite{cochran1977sampling}, labour economics \cite{roy1951earnings}, and Heckman's Nobel prize-winning work on sample-selection models \cite{heckman1979sample}. 

Despite this rich methodological heritage, existing approaches to selection correction fall into two broad categories, each with fundamental limitations. The classical paradigm, exemplified by Heckman's selection model, proceeds by carefully specifying the probabilistic mechanism governing selection, \eg{}, the propensity to be observed, to participate, or to survive. Once articulated, this model provides a likelihood framework where the parameters of interest can be identified and estimated. This approach has proven enormously influential, but its power comes at a price. Each new application demands custom modeling of the selection filter, a process that requires deep domain expertise and often untestable assumptions about the data generation process. More fundamentally, even minor misspecification can propagate into severely biased estimates, precisely because the entire inferential apparatus rests on the accuracy of this initial modeling step.

A more recent wave of methods seeks to escape this modeling burden by learning the selection filter directly from data. The double machine-learning framework \cite{chernozhukov2018double} exemplifies this modern approach: nonparametric or machine learning techniques first estimate nuisance functions, such as propensity scores or other features of the selection mechanism, after which these estimates are plugged into a second stage designed to be robust to moderate estimation errors in the first. This methodology has gained substantial traction precisely because it promises to combine the flexibility of modern predictive methods with the rigor of formal identification strategies. Yet this promise rests critically on a condition that is far from innocuous: the availability of unbiased auxiliary data from which the selection filter can be learned. In the language of causal inference, this corresponds to assumptions such as \textit{unconfoundedness} and \textit{overlap}. In many observational settings of practical and scientific importance, this assumption fails (\eg{}, \cite{hahn2001regressionDiscontinuity,imbens2008regressionDiscontinuity,thistlethwaite1960regressionDiscontinuity,damour2021highDimensional,sommer1991nonComplianceVitaminA,hewitt2006noncompliance,wooldridge2005violatingIgnorability}). Furthermore, there are scenarios where it is not merely that double machine learning lacks the requisite clean data for its first stage—rather, even if an unbiased estimation of the selection filter was possible, the random estimation errors of the first stage can cause standard methods in the second stage, such as maximum likelihood, to completely break down, as was observed by \cite{lee2024unknown}. 

\enlargethispage{\baselineskip}

This gap raises a question that is both foundational and urgent and is our main motivation: 
\vspace{-1.5mm}
\begin{quote}
  \centering
  \emph{Can we perform principled selection-correction when we lack clean, \\ unbiased data to learn the selection filter itself?}
\end{quote}

\noindent To further motivate the research agenda defined by this question, we now discuss several application scenarios where selection functions are complicated, depend on outcomes, and are also unknown.

\begin{enumerate}[leftmargin=15pt]

   \item \textbf{Auctions with Entry Costs:}\label{ex:intro:auction}
	    Consider a platform running repeated auctions where bidders with features $x$ and latent values $y$ must incur entry costs $c$ (including bid preparation and margin requirements\footnote{In an auction, margin refers to the collateral or deposit that a participant must submit as a guarantee against potential losses or obligations arising from their participation in the auction.}) to participate. Bidders enter if $y > c$, a setting well established in econometrics \citep{samuelson1985entry,levin1994entry,gentry_li_2014} and verified empirically \citep{bajari_hortacsu_2003}. If the econometrician wishes to learn $y = f^\star(x) + \xi$ but only observes entering bidders, the data is filtered by $s^\star(x,y) = \Pr(\text{entry} \mid x, y)$. Crucially, learning $s^{\star}$ is nonstandard: we only observe $y$ when $(x, y)$ passes the filter, leaving only positive samples to learn $s$, which is a very challenging and non-standard learning problem.\\[-9pt]

\item \textbf{Extended Roy Model:}\label{ex:intro:roy}
In Roy's classical model \citep{roy1951earnings}, individuals choose among $k$ occupations and only earnings from the chosen occupation are observed. Let $y_j = f_j^\star(x) + \xi_j$ denote potential earnings in occupation $j$. If individuals choose $j^\star \in \arg\max_j y_j$, then $y_1$ is only observed if $y_1 \geq \max_{j \neq 1} y_j$. \citet{heckman_sedlacek_1985} extended the model to include unobserved occupation-specific utilities $u_j$ (capturing amenities, flexibility, match quality), where individuals select $j^\star \in \arg\max_j \{y_j + u_j\}$. This extension has been proved empirically necessary in \citep{heckman_sedlacek_1985}. The goal is to learn $f_1^\star, \ldots, f_k^\star$. In the classical Roy model, observing $(x, \arg\max_j y_j, \max_j y_j)$ reveals information about all $f_j^\star$ since non-chosen occupations satisfy $y_j \leq y_{\max}$. In contrast, the extended model reveals only $(x, j^\star, y_{j^\star})$. For learning $f_j^\star$, only samples where occupation $j$ was chosen are informative, since the constraint $y_j \leq y_i + u_i - u_j$ for $j \neq i$ involves unobserved utilities. This fits our framework: for learning $f_1^\star$, the selection function $s^\star(x,y) = \Pr(j^\star = 1 \mid x, y_1 = y)$ is outcome-dependent, unknown (depending on the joint distribution of $(\xi_2, \ldots, \xi_k, u_1, \ldots, u_k)$), and complicated when $k$ is large. The positive-only samples problem arises again.\\[-9pt]

   \item \textbf{Heckman's Model with Correlated Errors:}\label{ex:intro:heckman}
    In Heckman's model, outcomes $y = f^\star(x) + \xi$ are observed only when $d = \mathds{1}\{g^\star(x) + \xi' > 0\}$. Although the selection rule involves only $x$ and $\xi'$, the selection probability $s^\star(x,y) = \Pr(d = 1 \mid x, y)$ depends on $y$ when errors $\xi$ and $\xi'$ are correlated: conditioning on $y$ reveals information about $\xi$, which shifts the conditional distribution of $\xi'$ and hence $\Pr(d = 1)$. This correlation was central to Heckman's original model and explains why naive OLS on observed samples fails to estimate $f^\star(x)$ \citep{heckman1979sample}. The selection function $s^\star(x,y)$ is unknown in Heckman's original formulation but there is a precise model of $s^{\star}$ that helps us bypass the positive-only data issue. Subsequent work has illustrated the importance of generalizing these models of $s^{\star}$ \citep{lee1982selectivity,lee1983genecon, gallant_nychka_1987} which brings the positive-only data issue back (for more details see \cref{sec:revisiting:dvn}). \\[-9pt]

   \item \textbf{Physical Sciences:}\label{ex:intro:sciences}
    In many measurement pipelines across physical sciences, from astronomy \citep{teerikorpi1997malmquist} to proteomics \citep{webb2015review} to analytical chemistry \citep{palarea2013values}, objects are recorded only when passing detection thresholds. A canonical example is Malmquist bias in astronomical surveys, where distant objects must be intrinsically brighter to exceed detection thresholds \citep{teerikorpi1997malmquist}. Effective selection in such pipelines {is a composition of} multiple filters (detection, classification, quality cuts), each shaped by many factors (engineering constraints, resource allocation, inherited biases), leading to a selection probability $s^\star(x,y)$ that is outcome-dependent, unknown or only partially documented, and potentially complex due to filter composition. 
    This directly fits our regression with selection-bias framework and {faces all the challenges that we have discussed above.}
\end{enumerate}

  \vspace{-3mm}
\subsection{Our Contributions}
\vspace{-1mm}
We study a general model of regression with selection bias that encompasses the applications above. The outcome has the form $y = \fstar(x) + \xi$, where the regression function $\fstar$ is unknown and belongs to a concept class $\hyF$, the noise $\xi$ has distribution $\cQ^\star \in \hyQ$, and the covariates $x$ have distribution $\cD^\star \in \hyD$. Selection bias arises because a sample $(x,y)$ is observed only if it clears a selection stage that retains it with probability $\sstar(x,y) \in \hyS$; otherwise, only $x$ is observed (\cref{def:selection_bias_model}).
Each of $\cQ^\star,\cD^\star,\sstar$ is unknown and assumed to lie in its respective concept class, which may be either parametric or nonparametric.
{We let $\Theta\subseteq \hyF\times\hyS\times\hyD\times\hyQ$ denote the joint concept class of admissible parameter tuples and assume that $\theta^\star\coloneqq(\fstar,\sstar,\cD^\star,\cQ^\star)\in\Theta$.} {As the standard example, we take $\Theta=\hyF\times\hyS\times\hyD\times\hyQ$, which we call the Cartesian-product case. More generally, we allow $\Theta$ to be a non-Cartesian subset of this product, permitting restrictions that couple different components.}
\vspace{-3pt}
\vspace{-1mm}

\paragraph{Main Result (\cref{sec:results}).} Our main result is a necessary and sufficient condition (\cref{cond:identifiability:f}) on {the joint concept class $\Theta$} under which $\fstar$ can be identified from censored data. To the best of our knowledge, this is the first complete characterization of when regression is possible under sample selection bias.
Next, we show that under a natural quantitative strengthening of this condition (\cref{cond:estimation:f}), together with standard regularity and covering assumptions  (\cref{def:regularity}), $\fstar$ can also be estimated from finitely many samples (\cref{thm:estimation:f}).
Our estimator has a useful structural property: its computational bottleneck reduces to a single call to a density-estimation oracle.\footnote{A density estimation oracle is any procedure that\amedit{,} given a family of censored distributions $\hyC$ and \iid{} samples from an unknown $\cC$ in $\hyC$, returns some $\cC' \in \hyC$ close to $\cC$ in TV distance.}
This is noteworthy because sample-efficient algorithms for selection-bias problems do not, in general, admit such reductions (\eg{}, \cite{Kontonis2019EfficientTS}). 
In practice, this could allow powerful (but heuristic) density-estimation methods (\eg{}, diffusion models) to be plugged in, potentially enabling practical algorithms for regression with unknown selection bias (\cref{sec:estimation:oracle}).
\vspace{-3pt}
%

\vspace{-1mm}

\paragraph{Applications (\cref{sec:app}).} We explore the implications of our characterization by revisiting classic models and studying new ones, showing that \cref{cond:identifiability:f} and \ref{cond:estimation:f} enable identification and estimation in settings not captured by existing sufficient conditions.
\begin{enumerate}[leftmargin=15pt,itemsep=-2pt]
    \item First, we exhibit concept classes where $\fstar$ is identifiable even though $\sstar$ itself is \emph{not} identifiable (\cref{sec:app:non-iden-s}). This provides a concrete separation from the ``estimate selection, then debias'' paradigm of most prior work.
    \smallskip
    \item Next, we revisit the nonparametric model of \citet{das2003nonparametric}, among the most general existing frameworks, and show that our characterization strictly generalizes their sufficient conditions. In particular, outcome-dependent truncation mechanisms (natural in physical-science pipelines; see \cref{ex:intro:sciences}) violate their requirements yet remain identifiable under \cref{cond:identifiability:f} (\cref{sec:revisiting:dvn}).
    \smallskip
    \item Finally, as a corollary of our results, we also characterize identifiability for the extended Roy model of \citet{heckman_sedlacek_1985} and the auction-with-entry-costs model of \citet{samuelson1985entry}, for which only sufficient conditions were previously known (\cref{sec:app:otherModels}).
\end{enumerate}

\paragraph{Comparison to \citet{cai2025makestreatmenteffectsidentifiable}.}
A recent work by \citet{cai2025makestreatmenteffectsidentifiable} characterizes identifiability for average treatment effects in causal inference from observational studies, a special case of which can also be formulated as learning from selection bias. In contrast to \citet{cai2025makestreatmenteffectsidentifiable}\amedit{,} though, our work moves away from mean estimation and develops in the direction of regression problems, leading to new technical contributions and applications:
First, moving from mean estimation to regression introduces substantially richer structure, \eg{}, the densities in our framework exhibit intricate dependence on the concept classes $(\hyF, \hyS, \hyD, \hyQ)$, requiring a new perspective and ideas to address them.
Second, we provide a general finite-sample estimation method (\cref{thm:estimation:f}) with end-to-end guarantees; it is an interesting open question whether similar techniques can be applied to the treatment-effect estimation setting of \citet{cai2025makestreatmenteffectsidentifiable}.
Third, our applications exploit structure that only makes sense in regression for economic models: we show the regression function can be identified even when the selection function cannot (\cref{sec:app:non-iden-s}), and we provide identification characterizations for auction and self-selection models (\cref{sec:app:otherModels}) with no direct analogue in treatment-effect estimation. 

\subsection{Illustration of Classic Sample-Selection Models Captured by \label{sec:illustrations}Our Model }\label{subsec:model-examples}
    \label{sec:examples}
    
    \noindent There have been several models of limited dependence proposed in the literature. In this section, we highlight several important, classical models in increasing order of generality. Our identification condition can be used to verify the identification of the target parameter in each of these settings.

    \subsubsection{Heckman's Sample-Selection Model}
        We begin by considering one of the original sample selection models from \citet{heckman1979sample}. During the 1970s, economists became increasingly interested in estimating wage equations for people (\eg{}, \citep{Gronau1981Labor,Gregg1974Labor}). As an example, an economist may be interested in quantifying the linear regression coefficient between a person's wage and a set of observable features such as education level, marital status, or other salient features. In addition to wages being a function of observable features, wages are also influenced by a set of unobservable features like a person's innate motivation and ability. Both observable and unobservable features also influence a person's likelihood of participating in the workforce. 
        
        Given that wages are only known for people who participate in the workforce, Heckman, among others, noticed that if there is a correlation between the unobservable features affecting participation in the workforce and wages, then standard methods for estimating regression coefficients will lead to a biased estimate of the regression. That is to say, the samples are not at random with respect to the underlying population due to selection bias. As a result of this selection bias, \citet{heckman1979sample} showed that standard ordinary least squares methods will produce a biased estimate of the linear regression coefficients. To model this selection bias, \citet{heckman1979sample} introduced a two-equation framework (\cref{def:heckman}) comprising a linear outcome equation, whose coefficient is the target parameter, and a probit selection equation.  
        \begin{definition}[Heckman's model]\label{def:heckman} 
            Heckman's model is defined by vectors $w,v\in \R^d$ and constants $\rho\in (-1,1)$ and $\sigma,\sigma' > 0$. Define 
            \[
            y  =  w^\top x + \xi\,,\quad \xi\sim\normal{0}{\sigma^2}\,, 
            \] 
            \[ 
                d^\star  =  v^\top x + \xi'\,,\quad \xi'\sim\normal{0}{(\sigma')^2}\,,\quad d  =  \mathds{I}\{d^\star>0\}\,, 
            \] 
            where \((\xi,\xi')\) are jointly normal, \(\corr(\xi,\xi')=\rho\), and \((\xi,\xi')\perp x\).
            The researcher always observes $(x,y)$ if $d=1$ and otherwise $(x,\emptyset)$ when $d=0$.
        \end{definition}
        In this setting, $x$ is a vector of observables and selection bias arises in this model because the outcome and selection error terms, $\xi$ and $\xi'$, respectively, are jointly normal with unknown correlation. Consequently, the ordinary least squares method yields a biased estimate because \mbox{$\E[\xi | x, d=1] \neq 0$.} 
        
        To circumvent this issue, Heckman proposes estimating $\E[\xi | x, d=1]$, which he shows is possible due to the parametric assumptions on the outcome and selection equations. More precisely, 
        \[
             \E[\xi | x,d=1] = \rho\sigma\cdot \frac{\phi(v^\top x/\sigma')}{\Phi(v^\top x/\sigma')}\,.
        \]
       Once this term is computed, Heckman shows that the regression coefficient can be recovered by adding the selection correction term as an additional regressor and then performing standard least squares regression on this augmented set of {covariates}. This method has been dubbed the ``Heckman two-step'' and has become standard in labor and applied microeconometrics.

    \subsubsection{A Generalization of Heckman's Model}
    In this section, we consider a generalization of Heckman's model,
    originally proposed by \citet{lee1983genecon}. Lee generalized Heckman's model by allowing the outcome and selection noises to follow known but non-Gaussian distributions.
    \begin{definition}[Lee's Model]\label{def:lee_model}
        Lee's model is defined by vectors $w,v\in \R^d$ and outcome and selection errors $\xi$ and $\xi'$, respectively. The marginal distributions of $\xi$ and $\xi'$ are known and continuous, with distribution functions $F$ and $F'$. Define 
        \[
        y  =  w^\top x + \xi\,,\quad 
            d^\star  =  v^\top x + \xi'\,,\quad d  = \{d^\star>0\}\,, 
        \] 
        with $(\xi,\xi') \perp x$. 
    The researcher always observes $(x,y)$ if $d=1$ and otherwise $(x,\emptyset)$ when $d=0$.
    \end{definition}
    To achieve identifiability, Lee's key insight is to apply a monotone probability integral transformation to the outcome and selection errors. More precisely, Lee reparameterizes the noise by defining $\widehat{\xi} = \Phi^{-1}(F(\xi))$ and $\widehat{\xi'} = \Phi^{-1}(F(\xi'))$. Now, $\widehat{\xi}$ and $\widehat{\xi'}$ are marginally standard normals. Moreover, the original selection criterion is preserved. More precisely, selection occurs when $\xi'  > -v^\top x$. Since $\Phi^{-1}(F(\cdot))$ is a monotone transformation, selection similarly holds when $\widehat{\xi'} = \Phi^{-1}(F(\xi'))> \Phi^{-1}(F(-v^\top x))$. With this transformation applied, Heckman's two-step approach may be applied. As was the case with the polynomial Heckman model, our identification condition also holds for Lee's model. Lee's model, however, is subsumed by more general later works like \cite{das2003nonparametric}.
    \subsubsection{Ahn--Powell's Sample-Selection Model}
	       Heckman's model achieves identification by imposing  highly specific parametric structure on the selection mechanism---notably, {the selection probability $\Pr\inparen{d=1\mid x}$ is assumed to follow a probit form applied to a linear transformation of $x$}. While this strong structure yields tractability, there are settings where this assumption fails, such as when complex filters govern selection (\eg{}, Malmquist bias in astronomy \cite{teerikorpi1997malmquist}). Therefore, it is natural to ask whether we can generalize the selection mechanism to other parametric functions and perhaps even semiparametric/nonparametric functions. A large body of work has since come about that attempts to tackle precisely this; we have already discussed Lee's work \cite{lee1983genecon} from this body. There are also a number of others, including \citet{newey2003instrumental,honore1997pairwise,chernozhukov2025selection}. 
        
        One of the most significant generalized models was proposed by \citet{Ahn1993semiparametric}. In a surprising result, Ahn and Powell showed that the coefficients of a linear outcome equation can be identified {without parametrically specifying the selection propensity}, subject to mild regularity assumptions. Their key assumption is that {the conditional mean of the latent outcome error among selected observations is an unknown smooth function of the marginal propensity score $p(x)\coloneqq\Pr\inparen{d=1\mid x}$}. This assumption is commonly called an ``index restriction.'' Intuitively, this restriction {requires selection-induced biases to be ``summarized'' by this scalar propensity score}. This assumption is a common identifiability condition in the literature (\eg{}, \citet{das2003nonparametric}). 
        \begin{definition}[Ahn--Powell's Model]\label{def:ahn_powell}
            Ahn--Powell's model consists of an outcome equation and {an index restriction}. Let  $x \in \R^d$ denote the set of {covariates}, and $w \in \R^d$ denote the target parameter. {Unlike \Cref{def:heckman}, the outcome noise and selection mechanism are nonparametric}. Let ${p(x)} = \Pr(d=1|x)$ be the selection probability and let $g(\cdot)$ be an unknown smooth function of the selection probability. The outcome equation is
            \[
            y  =  w^\top x + {\xi}\,,\quad \xi\sim\cDxi\,,
            \] 
            and the {index restriction} is
            \[ 
                {\E[\xi\mid x,d=1]=g\!\inparen{p(x)}\,.}
            \]
        The outcome $(x,y)$ is observed if and only if $d=1$; otherwise, $(x,\emptyset)$ is observed when $d=0$.
        \end{definition}
        {In our framework, a tuple $\theta=(f,s,\cD,\cQ)$ induces the marginal propensity score}
        \[
            {p_\theta(x)\coloneqq\Pr_\theta(D=1\mid X=x)=\Ex_{\xi\sim\cQ}\!\insquare{s\!\inparen{x,f(x)+\xi}}\,.}
        \]
        {Thus, within our framework, the index restriction couples $f$, $s$, and $\cQ$, and the corresponding Ahn--Powell instances form a joint concept class $\Theta_{\rm AP}\subseteq\hyF\times\hyS\times\hyD\times\hyQ$.} {In particular, the marginal propensity $p_\theta(x)$ need not equal the outcome-conditional selector $s(x,y)=\Pr_\theta\!\inparen{D=1\mid X=x,Y=y}$, which may depend on $y$.}
        Rather than relying on characterizing the exact selection correction term like Heckman through joint normality, Ahn and Powell pursue a fundamentally different approach to identification that yields a semi-parametric analogue of ``Heckman's two-step'' method. 
            
        In the first step, Ahn and Powell use kernel methods to estimate the conditional mean of the selection variable given the selection {covariates}. In the second step, the outcome equation is identified using a pairwise differencing approach that eliminates the unknown selection correction term. Specifically, observations with a similar conditional selection index expectation---a term that is estimated in the first step---are paired. For these observations, if the selection correction function is smooth, the selection bias is similar across both observations and hence differencing the outcomes {approximately }eliminates the selection term and allows for identification of the linear regression coefficient. This pairwise differencing strategy is a common tool used to eliminate the selection bias (\eg{}, \citet{honore1997pairwise}).

	    Identification of the target parameter $w$ in \citet{Ahn1993semiparametric} requires the following conditions:
        \begin{enumerate}
            \item \textbf{Index Restriction:} {The selected error mean satisfies $\E[\xi\mid x,d=1]=g\!\inparen{p(x)}$ for some unknown smooth function $g$}.
            \item \textbf{Non-Singularity Condition:} {The weighted cross-moment matrix of the residualized regressors and instruments is nonsingular; see \citet[Assumption~3.4]{Ahn1993semiparametric}}.
        \end{enumerate}
        {Here, the index restriction ensures} that all the selection bias is accounted for by {the ``$g$ term''}. {The non-singularity condition} ensures there is sufficient variability in $x$ after conditioning on the selection probability for the pairwise differencing approach to succeed.

    \subsubsection{Censoring on Outcomes: Tobit Model and Variants}\label{ex:outcome-only}
    Models where revealment depends only on the realized outcome date back to \citet{tobin1958LimitedDependent} (Tobit), with comprehensive econometric treatments (\eg{}, \citet{amemiya1981survey}) and recent learning-theoretic analyses for truncated regression \citep{daskalakis2019computationally,plevrakis2021censored}. Deterministic truncation takes
    $s^\star(x,y)=\mathds{1}\!\insquare{y\in[a,b]}$, while probit/logit censoring uses $s^\star(x,y)=\Phi\!\inparen{a_0+a_1 y}$ or $s^\star(x,y)=\sigma\!\inparen{a_0+a_1 y}$.
    More generally, given a survival set $S^\star$ within which the outcomes are observed, we obtain the selection function $\sstar(x,y)\coloneqq \mathds{1}\!\inbrace{y\in S^\star}$.
    The Tobit censored regression model, for instance, uses threshold functions $s(x,y) = \mathds{1}\inbrace{y \geq Y_L}$ or $s(x,y) = \mathds{1}\inbrace{y \leq Y_G}$, depending on whether the data is left- or right-censored (or both). While usually the ``Tobit'' model refers to this censored regression model, \citet{amemiya1985advanced} in his book on Advanced Econometrics categorizes variations of this model into several categories which also capture Heckman's model (type II and III). 
    %

   \subsection{Related Work} \label{sec:related-work} \label{sec:further-related-work}
    As we discussed, regression with sample selection bias is a general problem that arises across many scientific disciplines. Here we do not attempt a comprehensive survey. Instead, we briefly discuss several related settings that are most closely connected to our formulation and highlight some interesting directions for future work. For broader background on sample selection and correction methods, we refer the reader to survey treatments such as \citet{vella1998survey,puhani2000heckmanCritique}, and \citet{heckman1990varieties}, as well as textbook discussions in, \eg{}, \citet{maddala1983limited}. 
    For a complementary perspective from the missing-data literature, see \citet{little2019statistical}. 
    We include more detailed descriptions of key models at the points in the paper where they are directly used for comparison.

\paragraph{Heckman's Model and Follow-up Work.}
    \citet{heckman1979sample} introduced the canonical regression-with-selection model and a two-step correction procedure under strong parametric assumptions on the selection equation and the joint noise distribution. A large follow-up literature relaxes these assumptions in different directions (\eg{}, \citet{lee1982selectivity,Ahn1993semiparametric,das2003nonparametric}; see \cref{sec:illustrations} and the survey by \citet{vella1998survey}). 
    Despite this long line of work, a characterization of when the regression function is identifiable was not known. 
	    Our work fills this gap with a necessary and sufficient condition (\cref{cond:identifiability:f}) that we show meaningfully extends beyond existing sufficient conditions (\cref{sec:app}).
    Separately, recent work studies problems different from regression under selection bias, such as online learning  \citep{singhvi2025online} and multi-armed bandits \citep{shin2021bandit}. 
    It would be interesting to see if our techniques have implications for these settings.

\paragraph{Regression with Censored and Truncated Data.}
    Regression with censored outcomes is a special case of our model in which the covariates are always observed and the selector depends only on the outcome (often through a known censoring rule, such as a detection limit); this setting has a long history in statistics (see, \eg{}, the monograph of \citet{Cohen91}). In our framework, such models correspond to selectors of the form $s(x,y)=s(y)$. 
    Our characterization applies to this setting and, more generally, determines precisely when the regression function is identifiable even when the censoring rule is unknown or more complex (\eg{}, it depends jointly on $(x,y)$).
    A closely related but harder problem is \emph{truncation}, where non-selected samples are entirely unobserved (so when $D=0$ one observes neither $X$ nor $Y$). This problem has recently received significant attention in theoretical computer science, where the focus is on developing computationally efficient algorithms for truncated estimation under various structural assumptions \citep{daskalakis2018efficient,daskalakis2019computationally,daskalakis2020truncated,ilyas2020theoretical,lee2024unknown,lee2026smooth,kouridakis2026regression}. In most of this literature, identifiability is straightforward because the selection rule is known or depends only on $y$; accordingly, the emphasis is on sample complexity and computational efficiency rather than on identification. When selection depends on both $x$ and $y$, however, identifiability itself becomes non-trivial. Our results provide a necessary and sufficient condition for identifiability in this more general setting, and it is an interesting direction to combine our characterization with recent algorithmic techniques to obtain computationally efficient estimators.

\paragraph{Identification across Econometrics, Computer Science, and Statistics.}
    More generally, our characterization contributes to the broader identification literature that asks when a target quantity of interest can be recovered from partially observed data. In econometrics, closely related questions arise in auction models, where one seeks to identify value distributions or demand primitives from bids and limited observables \citep{athey2002identification,cherapanamjeri2022auctionEstimation}, and in self-selection models related to \citet{roy1951earnings}, where selection into sectors distorts observed outcome distributions \citep{manski1990nonparametric,lee_park_2023,cherapanamjeri2023selfselection}. In computer science, analogous partial-observability phenomena appear in language identification from positive data, where only ``accepted'' examples are observed (and no explicit negatives), creating a selection effect that makes the problem challenging \citep{gold1967language,angluin1980inductive,charikar2024facets,kleinberg2025density,kalavasis2024characterizations}. 
    In statistics, foundational work on identifiability in mixture and latent-variable models similarly studies when different parameterizations can generate the same observed distribution \citep{teicher1963identifiability,everitt2013finite}. Our contribution to this literature is a necessary-and-sufficient identification condition for regression under outcome-dependent sample selection. 

\section{Preliminaries }
    \label{sec:preliminaries}

    In this section, we introduce basic notation and conventions used throughout the paper.
    
    \vspace{0.5mm}
    
    \indent \emph{Distributions.} We use calligraphic letters (\eg{}, $\cD$ and $\cP$) to denote distributions. 
    When a distribution is continuous (respectively, discrete), we assume it is specified by a density (respectively, a probability mass function).
    We write $\cD(x)$ for the density or mass of $\cD$ at $x$. 
    Unless otherwise specified, \(\nu\) denotes the Lebesgue measure on \(\R\), and densities over \(\R\) are taken with respect to \(\nu\). We write a.s. to mean almost surely with respect to the underlying probability measure. For a distribution $\cD$, we write $\cD$-a.e. to mean almost everywhere with respect to $\cD$. When the underlying measure is clear from context, we simply write a.e. 
    Given a set $S$, we overload notation, and use $\cD(S)$ for the mass that $\cD$ assigns to $S$.
    The support of a distribution $\cD$ is denoted by $\supp(\cD)$ (\ie{}, $\supp(\cD)\coloneqq \inbrace{z\colon \cD(z)>0}$).
    We use $\normal{\mu}{\Sigma}$ to denote the Gaussian distribution with mean $\mu \in \R^d$ and covariance $\Sigma \in \R^{d \times d}$. Finally, we use $\tv{\cP}{\cQ}$ to denote the total variation distance between $\cP$ and $\cQ$, \ie{}, $\tv{\cP}{\cQ}=\frac{1}{2}\int_z \abs{\cP(z)-\cQ(z)}\d z$.

    \vspace{0.5mm}

    \indent \emph{Random Variables.}
    We use capital letters to denote random variables and reserve $\xi, \xi'$ (and variants) to denote noise variables (also called disturbances or shocks). 
    For an event $E$, we denote its indicator random variable by $\mathds{1}\{E\}$.
    We write $X \bot Y$ to denote that $X$ and $Y$ are independent.

    \vspace{0.5mm}
    
    \indent \emph{Concept Classes.}
    We use script letters (\eg{}, $\hyD, \hyS,$ and $\hyQ$) to denote concept classes, \ie{}, sets of objects such as distributions or functions. Concept classes are widely used in learning theory to model families of distributions or functions; for example, Gaussian mixture models (GMMs) are a common concept class of study that richly captures many empirical settings. On the functional side, generalized linear models (GLMs) are a common example. In general, concept classes can also be much more complicated, \eg{}, all distributions representable by diffusion models of a specific architecture and regression functions representable by, \eg{}, two-layer neural networks. Finally, we use a superscript $\star$ to denote the ground-truth distribution or function generating the data; for example, $\fstar$ and $\sstar$ denote the true regression and selection functions, respectively.  

    \vspace{0.5mm}
    
    \indent \emph{Vectors and Matrices.}
    {For a vector $v \in \R^d$, we use $\|v\|_2$ to denote its $\ell_2$-norm and $\|v\|_\infty$ its $\ell_\infty$-norm. For vectors $v,w \in \R^d$, we write $\inangle{v,w}$ or $v^\top w$ for their inner product $\sum_i v_i w_i$.}

\section{Model of Regression with Sample-Selection Bias }  \label{sec:model}
 
In this section, we formally present the model of regression with sample-selection bias. 
Let $\cX$ be the feature or covariate space; typically, $\cX=\R^d$.
We begin with the standard (non-linear) regression model:
\[
    Y = \fstar(X) + \xi\,.
    \yesnum\label{eq:outcome-model}
\]
Here $\fstar$ is an unknown function that is only known to belong to a set or concept class $\hyF$ and $\xi$ is a random variable corresponding to noise. 
$\xi$ has distribution $\cDxi$ that is possibly unknown.
For instance, if $\hyF$ is the class of linear functions, \ie{}, 
\[
    \hyF = \hyFlin \coloneqq \inbrace{x\to w^\top x\colon w\in \R^d}\,,
    \yesnum\label{eq:def:linear-functions}
\]
and $\cDxi$ is the standard normal $\normal{0}{1}$, then we recover the standard linear regression model.

In the standard regression model (without sample-selection bias), we see samples of the form $(x,y)$ where $x$ is drawn from some distribution $\cDX$ and $y$ is as specified in \cref{eq:outcome-model}. 
The goal is to estimate the function $\fstar$ using the samples.
Due to selection bias, however, the outcomes $y$ in some of the samples are hidden: in particular, for the sample $(x,y)$ the outcome $y$ is observed with probability $\sstar(x,y)$ and the outcome is otherwise hidden and only the covariate $x$ is observed.
(Formally, nature draws a random variable $D\in \zo$ {with $\Pr[D=1]=\sstar(x,y)$ and $y$ is observed if $D=1$.)}

\paragraph{Family of Selection Functions.}
Initial works on regression with sample selection bias assumed that the \textit{selection-function} $\sstar$ is known or has a known parametric structure.
For instance, \citet{heckman1979sample} studied the standard linear regression (\ie{}, $\hyF=\hyFlin$ and $\cDxi=\normal{0}{1}$) when the selection-function $\sstar$ is known to be roughly of the form $\sstar(x,y) = \Phi{\inparen{x^\top v + \rho\inparen{y-\inangle{w,x}}} }$ where $\Phi(\cdot)$ is the cumulative distribution function of the normal distribution and $(\rho,v,w)$ are unknown parameters (see \cref{def:heckman} for the specific form of $\sstar$).
This was later generalized by \citet{lee1982selectivity} to selection functions of the form $\sstar(x,y) = J{\inparen{x^\top v + \rho\inparen{y-\inangle{w,x}}} }$ for a known, invertible transformation function $J(\cdot)$ that can be different from $\Phi(\cdot)$.
These modeling choices and many others on the selection function can be cast in the language of concept classes from statistical learning theory (see \cref{sec:examples}):
we assume that there is a \textit{known} set or concept class $\hyS$ containing $\sstar$, \ie{},
\[
    \phantom{.}\qquad\qquad
    \sstar\in \hyS\,.\hspace{-15mm} 
    \tag{Concept Class for Selection Function}
\]
We stress that this formulation is quite general and allows the selection function to have an unknown parametric form.
For instance, $\hyS$ can be the class of all functions corresponding to any $k$-layered neural networks with sigmoid activation for some value of $k$, or the class of functions corresponding to a decision tree of a specific depth.
It also captures the aforementioned models of \citet{heckman1979sample} and \citet{lee1982selectivity} and extends beyond them as we illustrate in \cref{sec:examples}. 

\paragraph{Family of Distributions.}
We allow similar flexibility on the distributions $(\cDX,\cDxi)$ of features and noise\amedit{,} respectively: there are \textit{known} concept classes \mbox{$(\hyX,\hyQ)$ containing $(\cDX, \cDxi)$\amedit{,} respectively, \ie{},}
\[
    \phantom{.}\qquad\qquad
    \cDX\in \hyDX
    \quadand
    \cDxi\in \hyDxi \,.\hspace{-15mm}
    \tag{Concept Classes for Distributions}
\]
As with concept classes of functions (\eg{}, $\hyS$), concept classes of distributions can also capture many interesting families. For instance, such a concept class can be the family of Gaussian mixture models (GMMs) or the family of all distributions representable by a specific diffusion model family.
It is perhaps clear that versatility in the modeling of the feature distribution is relevant. 
Versatility in the modeling of the distribution of noise, extending beyond the standard normal distribution\amedit{,} is also important:

For instance, letting $\hyQ=\sinbrace{\normal{0}{\sigma^2}\colon \sigma>0}$ allows us to model noise of unknown variance.
Further, allowing $\hyQ$ to include heavy-tailed distributions is essential in several settings (\eg{}, financial econometrics) \citep{mandelbrot1963speculative,pickland1976pareto,luttmer2007pareto}, and allowing distributions supported on $\mathbb{R}_{\geq 0}$ is natural for many scenarios such as for distributions of inefficiency in stochastic frontier models \citep{aigner1977stochasticFrontier} and for nonnegative quantities like utilization or durations. %
\begin{remark}[Heteroskedasticity]
    While the above framework is general, it does not capture \textit{heteroscedasticity,} since the distribution $\cDxi$ is independent of the feature $x$.
    The framework and our proofs can extend to heteroscedastic settings, which would lead to analogous but more complex characterizations. For simplicity, we only study the homoscedastic case.
\end{remark}

\subsubsection*{Formal Model and Main Questions}
We are now ready to state our formal model: 

\begin{definition}[Sample-Selection Model]\label{def:selection_bias_model}
    Fix a measurable feature space \(\cX\), concept classes $(\hyF,\hyS,\hyDX,\hyQ)$ and $\fstar\in \hyF$, $\sstar\in \hyS$, $\cDX\in \hyDX$, and $\cDxi\in \hyQ$. Assume that every \(f\in\hyF\) is measurable and every \(s\in\hyS\) is jointly measurable with range in \([0,1]\).
    Each sample of the limited-dependence model defined by $(\fstar,\sstar,\cDX,\cDxi)$ is generated as follows:
    \begin{enumerate}
      \item First, nature independently draws $X\sim\cDX$ and $\xi\sim \cDxi$ and fixes the outcome $Y\coloneqq f^\star(X)+\xi$. 
      \item Then, nature draws $U\sim\mathrm{Unif}\inparen{0,1}$ independent of $(X,\xi)$, and sets $D\coloneqq \mathds{1}\!\inbrace{U  \leq s^\star\!\inparen{X,Y}}$.
      \item If $D=1$, nature reveals $(X,Y)$ to the econometrician; otherwise, it reveals only $(X,\emptyset)$.\footnote{Here, $\emptyset$ is a symbol not in $\R$ indicating that the outcome is unobserved.}
    \end{enumerate}
    Let $\cC=\cC_{\fstar,\sstar,\cDX,\cDxi}$ be the (censored) distribution of samples generated from the above model.
\end{definition}
{Having specified the possible values of the individual components, it remains to specify which parameter tuples are admissible.}
{In the standard case, any $f\in\hyF$, $s\in\hyS$, $\cD\in\hyDX$, and $\cQ\in\hyQ$ may be combined, so the parameter class is $\hyF\times\hyS\times\hyDX\times\hyQ$.}
{More generally, we let $\Theta$ be a subset of this product specifying the admissible parameter tuples.}
{This formulation captures, for example, the Ahn--Powell instances described in \cref{def:ahn_powell}, whose index restriction jointly constrains $f$, $s$, and $\cQ$.}
The following notion of realizability will be important in our discussion.
\begin{definition}[Realizability]
    A distribution $\cC$ is said to be \underline{realizable} with respect to the joint concept class $\Theta$ if it is identical to the data-distribution arising from the sample-selection model specified by some $(\fstar,\sstar,\cDX,\cQ)\in \Theta$. In particular, in the Cartesian-product case $\Theta=\hyF\times\hyS\times\hyDX\times\hyQ$, this is equivalent to requiring each component to belong to its respective concept class.
\end{definition}
The idea of generalizing Heckman's models is not new:
several authors (\eg{}, \cite{lee1982selectivity,powell1987semiparametric,Powell1986,das2003nonparametric,newey2009series}) have considered models with fewer restrictions on the selection function $\sstar$ and noise distribution $\hyQ$ than in Heckman's model.
In the language of the above learning-theoretic framing, these works provide (several different)  conditions on the joint concept class $\Theta$ that are \textit{sufficient} to identify $\fstar$ from a censored distribution $\cC$ realizable with respect to $\Theta$. 
But they leave the following fundamental question open:
\begin{center}
    \textit{Are there interesting joint concept classes $\Theta$ that enable one to ``learn'' $\fstar\in \hyF$ from 
    a censored distribution $\cC$ realizable with respect to $\Theta$, but that are not captured by existing sufficient conditions?}
\end{center}
If we only strengthen existing sufficient conditions on $\Theta$, then we leave open the question of whether they can be strengthened further. 
Hence, to definitively settle the above question, we need to find necessary and sufficient conditions that enable an econometrician to learn $\fstar$ from censored data. 
To formalize this, we need to define what learning \mbox{$\fstar$ means; we consider two notions.}
\begin{problem}[Identification and Estimation]\label{problem:identification_estimation}
    Consider a censored distribution $\cC$ realizable with respect to $\Theta$.
    Let $(f,s,\cDX,\cQ)\in\Theta$ witness $\cC$'s realizability; to be explicit about this, we write $\cC=\cC_{f,s,\cDX,\cQ}$.
    Given $\Theta$ and access to \iid{} samples $(X,Y)\sim \cC$, the question is:
    \begin{enumerate}[leftmargin=15pt]
        \item \textbf{(Identification)}: What are  the minimal assumptions on $\Theta$ so that there is a deterministic mapping $\psi$ satisfying $\psi(\cC_{f,s,\cDX,\cQ}) = f$ (where the equality holds $\cDX$-a.e.) for every $(f,s,\cDX,\cQ)\in\Theta$?
        \item \textbf{(Estimation)}: What are the minimal assumptions on $\Theta$
        under which there is an algorithm that, given $n\geq 1$ \iid{} samples $S$ from $\cC=\cC_{f,s,\cDX,\cQ}$, outputs an estimate ${f}_{S}$ such that, uniformly over every $(f,s,\cDX,\cQ)\in\Theta$, the estimator error satisfies $\Ex_{S\sim \cC^n}\Ex_{X\sim \cDX}\sabs{{f}_S(X) - f(X)}\leq \eps(n)$ for some error rate $\eps(\cdot)$ satisfying $\lim_{n\to \infty}\eps(n)=0$?
    \end{enumerate}
\end{problem}
Some remarks are in order.
First, in general, $\fstar$ cannot always be identified from the censored distribution $\cC$ because there exist $(f,s,\cD,\cQ),(f',s',\cD',\cQ')\in\Theta$ with $f$ and $f'$ differing on a set of positive $\cD$-measure such that the censored distributions $\cC$ and $\cC'$ arising from $(f,s,\cD,\cQ)$ and $(f',s',\cD',\cQ')$ are identical.
 Next, we note that estimation is a harder problem than identification. 
{Indeed, suppose the same censored distribution \(\cC\) is generated by two admissible tuples with regressors \(f\) and \(f'\). Since \(X\) is always observed, the two tuples have the same covariate distribution \(\cDX\). For any estimator satisfying the estimation guarantee and every \(n\geq 1\), the triangle inequality gives
\[
    \Ex_{X\sim\cDX}\abs{f(X)-f'(X)}
    \leq
    \Ex_{S\sim\cC^n}\Ex_{X\sim\cDX}
    \left[\abs{f(X)-f_S(X)}+\abs{f_S(X)-f'(X)}\right]
    \leq 2\eps(n)\,.
\]
Letting \(n\to\infty\) shows that \(f=f'\), \(\cDX\)-a.e. Thus, estimation implies identification.}
Further, there are simple examples demonstrating that \mbox{estimation is strictly harder than identification}.{\footnote{For example, fix a covariate distribution, let \(\hyF=\{f_0,f_1\}\) with \(f_j(x)\equiv j\), let \(\hyQ=\{\normal{0}{1}\}\), and let \(\hyS=\{s_p:p\in(0,1]\}\), where \(s_p(x,y)\equiv p\). For every \(p>0\), the censored distribution identifies \(p=\Pr(D=1)\) and the conditional law \(Y\mid D=1\sim\normal{j}{1}\), and hence identifies \(f_j\). Nevertheless, uniform estimation over this class is impossible. Conditional on all \(n\) outcomes being hidden---an event of probability \((1-p)^n\) under either regressor---the observed samples have the same distribution. Hence, for any estimator, one of the two regressors has expected error at least \((1-p)^n/2\). Letting \(p\downarrow0\) for each fixed \(n\) shows that no uniform error rate can converge to zero.}}
%
\begin{remark}[Parameterization of Selection Function]
     We let the selection function $\sstar$ be a function of $x$ and $y$.
     Alternatively, we can let $\sstar$ be a function of $x$ and the noise $\xi$. 
     The two formulations are equivalent since \cref{eq:outcome-model} is a \mbox{deterministic relation connecting $y$ to $\xi$, for a specific value of $x$.}
\end{remark}

\section{Our Results }
\label{sec:identification}
\label{sec:results}
    In this section, we present our main results.
    In \cref{sec:identification:characterization}, we characterize when the joint concept class $\Theta$ enables identification of $\fstar$.
    Then, in \cref{sec:estimation}, we show that under a natural strengthening of our characterizing condition, $\fstar$ can be estimated from finite samples.

\subsection{Characterization of the Identifiability of $\fstar$}
    \label{sec:identification:characterization}
    We first present a condition on the joint concept class $\Theta$ that would be useful for understanding when regression is possible with sample selection bias.
    \vspace{-4mm}
    \begin{mdframed}[nobreak=true]
    \vspace{-3mm}{
        \begin{condition}[Identifiability condition]\label{cond:identifiability:f}
            {Recall that \(\nu\) denotes the Lebesgue measure on \(\R\), and suppose the noise component of every tuple in $\Theta$ admits a density with respect to \(\nu\).} For every \(\cDX\in\hyDX\), every \(f,f'\in\hyF\) such that
            \[
                \cDX(\{x\in\cX:f(x)\neq f'(x)\})>0\,,
            \]
            every \(s,s'\in\hyS\), and every \(\cQ,\cQ'\in\hyQ\) for which $(f,s,\cDX,\cQ),(f',s',\cDX,\cQ')\in\Theta$, the corresponding observed kernels
            \[
                (x,y) \mapsto s(x,y)\cdot \cQ(y-f(x))
            \quadand
                (x,y) \mapsto s'(x,y)\cdot \cQ'(y-f'(x))
            \]
            are not equal \((\cDX\otimes \nu)\)-a.e. Equivalently, the two must differ on a set of positive $(\cDX\otimes\nu)$-measure.
        \end{condition}
        }
    \end{mdframed}
    To gain some intuition about \cref{cond:identifiability:f}, consider two instances of the problem corresponding to the parameter tuples $\theta=(f,s,\cD,\cQ)$ and $\theta'=(f',s',\cD',\cQ')$ in $\Theta$.
    Assume that the data is generated by a tuple $(\fstar,\sstar,\cDX,\cDxi)$ which corresponds to one of these two tuples.
    Given the censored distribution $\cC^\star$, we want to identify $\fstar$, {up to \(\cDX\)-a.e. equivalence.}
    {If \(f=f'\) \(\cDX\)-a.e., then the two candidate regressors are indistinguishable for our purposes, and identifying either one identifies \(\fstar\) up to \(\cDX\)-a.e. equivalence. 
    Thus, the substantive case is when \(f\) and \(f'\) differ on a set of positive \(\cDX\)-measure.}
    Now, suppose that $\cD\neq \cD'$ {as probability measures}.
    Since there is no censoring on $x$,\footnote{Note that the censored distribution $\cC^\star$ has \iid{} samples $x\sim \cDX$; see \cref{def:selection_bias_model}.} we can identify $\cDX$ from the data.
    {
    Therefore, if \(\cD\neq \cD'\), we can distinguish the two tuples by checking which one has \(X\)-marginal equal to \(\cDX\). 
    }
    For the sake of proceeding with the example, {suppose instead that \(\cD=\cD'\) as probability measures.}
    {When \(\cD=\cD'\), the restrictions of the two censored laws to uncensored outcomes are finite subprobability measures on \(\cX\times\R\). They are absolutely continuous with respect to \(\cD\otimes\nu\), with Radon--Nikodym densities \(s(x,y)\cQ(y-f(x))\) and \(s'(x,y)\cQ'(y-f'(x))\), respectively. Thus, \cref{cond:identifiability:f} rules out equality of the censored laws whenever \(f\) and \(f'\) differ on a set of positive \(\cD\)-measure.}

    If \cref{cond:identifiability:f} holds, then one can identify $\fstar$ from the censored data by following {the identification strategy sketched above}.
    Thus, \cref{cond:identifiability:f} opens a pathway for identification of $\fstar$---it is a sufficient condition for identification.
    Our first result shows that \cref{cond:identifiability:f} captures all methods for identifying $\fstar$---it is also a necessary condition.
    \begin{theorem}[Characterization]
    \label{thm:characterization:f}
        Fix a joint concept class $\Theta\subseteq\hyF\times\hyS\times\hyDX\times\hyQ$. Suppose that the noise component of every tuple in $\Theta$ has a density {with respect to \(\nu\)}.
        Then, $\Theta$ enables identification of $\fstar$ {(up to \(\cDX\)-a.e. equivalence)} from censored data (\cref{problem:identification_estimation}) if and only if $\Theta$ satisfies \cref{cond:identifiability:f}.
    \end{theorem}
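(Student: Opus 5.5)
The plan is to prove both directions by relating the censored law $\cC_\theta$ of a tuple $\theta=(f,s,\cD,\cQ)\in\Theta$ to the pair consisting of its covariate marginal $\cD$ and its observed kernel $k_\theta(x,y)\coloneqq s(x,y)\,\cQ(y-f(x))$. The central lemma I would establish first is that, for $\theta,\theta'\in\Theta$, we have $\cC_\theta=\cC_{\theta'}$ if and only if $\cD=\cD'$ and $k_\theta=k_{\theta'}$ $(\cD\otimes\nu)$-a.e. To prove it, I would write $\cC_\theta$ as a measure on $\cX\times(\R\cup\{\emptyset\})$. Its restriction to $\cX\times\R$ is the subprobability measure $A\mapsto \Pr[(X,Y)\in A, D=1]$. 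By Fubini and the change of variables $y=f(x)+\xi$, this equals $\int_A k_\theta(x,y)\,\d(\cD\otimes\nu)(x,y)$. Its restriction to $\cX\times\{\emptyset\}$ is $B\mapsto\int_B\bigl(1-p_\theta(x)\bigr)\,\d\cD(x)$, where $p_\theta(x)=\int k_\theta(x,y)\,\d y$. The $X$-marginal of $\cC_\theta$ is $\cD$. These facts give both directions of the lemma at once. Equality of the laws forces the marginals to agree, and then uniqueness of Radon--Nikodym densities on $\cX\times\R$ forces the kernels to agree a.e. Conversely, equal $\cD$ and a.e.-equal kernels give equal uncensored parts. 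They also give equal $p_\theta$ for $\cD$-a.e.\ $x$, by Fubini applied to $|k_\theta-k_{\theta'}|$, and hence equal censored parts.

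\emph{Sufficiency.} Assume \cref{cond:identifiability:f}. I would define $\psi$ on the set of realizable laws as follows. Given $\cC$, pick any $\theta=(f,s,\cD,\cQ)\in\Theta$ with $\cC_\theta=\cC$, and set $\psi(\cC)\coloneqq f$; the choice can be made by the axiom of choice, since only existence of a mapping is required. To check correctness, suppose $\cC=\cC_{\theta^\star}$ for $\theta^\star=(\fstar,\sstar,\cDX,\cDxi)$. By the lemma, the chosen $\theta$ has $\cD=\cDX$ and $k_\theta=k_{\theta^\star}$ a.e. If $f$ and $\fstar$ differed on a set of positive $\cDX$-measure, the tuples $\theta,\theta^\star$ would violate \cref{cond:identifiability:f}. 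Hence $\psi(\cC)=\fstar$ $\cDX$-a.e.

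\emph{Necessity.} I would argue by contrapositive. If the condition fails, there exist $(f,s,\cD,\cQ),(f',s',\cD,\cQ')\in\Theta$ with $f\neq f'$ on a set of positive $\cD$-measure and equal kernels a.e. By the lemma, their censored laws coincide. Any $\psi$ would then have to satisfy $\psi(\cC)=f$ and $\psi(\cC)=f'$ $\cD$-a.e., which is a contradiction.

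The main obstacle is the measure-theoretic bookkeeping in the lemma. One point is justifying that the uncensored part has density $k_\theta$ with respect to $\cD\otimes\nu$, which uses joint measurability of $s$, measurability of $f$, and the translation invariance of $\nu$. The other is deducing $\cD$-a.e.\ equality of the hidden-outcome masses from a.e.\ equality of the kernels. Both are routine applications of Tonelli and Fubini, and I would write them out once carefully.
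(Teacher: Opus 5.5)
Your proposal is correct and follows essentially the same route as the paper: both directions reduce equality of censored laws to equality of the covariate marginal together with $(\cD\otimes\nu)$-a.e.\ equality of the observed kernels $s(x,y)\cQ(y-f(x))$, using uniqueness of Radon--Nikodym densities on $\cX\times\R$ and Fubini for the hidden-outcome masses. The differences are only organizational: you package this as a single two-way lemma, and you construct $\psi$ explicitly by choice, a step the paper leaves implicit when it negates identifiability.
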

    To the best of our knowledge, {this} is the first characterization of when regression is possible under sample selection{bias}.
    This characterization adds to the classic, and still growing, list of identifiability results in {econometrics} (\eg{}, \cite{manski1990nonparametric,athey2002identification,gentry_li_2014,cherapanamjeri2022auctionEstimation,lee_park_2023}), {computer science} (\eg{}, \citep{angluin1980inductive,cherapanamjeri2023selfselection,charikar2024facets,kleinberg2025density,kalavasis2024characterizations}), and {statistics} (\eg{}, \citet{teicher1963identifiability,everitt2013finite}).
    
    For \cref{cond:identifiability:f} to be useful, given all the existing sufficient conditions, it needs to capture interesting examples which are not captured by existing conditions or their natural generalizations.
    {In \cref{sec:app}, we revisit several classic scenarios of sample selection bias and show that \cref{cond:identifiability:f} enables identification in more general cases.}
    {For instance, there are scenarios where $\fstar$ is identifiable even though $\sstar$ is not, so existing methods (most of which first identify $\sstar$) necessarily fail.}
    {We also explore two related models: (1) auctions with selective entry \citep{samuelson1985entry}, where bidders pay a cost before entering, and (2) an extension of \citet{roy1951earnings}'s model due to \citet{heckman_sedlacek_1985} to better capture US wage distributions.}
    {Interestingly, despite extensive study of these models (\eg{}, \citep{mcafee1987entry,levin1994entry,gentry_li_2014,gentry2017selective,lee_park_2023}), a characterization of identification in them was not known; as a corollary of \cref{thm:characterization:f}, we provide one (\cref{sec:app:otherModels}).}

    In the remainder of this section, we prove \cref{thm:characterization:f}, and in the next section, we show how to obtain {finite-sample} complexity guarantees for estimating $\fstar$ under a natural strengthening of \cref{cond:identifiability:f}.
    {Finally, we focus on the identifiability of $\fstar$. In some contexts, identifying $\sstar$ is of independent interest (beyond using it to identify $\fstar$), and here we can use natural variations of our techniques to characterize identifiability of $\sstar$; we show this in \cref{sec:characterization:s}.}

    \begin{remark}[Comparison to Existing Conditions]
        Since \cref{cond:identifiability:f} characterizes when $\fstar$ is identifiable, it must be weaker than any sufficient conditions proposed in prior work. As a sanity check, we verify this in \cref{sec:examples}. Further, the characterization enables more: after verifying \cref{cond:identifiability:f} in earlier models, we can immediately identify natural extensions that preserve identifiability, giving a more principled understanding of why these models are identifiable. {See \cref{sec:app} for further examples.}
    \end{remark}
    \vspace{-7mm}
\begin{proof}[Proof of \cref{thm:characterization:f}]
For any quadruple $(f,s,\cD,\cQ)\in\Theta$, let $\cC_{f,s,\cD,\cQ}$ denote the distribution of {the }{observed }sample $(X,{\widetilde Y})$ generated by the {sample-selection} model (\cref{def:selection_bias_model}) specified by $(f,s,\cD,\cQ)$.
Here, $D=\mathds{1}\!\inbrace{{\widetilde Y\neq\emptyset}}$;
that is, if ${\widetilde Y}=\emptyset$, then $D=0$ and, otherwise, if ${\widetilde Y}\neq \emptyset$, then $D=1$.
Let $\cQ$ also denote the density of the noise distribution $\cQ$ in this tuple (which exists by assumption). 
Conditional on $X=x$, the joint density of the observable event $\{D=1, Y\in \d y\}$ is
\[
  s(x,y)\,\cQ(y-f(x))\,.
  \yesnum\label{eq:kernel}
\]
{For every measurable \(A\subseteq\cX\) and \(B\subseteq\R\),
\begin{align*}
    \cC_{f,s,\cD,\cQ}(A\times B)
    &=
    \int_A\int_B s(x,y)\cQ(y-f(x))\,\nu(\d y)\,\cD(\d x)\,,\\
    \cC_{f,s,\cD,\cQ}(A\times\{\emptyset\})
    &=
    \int_A\left(1-\int_\R s(x,y)\cQ(y-f(x))\,\nu(\d y)\right)
    \cD(\d x)\,.
\end{align*}}
Consider any censored distribution $\cC$ realizable with respect to $\Theta$.
Let $(f,s,\cD,\cQ)\in\Theta$ witness the realizability of $\cC$.
To be explicit about this, we write $\cC=\cC_{f,s,\cD,\cQ}$.

Now, we are ready to prove the result.

\paragraph{($\Leftarrow$) \cref{cond:identifiability:f} implies identifiability.}
    Toward a contradiction, assume $\Theta$ satisfies \cref{cond:identifiability:f} but identifiability fails.
So, there is no mapping $\psi$ from distributions realizable with respect to $\Theta$ to $\hyF$ such that $\psi(\cC_{f,s,\cD,\cQ})=f$ (up to $\cD$-a.e. equivalence) for every $(f,s,\cD,\cQ)\in\Theta$.
    Therefore, there must be $(f,s,\cD,\cQ)\in\Theta$ and $(f',s',\cD',\cQ')\in\Theta$ such that
    \[
        \cC_{f,s,\cD,\cQ} = \cC_{f',s',\cD',\cQ'}
        \quadand
        \cD\!\left(\{x\in\cX:f(x)\neq f'(x)\}\right)>0\,.
    \]
    Equality of the two censored distributions implies equality of their \(X\)-marginals, and hence $\cD=\cD'$.
    Henceforth, we omit the dependence on $\cD=\cD'$ unless it is important.
    The restrictions of the two censored laws to \(\cX\times\R\) are identical.

    By \eqref{eq:kernel}, these restrictions have Radon--Nikodym densities \(s(x,y)\cQ(y-f(x))\) and \(s'(x,y)\cQ'(y-f'(x))\) with respect to the common measure \(\cD\otimes\nu\). Uniqueness of Radon--Nikodym derivatives gives
    \[
        s(x,y)\cQ(y-f(x))
        =
        s'(x,y)\cQ'(y-f'(x))
        \qquad (\cD\otimes\nu)\text{-a.e.}
        \yesnum\label{eq:kernel-equality-f}
    \]
    If \(f\) and \(f'\) are not equal \(\cD\)-a.e., then \cref{cond:identifiability:f} applied to the two admissible parameter tuples above, whose covariate distributions are equal, implies that the two sides of \eqref{eq:kernel-equality-f} differ on a set of positive \((\cD\otimes\nu)\)-measure. This contradicts \eqref{eq:kernel-equality-f}. 

\paragraph{($\Rightarrow$) Identifiability requires \cref{cond:identifiability:f}.}
    Suppose toward a contradiction that \cref{cond:identifiability:f} fails but identifiability holds.
    Since \cref{cond:identifiability:f} fails, 
    
    there exist \(f,f'\in\hyF\), \(s,s'\in\hyS\), \(\cQ,\cQ'\in\hyQ\), and \(\cD\in\hyDX\) such that $(f,s,\cD,\cQ),(f',s',\cD,\cQ')\in\Theta$ and $\cD\!\left(\{x\in\cX:f(x)\neq f'(x)\}\right)>0$ and \[ s(x,y)\,\cQ(y-f(x)) = s'\!(x,y)\,\cQ'\!(y-f'(x)) \qquad (\cD\otimes\nu)\text{-a.e.} \yesnum\label{eq:kernel-equality-f-global} \]
    
    We claim \eqref{eq:kernel-equality-f-global} implies $\cC_{f,s,\cD,\cQ}=\cC_{f',s',\cD,\cQ'}$.

    Once the claim is proved, it contradicts identifiability, since the two regressors differ on a set of positive \(\cD\)-measure. %

    It remains to prove $\cC_{f,s,\cD,\cQ}=\cC_{f',s',\cD,\cQ'}$.
    To simplify the notation, define
    \[
        \cC\coloneqq \cC_{f,s,\cD,\cQ}
        \quadand
        \cC'\coloneqq \cC_{f',s',\cD,\cQ'}\,.
    \]
We compare their \(X\)-marginals and conditional laws given \(X\).

    We have the following results:
    \begin{enumerate}[itemsep=2pt]
        \item First, since $X$ is always observed, both censored distributions have the same \(X\)-marginal
        \[
            \cC\!\inparen{
                A\times(\R\cup\{\emptyset\})
            }  = \cD\!\inparen{A}
            =\cC'\!\inparen{
                A\times(\R\cup\{\emptyset\})
	            }\,\qquad\text{for every measurable }A\subseteq\cX\,.
        \]
        \item %

        Second, consider the sub-probability kernel
        \[
            A \mapsto \cC(D=1,\widetilde Y\in A\mid X=x)
        \]
        on \(\R\). Under \(\cC\), this kernel has density
        \(s(x,y)\cQ(y-f(x))\) with respect to \(\nu\), while under \(\cC'\) it has
        density \(s'\!(x,y)\cQ'\!(y-f'(x))\). By
        \eqref{eq:kernel-equality-f-global}, these densities agree \(\nu\)-a.e. for
        \(\cD\)-a.e. \(x\), so the corresponding sub-probability kernels agree for
        \(\cD\)-a.e. \(x\).

        \item 
        Third, both distributions put zero mass on events of the form \(\{D=0,\widetilde Y\in B\}\) for measurable \(B\subseteq\R\), since \(D=0\) implies \(\widetilde Y=\emptyset\).
        
        \item %

        Finally, it remains to compare the censoring probabilities. For \(\cD\)-a.e. \(x\),
        \begin{align*}
            \cC(D=0,\widetilde Y=\emptyset\mid X=x)
	            &= 1-\int s(x,y)\cQ(y-f(x))\,\nu(\d y)\,,\\
            \cC'(D=0,\widetilde Y=\emptyset\mid X=x)
	            &= 1-\int s'\!(x,y)\cQ'\!(y-f'(x))\,\nu(\d y)\,.
        \end{align*}
        By \eqref{eq:kernel-equality-f-global}, the integrands are equal \(\nu\)-a.e. for \(\cD\)-a.e. \(x\), so these censoring probabilities are equal for \(\cD\)-a.e. \(x\). Thus, the \(X\)-marginals agree and the conditional laws of \((D,\widetilde Y)\) given \(X=x\) agree for \(\cD\)-a.e. \(x\).

    \end{enumerate}
    Hence, $\cC=\cC'$.
    This contradicts identifiability up to \(\cD\)-a.e. equivalence and proves the result.

\end{proof}
\vspace{-3mm}

\subsection{Estimation of $\fstar$ from Finite Samples}
    \label{sec:estimation}
    In this section, we explore when $\fstar$ can be estimated from a finite number of \iid{} samples from the censored distribution.
    Recall from \cref{problem:identification_estimation} that here we want to find an estimate $\hat{f}$ that is $\epsilon$-close to $\fstar$ in the following sense:
    \begin{definition}[$\eps$-separated regressors]\label{def:eps_separated_regressors}
        Two regressors $f,f'\in\hyF$ are $\eps$-separated (respectively, $\eps$-close) with respect to $\cD$ if $\Ex_{X\sim \cD}\abs{f(X)-f'(X)}\geq \eps$ (respectively, $\Ex_{X\sim \cD}\abs{f(X)-f'(X)}\leq \eps$).
    \end{definition}
    Now, a natural strengthening of our identification condition requires a \emph{quantitative} separation: whenever two admissible parameter tuples $\theta=(f,s,\cD,\cQ)$ and $\theta'=(f',s',\cD',\cQ')$ have regressors $f$ and $f'$ that are $\eps$-separated with respect to $\cD$, either their covariate distributions or the corresponding observed kernels from \cref{cond:identifiability:f} must be separated in an integrated sense.
    We formalize this via the following discrepancy:
    \begin{definition}[Discrepancy]\label{def:observable_discrepancy}
    Given a distribution $\cD$ and $\pi{=}(f,s,\cQ)$ and $\pi'{=}(f',s',\cQ')$, define
        \[
                d_{\cD}\!\inparen{\pi,\pi'}
                \coloneqq
                \iint \abs{
                    s(x,y)\,\cQ\!\inparen{y-f(x)}
                    -
                    s'\!(x,y)\,\cQ'\!\inparen{y-f'(x)}
                }\,\d\cD(x)\,\d y\,.
        \]
    \end{definition}
    This brings us to the following strengthening of our identification condition. 
    %
    \begin{mdframed}
    \vspace{-3mm}
        \begin{condition}[Estimation condition]\label{cond:estimation:f}
            A joint concept class $\Theta\subseteq\hyF\times\hyS\times\hyDX\times\hyQ$ is said to satisfy the estimation condition with rate function $R\colon \R_{\geq 0}\to (0,1)$ if, for every ordered pair of tuples $\theta=(f,s,\cD,\cQ)\in\Theta$, $\theta'=(f',s',\cD',\cQ')\in\Theta$, and every $\eps\geq 0$, writing $\pi_\theta=(f,s,\cQ)$ and $\pi_{\theta'}=(f',s',\cQ')$,
            \[
                \text{if}\quad\Ex\nolimits_{X\sim \cD}\abs{f(X)-f'(X)} \geq \eps\,,
                \quadtext{then,}
                \tv{\cD}{\cD'}+d_{\cD}\!\inparen{\pi_\theta,\pi_{\theta'}} \geq R(\eps)\,.
            \]
        \end{condition}
    \end{mdframed} 
    Hence, \cref{cond:estimation:f} can be read as a quantitative version of \cref{cond:identifiability:f}.
    The two terms capture complementary sources of separation: $\tv{\cD}{\cD'}$ measures a difference in the always-observed covariate distribution, while $d_{\cD}\!\inparen{\pi_\theta,\pi_{\theta'}}$ measures the remaining difference between the observed-outcome kernels when the covariate distributions are close.
    \cref{cond:estimation:f} requires the following: as the regressors $f$ and $f'$ become closer in $\Ex_{X\sim \cD}\abs{f(X)-f'(X)}$, the combined discrepancy $\tv{\cD}{\cD'}+d_{\cD}\!\inparen{\pi_\theta,\pi_{\theta'}}$ is allowed to become correspondingly smaller.
    In particular, as $R(\eps)$ increases (for a fixed value of $\eps$), the induced censored distributions become further and further from each other. 
    In contrast, \cref{cond:identifiability:f} only requires the observed kernels to differ on a set of positive \((\cD\otimes\nu)\)-measure whenever \(f\) and \(f'\) differ on a set of positive \(\cD\)-measure. It imposes no quantitative lower bound on the size of this difference.

    \subsubsection{Technical Overview: Challenges in Estimating $\fstar$ and Our Approach}
    Write $\pi^\star\coloneqq(\fstar,\sstar,\cQ^\star)$.
    Now, a natural approach to estimating $\fstar$ is to ``find'' an admissible tuple $\theta=(f,s,\cD,\cQ)\in\Theta$ that minimizes the discrepancy $\tv{\cD}{\cD^\star}+d_{\cD^\star}\!\inparen{\pi^\star,\pi_\theta}$ with respect to the true covariate distribution $\cD^\star$.
    Indeed, if we could ensure that $\tv{\cD}{\cD^\star}+d_{\cD^\star}\!\inparen{\pi^\star,\pi_\theta}< R(\eps)$, then \cref{cond:estimation:f} would immediately imply that $f$ and $\fstar$ are $\eps$-close with respect to $\cD^\star$.
    The difficulty is that evaluating (or even approximating) the second term requires access to the unknown kernel
    $k_{\pi^\star}(x,y)=\sstar(x,y)\,\cQ^\star\!\inparen{y-\fstar(x)}$, since
    \[
        d_{\cD^\star}\!\inparen{\pi^\star,\pi_\theta}
        =
        \iint \abs{k_{\pi_\theta}(x,y)-k_{\pi^\star}(x,y)}\,\d \cD^\star(x)\,\d y\,.
    \]
    While the censored distribution $\cC^\star$ (which we have sample access to) determines $k_{\pi^\star}$ at the population level, constructing a \emph{finite-sample} estimator of this quantity would amount to learning $k_{\pi^\star}$ in $L_1$-distance.
    However, learning the kernel $k_{\pi^\star}$ requires learning the unknown product $\sstar(x,y)\cQ^\star\!\inparen{y-\fstar(x)}$, {which involves the unknown regressor $\fstar$ and, hence, seems at least} as hard as the original estimation problem (which only requires us to estimate $\fstar$).
    Therefore, directly minimizing an empirical estimate of $d_{\cD^\star}\!\inparen{\pi^\star,\pi_\theta}$ without knowing $k_{\pi^\star}$ is not useful for learning {$\fstar.$}

    Our proof avoids estimating $d_{\cD^\star}\!\inparen{\pi^\star,\pi_\theta}$ directly.
    Instead, we construct a cover of the family of potential censored distributions arising from the joint concept class $\Theta$ and we select a candidate censored distribution  $\cC_{\wh{\theta}}$ from this cover using existing finite-sample estimation procedures due to \citet{yatracos1985rates}, which guarantee small total variation error $\tv{\cC_{\wh{\theta}}}{\cC^\star}$ (\cref{thm:distrLearningFiniteSet}).
    Then, we show that such control in total variation automatically yields a bound on the target discrepancy,
    namely $\tv{\wh{\cD}}{\cD^\star}+d_{\cD^\star}\!\inparen{\pi^\star,\wh{\pi}}\leq 5\,\tv{\cC_{\wh{\theta}}}{\cC^\star}$ (\cref{prop:disc-via-tv}).
    This reduction, from bounding the two-term discrepancy to bounding $\tv{\cC_{\wh{\theta}}}{\cC^\star}$, is the key innovation in our proof.

    \subsubsection{Our Estimation Result}
    Formally, our estimation result is as follows:
    \begin{theorem}[Finite-Sample Estimation]\label{thm:estimation:f}
        Let $\Theta\subseteq\hyF\times\hyS\times\hyD\times\hyQ$ be a joint concept class, and suppose $(\hyF,\hyD,\hyQ)$ satisfy \cref{def:regularity} (regularity conditions).
        Fix $\eta\in(0,1)$.
        {Suppose} $\Theta$ satisfies \cref{cond:estimation:f} with rate function {$R$}.
	        Then there exists an estimator such that, for any $\eps\in (0,\nfrac{\eta}{2}),\delta\in(0,1)$, given $n$ samples from any censored distribution $\cC^\star$ realized by a tuple $\theta^\star=(\fstar,\sstar,\cD^\star,\cQ^\star)\in\Theta$ satisfying $\Ex[\sstar(X,Y)]\geq\eta$, the estimator outputs $\wh{f}$ satisfying, with probability at least $1-\delta$,
        \[
            \Ex\nolimits_{X\sim \cD^\star}\abs{\fstar(X)-\wh{f}(X)} \leq \eps\,.
        \]
        Moreover, it suffices to take
        \newcommand{\epsEta}{\eps}
        \[
            n  =  O\inparen{
                \frac{1}{R(\epsEta)^2}\cdot
                { 
                    \log{\frac{M(\eta\, R(\epsEta)/C)}{\delta}}
                }
            }
            \quadwhere
            M(\gamma)
            = \cover{\hyF}{\gamma}{\norm{\cdot}_{L_1(\mu_X)}}\cdot 
              \cover{\hyS}{\gamma}{\norm{\cdot}_{L_1(\mu)}}\cdot 
              \cover{\hyD}{\gamma}{\mathsf{TV}}\cdot 
              \cover{\hyQ}{\gamma}{\mathsf{TV}}\,.
        \]
        Here, $C\geq 1$ is a sufficiently large universal constant and $\cover{\hyH}{\gamma}{d}$ is the size of a $\gamma$-cover of $\hyH$ in metric $d$.\footnote{The product bound $M(\gamma)$ may be loose because it ignores restrictions coupling the components of $\Theta$. Define
        $M_\Theta^{\rm int}(\rho)\coloneqq\min\{\abs{\mathcal N}:\mathcal N\subseteq\Theta,\ \sup_{\theta\in\Theta}\inf_{\bar\theta\in\mathcal N}\tv{\cC_\theta}{\cC_{\bar\theta}}\leq\rho\}$. The same argument gives a weakly tighter bound by replacing $M(\eta R(\eps)/C)$ with $M_\Theta^{\rm int}(R(\eps)/32)$.}
    \end{theorem}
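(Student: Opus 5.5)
The plan is the cover-then-select strategy from the technical overview, in three steps: build a finite TV-cover of the censored distributions induced by $\Theta$, run a Yatracos-style selection over it, and convert the resulting total variation guarantee into the two-term discrepancy of \cref{cond:estimation:f}.

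\textbf{Step 1 (cover).} Let $\gamma=\eta R(\eps)/C$ and fix $\gamma$-covers of $\hyF$, $\hyS$, $\hyD$ and $\hyQ$ in the stated metrics. For each tuple $\bar\theta=(\bar f,\bar s,\bar\cD,\bar\cQ)$ in the product of these covers, take an admissible $\theta\in\Theta$ close to it when one exists; this gives at most $M(\gamma)$ candidates. The key stability estimate should bound $\tv{\cC_\theta}{\cC_{\bar\theta}}$ by $O(\tv{\cD}{\bar\cD}+\norm{s-\bar s}_{L_1}+\tv{\cQ}{\bar\cQ}+L\,\Ex\abs{f-\bar f})$. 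The censored law decomposes into the $X$-marginal plus the kernel $s(x,y)\cQ(y-f(x))$. I would compare the kernels by a telescoping argument:
\begin{itemize}
\item change $s$ first, controlled by the $L_1(\mu)$ cover, where $\mu$ is the reference measure from \cref{def:regularity};
\item then change $\cQ$, which costs at most $2\,\tv{\cQ}{\bar\cQ}$ pointwise in $x$ because $s\le1$;
\item then shift $f$, which needs a Lipschitz/translation-continuity property of the noise densities, $\norm{\cQ(\cdot-a)-\cQ(\cdot)}_{L_1}\le L\abs{a}$, presumably part of \cref{def:regularity}.
\end{itemize}
The censoring-probability part of the law is controlled by the same integral. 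So every $\theta^\star\in\Theta$ has a candidate $\theta_j$ with $\tv{\cC_{\theta_j}}{\cC^\star}\le R(\eps)/32$ once $C$ absorbs $L$ and the constants. The factor $\eta$ in $\gamma$ is there because comparing $L_1(\mu)$ errors of $s$ against $\cQ$-weighted integrals may need a normalization by the selection mass $\Ex[\sstar]\ge\eta$. I would handle this inside the regularity bookkeeping.

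\textbf{Step 2 (selection).} Apply the Yatracos/minimum-distance estimator (\cref{thm:distrLearningFiniteSet}) to the finite family $\{\cC_{\theta_j}\}$. With $n=O(R(\eps)^{-2}\log(M/\delta))$ samples, it returns $\wh\theta$ such that, with probability at least $1-\delta$,
\[
\tv{\cC_{\wh\theta}}{\cC^\star}\le 3\min_j\tv{\cC_{\theta_j}}{\cC^\star}+R(\eps)/32\le R(\eps)/8 .
\]

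\textbf{Step 3 (discrepancy from TV).} Show $\tv{\wh\cD}{\cD^\star}+d_{\cD^\star}(\pi^\star,\wh\pi)\le5\,\tv{\cC_{\wh\theta}}{\cC^\star}$ (\cref{prop:disc-via-tv}).
\begin{itemize}
\item The $X$-marginal is a pushforward of the censored law, so $\tv{\wh\cD}{\cD^\star}\le\tv{\cC_{\wh\theta}}{\cC^\star}$.
\item For the kernel term, write the uncensored part of $\cC_{\wh\theta}$ as $\wh k\,\d\wh\cD\,\d y$. The triangle inequality gives
\[
\iint\abs{\wh k-k^\star}\,\d\cD^\star\,\d y\le\iint\abs{\wh k\,\d\wh\cD-k^\star\,\d\cD^\star}+\iint\wh k\,\abs{\d\wh\cD-\d\cD^\star}.
\]
The first term is at most $2\,\tv{}{}$ of the censored laws. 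The second is at most $2\,\tv{\wh\cD}{\cD^\star}$, since $\int\wh k\,\d y\le1$.
\end{itemize}
Summing gives a constant of about 5. With Step 2 this is $<R(\eps)$, and the contrapositive of \cref{cond:estimation:f}, applied to $\theta=\theta^\star$ and $\theta'=\wh\theta$, gives $\Ex_{\cD^\star}\abs{\fstar-\wh f}<\eps$.

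The main obstacle is the Step 1 stability estimate, specifically the $f$-perturbation term. It needs continuity of $\cQ$ under translation and correct handling of the measure $\mu$ used in the $\hyS$ cover, and it is where the constant $C$ and the $\eta$ dependence enter. I would first try it using only translation-continuity of the noise densities, and add a truncation argument if $\mu$ is not the product $\cD\otimes\nu$. Steps 2 and 3 are routine.
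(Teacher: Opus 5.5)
Your proposal is correct and follows the paper's route. The paper builds a product-of-component cover and recenters it into $\Theta$ (\cref{thm:cover-size} and \cref{cor:internal-cover}, whose proof is your $s$/$\cQ$/$f$/$\cD$ telescoping), then applies Yatracos selection, then proves $\tv{\wh\cD}{\cD^\star}+d_{\cD^\star}(\pi^\star,\wh\pi)\le 5\,\tv{\cC_{\wh\theta}}{\cC^\star}$; your direct triangle inequality on the kernels gets the same constant 5 as the paper's argument via the hybrid law $\cC_{\cD^\star,\wh\pi}$. Your Step 1 hedges are unnecessary: \cref{def:regularity} gives $\sigma^2$-smoothness of $\cP_{\fstar,\cD^\star,\cQ^\star}$ with respect to $\mu$ (for the $s$-change) and $\sigma$-smoothness of $\cD^\star$ with respect to $\mu_X$ (for the $f$-shift), so no truncation is needed, and $\eta$ plays no real role (the paper's cover yields $\frac{7\eta}{24}\eps\le\eps$ and simply discards it, which only makes $M$ larger).
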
  
    Thus, up to standard regularity assumptions, \cref{cond:estimation:f} enables the estimation of $\fstar$ from a finite number of censored samples.
    The dependence on $R(\eps)$ reflects how strongly the censored data separates $\eps$-different regressors; as discussed above, smaller $R(\eps)$ leads to larger sample complexity.
    {The requirement of $\Ex[\sstar(X,Y)]\geq \eta$ is necessary for finite-sample estimation, and ensures that outcomes are observed on a nontrivial fraction of samples in expectation.}\footnote{{If this condition is violated, estimation can fail in a degenerate way. For example, let $\hyS$ contain selectors $s(x,y)\equiv s(x)$ that take value $1$ only on a set of measure 0 w.r.t.\ $\cD_X^\star$ and take value $0$ elsewhere. Then $\Ex[\sstar(X,Y)]=0$ and, with probability $1$, a finite sample contains no uncensored outcomes, so estimating $\fstar$ is impossible. In contrast, identification can still be possible (for instance, when $\hyF$ is the class of linear functions), since identifiers can depend on the censored density on sets of measure $0$ that are invisible to any finite sample procedure.}}
    The remaining dependence on the covering numbers in \cref{thm:estimation:f} is also expected.
    Concretely, a $\zeta$-cover is a finite subset of a class such that every element of the class is within ``distance'' $\zeta$ of at least one element in the subset.
    Similar complexity control over concept classes is necessary for regression even without selection bias \citep{alon1997scaleSensitive,neuralNetwork2009bartlett}.
    
    Next, we formally state the regularity conditions we mentioned in \cref{thm:estimation:f}.
    These requirements are routine in nonparametric estimation \citep{Tsybakov2009NonparametricEstimation,devroye2012combinatorial}.
    They ensure that small perturbations of $(f,s,\cD,\cQ)$ lead to small perturbations of the induced censored distribution (in total variation distance).
    \begin{assumption}[Regularity Conditions]
        \label{def:regularity}
        Let the concept classes $(\hyF,\hyD,\hyQ)$ satisfy the regularity conditions below with constants $(\sigma,L)=(\Omega(1),O(1))$ with respect to a {known} distribution $\mu$ over $\R^d \times \R$ (with marginals $\mu_X$ and $\mu_Y$):
        \begin{enumerate}
            \item Each $\cD \in \hyD$ is $\sigma$-smooth with respect to $\mu_X$.
            \item Each $\cQ \in \hyQ$ is $\sigma$-smooth with respect to $\mu_Y$ and is Lipschitz under translations (in $L_1$-norm) with constant $L$.
            
            \item For every $(f,\cD,\cQ)\in
            \hyF\times\hyD\times\hyQ$, let
            $\cP_{f,\cD,\cQ}$ denote the joint distribution of
            $(X,Y)$ generated by
            $X\sim\cD,
                \xi\sim\cQ,
                X\perp\xi,
                Y=f(X)+\xi.$
            Then $\cP_{f,\cD,\cQ}$ is $\sigma^2$-smooth with respect to $\mu$.
            
        \end{enumerate}
        Here, a distribution $\cP$ is said to be $\sigma$-smooth with respect to a reference distribution \(\mu_0\) if $\cP(z)\leq (\nfrac{1}{\sigma})\,\mu_0(z)$ for each $z\in \supp(\cP)$
        and a density $\cQ$ is said to be $L$-Lipschitz in $L_1$-norm if $\norm{\cQ(\cdot-z)-\cQ(\cdot-z')}_1 \leq L\abs{z-z'}$ for all $z,z'\in\R$. 
    \end{assumption}
    The smoothness assumptions prevent distributions in $\hyD$ and $\hyQ$ from concentrating too much probability mass on sets with very small $\mu_X$- or $\mu_Y$-mass.
    This is a common requirement for uniform learning guarantees, since without such control one can construct families of distributions that are statistically indistinguishable from finitely many samples (\eg{}, \citep{Tsybakov2009NonparametricEstimation}).
    The Lipschitz requirement on $\hyQ$ is similarly standard: it provides quantitative continuity of the noise family in $L_1$, which allows a finite $\zeta$-cover of $\hyQ$ to translate into a finite cover of the induced censored laws.

    \begin{proof}[Proof of \cref{thm:estimation:f}]
        Fix any censored distribution $\cC^\star=\cC_{\fstar,\sstar,\cD^\star,\cQ^\star}$ realizable with respect to $\Theta$, and write $\theta^\star=(\fstar,\sstar,\cD^\star,\cQ^\star)\in\Theta$.
        Let $\pi^\star\coloneqq(\fstar,\sstar,\cQ^\star)$.
        For any tuple $\theta=(f,s,\cD,\cQ)\in \Theta$, let $\cC_\theta$ denote the induced censored distribution under the selection-bias model (\cref{def:selection_bias_model}).
    
        \smallskip 
        \noindent\textbf{Step 1 (A finite candidate set via covers).}
        Fix $\zeta\coloneqq R(\eps)/32$.
        Applying \cref{thm:cover-size} with its accuracy parameter set to $\zeta/2$ and $L_Q=L$, using monotonicity of covering numbers, and recentering as in \cref{cor:internal-cover}, there is a finite set $\mathcal N_\zeta\subseteq \Theta$ such that, for any choice of realizable parameters $(\fstar,\sstar,\cD^\star,\cQ^\star)$, there exists
        $\tilde{\theta}=(\tilde{f},\tilde{s},\tilde{\cD},\tilde{\cQ})\in\mathcal N_\zeta$ with
        \[
            \tv{\cC_{\tilde{\theta}}}{\cC^\star}\leq \zeta
            \qquadand 
            \abs{\mathcal N_\zeta}\leq M(\eta R(\eps)/C)\,.
            \yesnum\label{eq:approx-by-cover-and-theta-size}
        \]
        Here, $M(\cdot)$ is defined in the statement of \cref{thm:estimation:f}.
        For the remainder of the proof, we keep the dependence on $\abs{\mathcal N_\zeta}$, and only substitute \eqref{eq:approx-by-cover-and-theta-size} at the end.
    
        \smallskip 
        \noindent\textbf{Step 2 (Estimator for $\fstar$).}
        We defer the details to \cref{sec:oracle}.
        We will use the following finite-class distribution selection guarantee due to \citet{yatracos1985rates}.
        \begin{theorem}[\citet{yatracos1985rates}]\label{thm:distrLearningFiniteSet}
            There is an algorithm that, given candidate distributions $\cC_1,\cC_2,\ldots,\cC_M$, a parameter $\zeta>0$, and
            $\bigl\lceil\log\inparen{3M^2/\delta}/\inparen{2\zeta^2}\bigr\rceil$ \iid{} samples from an unknown distribution $\cC^\star$,
            outputs $j\in[M]$ such that $\tv{\cC_j}{\cC^\star}
                 \leq 
                3\min\nolimits_{i\in[M]}\tv{\cC_i}{\cC^\star}  +  2\zeta,$
            with probability at least $1-\nfrac{\delta}{3}$.
        \end{theorem}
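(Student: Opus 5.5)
This is the classical Scheffé/Yatracos tournament, and I would prove it by a minimum-distance selection over the Yatracos class. For each pair $i<j$ in $[M]$, define the Yatracos set
\[
A_{ij}\coloneqq\{z:\cC_i(z)>\cC_j(z)\}\,,
\]
and let $\mathcal A\coloneqq\{A_{ij}\}$, so that $\abs{\mathcal A}\leq M^2$. Draw $n=\lceil\log(3M^2/\delta)/(2\zeta^2)\rceil$ samples and form the empirical measure $\hat\cC_n$. For each candidate, compute
\[
\Delta_i\coloneqq\max_{A\in\mathcal A}\abs{\cC_i(A)-\hat\cC_n(A)}\,,
\]
and output $j\in\argmin_i\Delta_i$. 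Since the candidates are explicitly given, each $\cC_i(A)$ is computable, at least in principle.

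\textbf{Step 1 (uniform concentration).} For each fixed $A\in\mathcal A$, Hoeffding's inequality gives
\[
\Pr\bigl[\abs{\hat\cC_n(A)-\cC^\star(A)}>\zeta\bigr]\leq 2e^{-2n\zeta^2}\leq \frac{2\delta}{3M^2}\,.
\]
A union bound over the at most $M^2/2$ unordered pairs then gives total failure probability at most $\delta/3$. Hence, with probability at least $1-\delta/3$, we have $\sup_{A\in\mathcal A}\abs{\hat\cC_n(A)-\cC^\star(A)}\leq\zeta$.

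\textbf{Step 2 (deterministic argument on this event).} Let $i^\star$ minimize $\tv{\cC_i}{\cC^\star}$. The key identity is that
\[
\tv{\cC_i}{\cC_j}=\cC_i(A_{ij})-\cC_j(A_{ij})\,.
\]
Using the triangle inequality twice,
\[
\tv{\cC_j}{\cC^\star}\leq \tv{\cC_j}{\cC_{i^\star}}+\tv{\cC_{i^\star}}{\cC^\star}
=\abs{\cC_j(A)-\cC_{i^\star}(A)}+\tv{\cC_{i^\star}}{\cC^\star}\,,
\]
where $A=A_{i^\star j}$ (or $A_{ji^\star}$, whose complement works equally well). Inserting $\hat\cC_n(A)$ inside the absolute value gives
\[
\abs{\cC_j(A)-\cC_{i^\star}(A)}\leq \Delta_j+\Delta_{i^\star}\leq 2\Delta_{i^\star}\,,
\]
where the last step uses the minimality of $\Delta_j$. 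Finally, for every $A\in\mathcal A$,
\[
\abs{\cC_{i^\star}(A)-\hat\cC_n(A)}\leq \tv{\cC_{i^\star}}{\cC^\star}+\zeta\,,
\]
so $\Delta_{i^\star}\leq\tv{\cC_{i^\star}}{\cC^\star}+\zeta$. Combining these bounds,
\[
\tv{\cC_j}{\cC^\star}\leq 3\min_i\tv{\cC_i}{\cC^\star}+2\zeta\,.
\]

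\textbf{Expected obstacle.} The only delicate point is bookkeeping. The sets $A_{ij}$ and $A_{ji}$ differ from complements of each other only on the tie set, and a complement yields the same absolute deviation, so indexing by unordered pairs is enough. Measurability of $A_{ij}$ follows from the candidates having densities with respect to a common dominating measure. In the censored setting, $\cD\otimes\nu$ plus the atom at $\emptyset$ serves as that dominating measure.
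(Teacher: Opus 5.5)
Your proof is correct. It is the classical Yatracos minimum-distance argument, which the paper invokes by citation without giving its own proof; your constants check out, including Hoeffding plus a union bound over the $\binom{M}{2}$ Yatracos sets giving failure probability at most $\delta/3$, and the deterministic step giving $3\min_i \tv{\cC_i}{\cC^\star}+2\zeta$. One small correction to your closing remark: the candidates in this paper have different covariate laws, so no single $\cD\otimes\nu$ (plus the atom) need dominate them all, but the mixture $\frac{1}{M}\sum_i \cC_i$ always serves as a common dominating measure (and the case $j=i^\star$ is trivial).
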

        Given $n$ \iid{} samples from $\cC^\star$, our estimator applies the procedure in \cref{thm:distrLearningFiniteSet} to the family $\inbrace{\cC_\theta:\theta\in\mathcal N_\zeta},$
        which we index as $\inbrace{\cC_1,\ldots,\cC_M}$ with $M=\abs{\mathcal N_\zeta}$.
        Let $\wh{\theta}\in\mathcal N_\zeta$ be the selected candidate.
        Writing $\wh{\theta}=(\wh{f},\wh{s},\wh{\cD},\wh{\cQ})$, our estimator outputs $\wh{f}$.

        \smallskip 
        \noindent\textbf{Step 3 (A key proposition and conclusion).}
        Define $\wh{\pi}\coloneqq(\wh{f},\wh{s},\wh{\cQ})$.
        The main step is the following proposition, which relates the two-term discrepancy to the total variation error of the selected candidate model.
    
        \begin{proposition}\label{prop:disc-via-tv}
            It holds that $\tv{\cD^\star}{\wh{\cD}}+d_{\cD^\star}\!\inparen{\pi^\star,\wh{\pi}}
                \leq 
                5\,\tv{\cC_{\wh{\theta}}~}{~~\cC^\star}.$
        \end{proposition}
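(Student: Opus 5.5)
Write $\tau\coloneqq\tv{\cC_{\wh\theta}}{\cC^\star}$ and $k^\star(x,y)=\sstar(x,y)\cQ^\star(y-\fstar(x))$, $\wh k(x,y)=\wh s(x,y)\wh\cQ(y-\wh f(x))$. The plan is to write both censored laws as densities with respect to a common dominating measure, and then compare them term by term. The sample space is $\cX\times(\R\cup\{\emptyset\})$. On $\cX\times\R$, the law $\cC^\star$ has density $k^\star$ with respect to $\cD^\star\otimes\nu$, by the computation in the proof of \cref{thm:characterization:f}. Likewise $\cC_{\wh\theta}$ has density $\wh k$ with respect to $\wh\cD\otimes\nu$.

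\textbf{Step 1 (marginal term).} The $X$-marginal is a measurable image of the censored law, so $\tv{\cD^\star}{\wh\cD}\le\tau$ by data processing.

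\textbf{Step 2 (kernel term).} First I bound the total variation on the uncensored part against a hybrid law. Let $\wh\cC^{\rm hyb}$ be the sub-probability measure on $\cX\times\R$ with density $\wh k$ with respect to $\cD^\star\otimes\nu$; that is, the kernel of $\wh\theta$ is kept but the covariate law is replaced by $\cD^\star$. Then
\[
d_{\cD^\star}(\pi^\star,\wh\pi)=\bigl\|\cC^\star|_{\cX\times\R}-\wh\cC^{\rm hyb}\bigr\|_{1}\le \bigl\|\cC^\star|_{\cX\times\R}-\cC_{\wh\theta}|_{\cX\times\R}\bigr\|_{1}+\bigl\|\cC_{\wh\theta}|_{\cX\times\R}-\wh\cC^{\rm hyb}\bigr\|_{1}.
\]
The first term is at most $2\tau$, since restricting to a subset cannot increase the $L_1$ norm of a signed measure. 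For the second term, both measures are obtained by integrating the same Markov sub-kernel $x\mapsto \wh k(x,\cdot)\,\d y$, which has total mass at most $1$, against $\wh\cD$ and $\cD^\star$ respectively. Its $L_1$ norm is therefore at most $\|\wh\cD-\cD^\star\|_1=2\tv{\wh\cD}{\cD^\star}\le 2\tau$, using Step 1.

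\textbf{Conclusion.} Summing the two steps gives $\tv{\cD^\star}{\wh\cD}+d_{\cD^\star}(\pi^\star,\wh\pi)\le \tau+2\tau+2\tau=5\tau$, which is exactly the claimed constant. This is reassuring that the argument is the intended one.

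\textbf{Main obstacle.} The main difficulty is measure-theoretic care in the second term of Step 2. One must justify that for $\wh\cC^{\rm hyb}$, $\|\int \wh k(x,\cdot)\,(\wh\cD-\cD^\star)(\d x)\|_1\le\|\wh\cD-\cD^\star\|_1$. I would do this by taking the Jordan decomposition of $\wh\cD-\cD^\star$ and using $\int \wh k(x,y)\,\d y\le1$ for each $x$. This in turn requires $\wh k(x,\cdot)$ to be a genuine sub-probability density for every $x$, not merely for almost every $x$, which holds because $\wh s\in[0,1]$ and $\wh\cQ$ is a density. I would also check that joint measurability of $\wh s$ makes $\wh k$ jointly measurable, so that Tonelli applies.
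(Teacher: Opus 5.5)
Your proof is correct and follows the paper's argument. Both bound the marginal term by data processing, then route the kernel term through the hybrid law that pairs $\cD^\star$ with $\wh{\theta}$'s kernel, obtaining $d_{\cD^\star}(\pi^\star,\wh{\pi})\le 4\,\tv{\cC_{\wh{\theta}}}{\cC^\star}$ with the same $1+2+2$ accounting. The one difference is the order of steps. You restrict to the uncensored part $\cX\times\R$ before the triangle inequality and bound the covariate-swap term by a Jordan-decomposition estimate. The paper instead applies the triangle inequality to the full censored laws, uses \cref{lem:tv-same-conditional} for the swap term, and then drops the null atom via the lower bound in \cref{lem:tv-vs-discrepancy}.
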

        \noindent Assuming \cref{prop:disc-via-tv}, we complete the proof using \cref{thm:distrLearningFiniteSet} and \cref{cond:estimation:f}.
        Indeed, applying \cref{thm:distrLearningFiniteSet} to the finite family $\inbrace{\cC_\theta:\theta\in\mathcal N_\zeta}$ (so $M=\abs{\mathcal N_\zeta}$) yields the following:
        for all $\delta\in(0,1)$, with probability at least $1-\delta$,
        \begin{equation}
            \tv{\cC_{\wh{\theta}}}{\cC^\star}
            ~~\leq~~
            3\min\nolimits_{\theta\in\mathcal N_\zeta}\tv{\cC_\theta}{\cC^\star}
             + 
            2\sqrt{\frac{\log\inparen{2\abs{\mathcal N_\zeta}/\delta}}{2n}}\,.
            \yesnum\label{eq:finite-selection}
        \end{equation}
        Using \eqref{eq:approx-by-cover-and-theta-size} gives, with probability at least $1-\delta$,
        \begin{equation}
            \tv{\cC_{\wh{\theta}}}{\cC^\star}
            ~~\leq~~
            3\zeta
             + 
            2\sqrt{\frac{\log\inparen{2\abs{\mathcal N_\zeta}/\delta}}{2n}}\,.
            \yesnum\label{eq:tv-bound}
        \end{equation}
        Taking 
        \[
            n  \geq  2\cdot \frac{\log\inparen{2\abs{\mathcal N_\zeta}/\delta}}{\zeta^2}\,
            \yesnum\label{eq:n-choice}
        \]
        yields the right-hand side of \eqref{eq:tv-bound} is at most $4\zeta$, and hence, with probability at least $1-\delta$, 
        \[
            \tv{\cC_{\wh{\theta}}}{\cC^\star}\leq 4\zeta=\frac{R(\eps)}{8}\,.
        \]
        Substituting this in \cref{prop:disc-via-tv} gives 
        \[
            \tv{\cD^\star}{\wh{\cD}}+d_{\cD^\star}\!\inparen{\pi^\star,\wh{\pi}}
             \leq 
            5\cdot \tv{\cC_{\wh{\theta}}}{\cC^\star}
            \leq 20\cdot \zeta
             <
            R(\eps)\,.
        \]
        Taking the contrapositive of \cref{cond:estimation:f} for the ordered pair $(\theta^\star,\wh{\theta})\in\Theta^2$ gives that $\wh{f}$ and $\fstar$ are $\eps$-close with respect to $\cD^\star$ (\ie{}, $\Ex_{X\sim \cD^\star}\sabs{\wh{f}(X)-\fstar(X)} < \eps$).
        %
        Finally, substituting \eqref{eq:approx-by-cover-and-theta-size} into our choice of $n$ yields the stated sample complexity bound.

        \smallskip
        \noindent\textbf{Step 4 (Proof of \cref{prop:disc-via-tv}).}
        First, we recall the standard expression for the conditional density of the event $\{D=1,\,Y\in \d y\}$ given $X=x$:
        for $\pi=(f,s,\cQ)$, define
        \[
            k_\pi(x,y)\coloneqq s(x,y)\,\cQ\!\inparen{y-f(x)}\,.
        \]
        For any $\cD\in\hyD$ and any $\pi,\pi'\in\hyF\times\hyS\times\hyQ$, let
        $\cC_{\cD,\pi}\coloneqq \cC_{(f,s,\cD,\cQ)}$ and $\cC_{\cD,\pi'}\coloneqq \cC_{(f',s',\cD,\cQ')}$ denote the induced censored laws.
    
        \begin{lemma}\label{lem:tv-vs-discrepancy}
            For any $\cD\in\hyD$ and $\pi,\pi'\in\hyF\times\hyS\times\hyQ$,
            \[
                \frac{1}{2}\,d_{\cD}\!\inparen{\pi,\pi'}
                 \leq 
                \tv{\cC_{\cD,\pi}}{\cC_{\cD,\pi'}}
                 \leq 
                d_{\cD}\!\inparen{\pi,\pi'}\,.
            \]
        \end{lemma}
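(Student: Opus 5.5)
The plan is to compute the total variation distance between $\cC_{\cD,\pi}$ and $\cC_{\cD,\pi'}$ exactly. These two censored laws live on $\cX\times(\R\cup\{\emptyset\})$ and share the same covariate distribution $\cD$. We split the sample space into the uncensored part $\cX\times\R$ and the censored part $\cX\times\{\emptyset\}$, and write both laws as densities with respect to the common dominating measure $(\cD\otimes\nu)+(\cD\otimes\delta_\emptyset)$. As recorded in the proof of \cref{thm:characterization:f}, the density on $\cX\times\R$ is $k_\pi(x,y)$ (respectively $k_{\pi'}(x,y)$). The density on $\cX\times\{\emptyset\}$ is $1-p_\pi(x)$, where $p_\pi(x)\coloneqq\int k_\pi(x,y)\,\d y$, and similarly $1-p_{\pi'}(x)$. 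Consequently
\[
    2\,\tv{\cC_{\cD,\pi}}{\cC_{\cD,\pi'}}
    = \iint \abs{k_\pi(x,y)-k_{\pi'}(x,y)}\,\d\cD(x)\,\d y
    + \int \abs{p_\pi(x)-p_{\pi'}(x)}\,\d\cD(x)
    = d_\cD(\pi,\pi') + \int \abs{p_\pi-p_{\pi'}}\,\d\cD\,.
\]

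For the lower bound, the second term is nonnegative. Dropping it gives $2\,\mathrm{TV}\geq d_\cD(\pi,\pi')$, which is exactly the left inequality.

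For the upper bound, we control the censored-mass term by the kernel term. For each $x$, the triangle inequality under the integral gives $\abs{p_\pi(x)-p_{\pi'}(x)}\leq\int\abs{k_\pi(x,y)-k_{\pi'}(x,y)}\,\d y$. Integrating over $\cD$ bounds the second term by $d_\cD(\pi,\pi')$. Hence $2\,\mathrm{TV}\leq 2d_\cD(\pi,\pi')$, which is the right inequality.

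The only step needing care is the first: justifying the density representation of the censored laws with respect to a single dominating measure. This requires joint measurability of $s$ and $f$, so that $k_\pi$ is measurable and Fubini applies; \cref{def:selection_bias_model} assumes both. It also requires checking that the total variation distance for measures with densities is half the $L_1$ distance of those densities. The paper's definition of $\mathrm{d}_{\mathsf{TV}}$ via densities supplies this directly. Everything else is elementary.
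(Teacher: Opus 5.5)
Your proposal is correct and follows the paper's proof. The paper also writes $2\,\mathrm{TV}$ as the kernel discrepancy $d_{\cD}(\pi,\pi')$ plus the censored-mass discrepancy, drops the latter for the lower bound, and bounds it by $d_{\cD}(\pi,\pi')$ via the triangle inequality for the upper bound. Your explicit choice of a common dominating measure only makes rigorous the density representation that the paper uses implicitly.
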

        \vspace{-7mm}
        \begin{proof}[Proof of \cref{lem:tv-vs-discrepancy}]
            Observe that 
            \begin{align*}
                2\,\tv{\cC_{\cD,\pi}}{\cC_{\cD,\pi'}}
                &=
                \int\! \abs{\cD(x)k_\pi(x,y)-\cD(x)k_{\pi'}(x,y)}~\d x\d y\\
                &\qquad\qquad
                +
                \int\!\abs{
                    \cD(x)\inparen{1-\int\!\!\! k_\pi(x,t)~\d t}
                    -
                    \cD(x)\inparen{1-\int\!\!\! k_{\pi'}(x,t)~\d t}
                }\d x\,.
            \end{align*}
            The first term equals $d_{\cD}\!\inparen{\pi,\pi'}$.
            Since the second term is non-negative, we obtain
            $2\,\tv{\cC_{\cD,\pi}}{\cC_{\cD,\pi'}}\geq d_{\cD}\!\inparen{\pi,\pi'}$.
            For the upper bound, apply the triangle inequality to the second term:
            \[
                \int \cD(x)\abs{\int \inparen{k_\pi(x,t)-k_{\pi'}(x,t)}\,\d t}\,\d x
                \le
                \int \cD(x)\int \abs{k_\pi(x,t)-k_{\pi'}(x,t)}\,\d t\,\d x
                =
                d_{\cD}\!\inparen{\pi,\pi'}\,.
            \]
            Hence $2\,\tv{\cC_{\cD,\pi}}{\cC_{\cD,\pi'}}\leq 2\,d_{\cD}\!\inparen{\pi,\pi'}$, that is,
            $\tv{\cC_{\cD,\pi}}{\cC_{\cD,\pi'}}\leq d_{\cD}\!\inparen{\pi,\pi'}$.
        \end{proof}
        Next, we compare $\cC_{\wh{\theta}}$ to the distribution that uses the \emph{same} $(\wh{f},\wh{s},\wh{\cQ})$ but the \emph{true} covariate distribution $\cD^\star$.
        This is an auxiliary distribution and need not be induced by a tuple in $\Theta$.
        By marginalization,
        \begin{equation}
            \tv{\wh{\cD}}{~~\cD^\star}
             \leq 
            \tv{\cC_{\wh{\theta}}}{~~\cC^\star}\,.
            \yesnum\label{eq:tv-marginal}
        \end{equation}
        Moreover, by the triangle inequality,
        \begin{equation}
            \tv{\cC_{\cD^\star,\wh{\pi}}~}{~~\cC^\star}
            ~\le~
            \tv{\cC_{\cD^\star,\wh{\pi}}~}{~~\cC_{\wh{\theta}}}
            ~+~
            \tv{\cC_{\wh{\theta}}}{~~\cC^\star}\,.
            \yesnum\label{eq:tv-triangle}
        \end{equation}
        The first term equals $\tv{\cD^\star}{\wh{\cD}}$ because $\cC_{\cD^\star,\wh{\pi}}$ and $\cC_{\wh{\theta}}$ have the same conditional law of $(D,Y)$ given $X$ and differ only in the $X$-marginal.
        This is formalized by \cref{lem:tv-same-conditional}, whose proof appears at the end of this section.
        \begin{lemma}[Changing only the $X$-marginal preserves TV]\label{lem:tv-same-conditional}
            Let $\cD$ and $\cD'$ be probability measures on $\cX$.
            For each $x\in\cX$, let $\cP_x$ be a probability measure on a measurable space $\cZ$.
            Define two probability measures $\cC$ and $\cC'$ on $\cX\times\cZ$ such that their densities/masses satisfy 
            \[
                \cC(x,z) \propto \cD(x)\,\cP_x(z)
                \qquadand
                \cC'(x,z) \propto \cD'(x)\,\cP_x(z)\,,
            \]
            where the proportionality constants are chosen so that $\cC$ and $\cC'$ are probability measures.
            Equivalently, $\cC$ and $\cC'$ have the same conditional law of $Z$ given $X=x$, and differ only in the marginal law of $X$.
            Then,
            \[
                \tv{\cC}{\cC'} = \tv{\cD}{\cD'}\,.
            \]
        \end{lemma}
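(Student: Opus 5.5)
The plan is to write the total variation distance as half the integral of the absolute difference of densities with respect to a common dominating measure, and then factor out the shared conditional law. The first step is to fix a $\sigma$-finite measure $\lambda$ on $\cX$ dominating both $\cD$ and $\cD'$; for instance, $\lambda=\cD+\cD'$ works. Write $\cD(x)$ and $\cD'(x)$ for the Radon--Nikodym densities with respect to $\lambda$. Since the proportionality constants in the statement must make $\cC$ and $\cC'$ probability measures, and $\int\cD(x)\cP_x(\cZ)\,\lambda(\d x)=1$ already holds, those constants equal one. Hence $\cC(\d x,\d z)=\cD(x)\,\lambda(\d x)\,\cP_x(\d z)$, and likewise for $\cC'$. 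This requires $x\mapsto\cP_x(B)$ to be measurable, i.e., $(\cP_x)$ is a Markov kernel, which is implicit in the statement.

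Next, I would handle the dependence of $\cP_x$ on $x$. To write densities for $\cP_x$ one needs a common dominating measure, which may not exist. I would avoid this by using the variational form $\tv{\cC}{\cC'}=\sup_{E}\abs{\cC(E)-\cC'(E)}$ for the upper bound. For any measurable $E\subseteq\cX\times\cZ$, let $g(x)\coloneqq\cP_x(E_x)\in[0,1]$, where $E_x$ is the $x$-section; $g$ is measurable by the standard kernel/monotone-class argument. Then
\[
\cC(E)-\cC'(E)=\int g(x)\,\bigl(\cD(x)-\cD'(x)\bigr)\,\lambda(\d x).
\]
Since $0\le g\le 1$, the absolute value of this integral is at most $\int(\cD-\cD')_+\,\d\lambda=\tv{\cD}{\cD'}$. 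Taking the supremum over $E$ gives $\tv{\cC}{\cC'}\le\tv{\cD}{\cD'}$.

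For the reverse inequality, I would take $E=A\times\cZ$ with $A=\{x:\cD(x)>\cD'(x)\}$. Then $g=\mathds{1}_A$ because each $\cP_x$ is a probability measure, so $\cC(E)-\cC'(E)=\cD(A)-\cD'(A)=\tv{\cD}{\cD'}$. Equivalently, the $X$-marginals of $\cC$ and $\cC'$ are $\cD$ and $\cD'$, and total variation does not increase under marginalization, as already used in \eqref{eq:tv-marginal}. Combining the two bounds gives equality.

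The main obstacle is measure-theoretic bookkeeping rather than any estimate. One must make sure the normalizing constants are indeed one, so that $\cP_x(\cZ)=1$ is what is used, and that the section function $g$ is measurable. The application in the paper has $\cZ=\R\cup\{\emptyset\}$ and kernel $k_{\wh\pi}$, where $\cP_x$ is dominated by $\nu+\delta_\emptyset$; there one could instead write densities directly and use Fubini:
\[
\iint\abs{\cD(x)-\cD'(x)}\,p_x(z)\,\d z\,\d x=\int\abs{\cD-\cD'},
\]
which gives the identity immediately. I would mention this as the concrete special case.
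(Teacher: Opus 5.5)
Your proposal is correct and follows the paper's proof essentially verbatim: the upper bound integrates the section function $x\mapsto\cP_x(E_x)\in[0,1]$ against $\cD-\cD'$, and the lower bound uses cylinder events $A\times\cZ$ (you pick the Hahn set, while the paper takes the supremum over all $B$). Your extra bookkeeping---that the normalizing constants equal one and that the section map is measurable for a Markov kernel---makes explicit what the paper leaves implicit, without changing the argument.
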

        Combining \eqref{eq:tv-triangle}, $\tv{\cC_{\cD^\star,\wh{\pi}}}{\cC_{\wh{\theta}}}=\tv{\cD^\star}{\wh{\cD}}$, and \eqref{eq:tv-marginal} yields
        \begin{equation}
            \tv{\cC_{\cD^\star,\wh{\pi}}~}{~~\cC^\star}
             \leq 
            2\,\tv{\cC_{\wh{\theta}}~}{~~\cC^\star}\,.
            \yesnum\label{eq:tv-transfer}
        \end{equation}
        Finally, apply \cref{lem:tv-vs-discrepancy} with $\cD=\cD^\star$, $\pi=\pi^\star$, and $\pi'=\wh{\pi}$.
        Since $\cC^\star=\cC_{\cD^\star,\pi^\star}$, we have
        \[
            d_{\cD^\star}\!\inparen{\pi^\star,\wh{\pi}}
             \leq 
            2\,\tv{\cC_{\cD^\star,\wh{\pi}}}{\cC^\star}\,.
        \]
        Using \eqref{eq:tv-transfer} gives
        \[
            d_{\cD^\star}\!\inparen{\pi^\star,\wh{\pi}}
             \leq 
            2\cdot \tv{\cC_{\cD^\star,\wh{\pi}}}{\cC^\star}
             \leq 
            4\,\tv{\cC_{\wh{\theta}}}{\cC^\star}\,.
        \]
        Combining this with \eqref{eq:tv-marginal} proves \cref{prop:disc-via-tv}:
        \[
            \tv{\cD^\star}{\wh{\cD}}
            +d_{\cD^\star}\!\inparen{\pi^\star,\wh{\pi}}
            \leq
            5\,\tv{\cC_{\wh{\theta}}}{\cC^\star}\,.\qedhere
        \]
    \end{proof}
    \vspace{-5mm}
    \subsubsection*{Proof of \cref{lem:tv-same-conditional}}
        %
        By construction, for $E=B\times \cZ$, $\cC(E)=\cD(B)$ and $\cC'(E)=\cD'(B)$.
        Therefore, 
        \[
            \tv{\cC}{\cC'}
            ~=~
            \sup\nolimits_{E\subseteq \cX\times\cZ}\abs{\cC(E)-\cC'(E)}
            ~\ge~
            \sup\nolimits_{B\subseteq \cX}\abs{\cD(B)-\cD'(B)}
            ~=~
            \tv{\cD}{\cD'}\,.
        \]
        To see the upper bound, fix an arbitrary measurable event $E\subseteq \cX\times\cZ$.
        For each $x\in\cX$, define the $x$-section $E_x \;\coloneqq\; \inbrace{z\in\cZ:\ (x,z)\in E}.$
        Define the function $h:\cX\to[0,1]$ by $h(x)\coloneqq \cP_x(E_x).$
        Then $\cC(E)=\int h(x)\,\cD(\d x)$ and $\cC'(E)=\int h(x)\,\cD'(\d x)$,
        so that $\cC(E)-\cC'(E)=\int h(x)\,\inparen{\cD-\cD'}(\d x).$
        Since $0\leq h(x)\leq 1$ for all $x$, we have 
        \[
            \inf\nolimits_{B\subseteq \cX}\inparen{\cD(B)-\cD'(B)}
            ~~\leq~~
            \int h\,\d\cD-\int h\,\d\cD'
            ~~\leq~~
            \sup\nolimits_{B\subseteq \cX}\inparen{\cD(B)-\cD'(B)}\,.
        \]
        Taking absolute values gives 
        \[
            \abs{\cC(E)-\cC'(E)}
            \le
            \sup\nolimits_{B\subseteq \cX}\abs{\cD(B)-\cD'(B)}
            =
            \tv{\cD}{\cD'}\,.
        \]
        Since this holds for every $E\subseteq \cX\times\cZ$, a supremum over $E$ yields
        $\tv{\cC}{\cC'}\leq \tv{\cD}{\cD'}$.\qed{}
        %
        %

        \subsubsection{Oracle Efficiency and Implementation Considerations} \label{sec:estimation:oracle}
        \par
        Our estimator is sample efficient, but not necessarily computationally efficient. Roughly, our algorithm selects the ``best fitting'' candidate from a finite family 
        \[
            \inbrace{\cC_\theta : \theta \in \mathcal N_\zeta}
        \]
        which costs only $\log\abs{\mathcal N_\zeta}$
        samples but can require (at least) $\abs{\mathcal N_\zeta}$ running time in the worst case, which is prohibitively slow when $\mathcal N_\zeta$ arises from covers and is very large in rich nonparametric classes.

        However, our algorithm has a key structural property that makes it amenable to practical implementation: its computational bottleneck reduces to a \textit{single} call to a \emph{density-estimation oracle}.
        This is noteworthy because sample-efficient algorithms for selection-bias problems do not, in general, admit such reductions. For instance, the algorithm of \citet{Kontonis2019EfficientTS} for the closely related problem of mean estimation (a special case of regression) with \emph{unknown} truncation (a form of selection bias) also searches over a discretized set of candidate parameters but, unlike our algorithm, must solve an \textit{auxiliary} optimization problem for each candidate.
        This nested structure---a cover search with a nontrivial per-candidate inner computation---is common in algorithms for unknown selection bias and makes them difficult to implement in practice.
        
        By contrast, our algorithm simply applies the finite-class selection procedure of \citet{yatracos1985rates} to choose a near-best fitting $\wh{\theta} \in \mathcal N_\zeta$, outputting $\wh{f}$, the regressor component of $\wh{\theta}$.
        The entire computational challenge is thus isolated to a single primitive: approximately selecting (or fitting) a density from the induced finite family $\inbrace{\cC_\theta : \theta \in \mathcal N_\zeta}$.
        In many scenarios of interest, this primitive can be implemented efficiently using methods that are heuristic but practically successful---such as diffusion models or other modern generative modeling techniques---potentially enabling practical algorithms for regression with unknown selection bias.

    \begin{remark}[Connections to MLE]
        In any parametric instance of our framework, the density-estimation step can be implemented by maximum likelihood estimation (MLE) on the observed (censored) data. (This is sufficient because MLE minimizes empirical KL divergence from the true censored law, and KL upper-bounds total variation via Pinsker's inequality, yielding the TV upper bound needed by \cref{thm:estimation:f}.) For the classical Heckman model, which is also parametric, this means that we learn the selection correction and the outcome regression function jointly via MLE. This differs from Heckman's two-step procedure, which first estimated the probit selection equation, then added the inverse Mills ratio as a correction term to the outcome regression.

        More broadly, any procedure that fits the induced censored distribution well can be plugged into our framework. Beyond MLE on parametric families, this includes polynomial-based or, more generally, kernel-based density estimation methods that have been used for specific families (see \cite{chen2025generalmixtures} for a recent guarantee of this kind).
    \end{remark}

\section{{Implications} of Our Results}
    \label{sec:app}
    As we mentioned before, for Conditions~\ref{cond:identifiability:f} and \ref{cond:estimation:f} to be useful, given all the existing sufficient conditions, they need to capture interesting examples which are not captured by existing conditions or their natural generalizations. 
    In this section, we demonstrate this by revisiting several classic scenarios of sample selection bias and exploring some new ones, and showing that Conditions~\ref{cond:identifiability:f} and \ref{cond:estimation:f} enable identification and estimation in more general cases than earlier sufficient conditions.
    {Throughout this section, whenever we list four component classes without separately defining a joint concept class, $\Theta$ denotes their Cartesian product.}
 \subsection{$\fstar$ Can Be Identified Even When $\sstar$ Is Not Identifiable }
\label{sec:app:non-iden-s}
Most existing methods for regression with sample selection bias follow an ``estimate selection, then debias'' template: first identify (or consistently estimate) the selection mechanism $\sstar$ (or a suitable functional of it), and then plug this estimate into a second-stage regression procedure. 
{Indeed, \citet{heckman1979sample}'s original method follows this template: it first estimates a functional $v_{\sstar}$ of $\sstar$, and then uses $v_{\sstar}$ to debias OLS.
Likewise, \citet*{das2003nonparametric}'s method (among the most general available methods) first estimates the selection propensity $p(x)\coloneqq \Pr(D=1\mid X=x)$ nonparametrically, then uses comparisons between observations $(x_1,y_1)$ and $(x_2,y_2)$  with $p(x_1)\approx p(x_2)$ to identify $\fstar$ (under suitable assumptions).}

{In this section, using our characterization, we show that} this template does not capture the full picture: there are regimes in which \emph{no} procedure can identify $\sstar$ from censored data, yet $\fstar$ remains identifiable. 
The proof of \cref{thm:f-identifiable-without-s} appears in \cref{sec:proofof:thm:f-identifiable-without-s}.
\begin{restatable}[Identifying $\fstar$ without Identifiability of $\sstar$]
    {theorem}{fIdentifiableWithoutS}
\label{thm:f-identifiable-without-s}
There exist {component} classes $(\hyF,\hyS,\hyDX,\hyQ)$ {such that the Cartesian-product class $\Theta_0\coloneqq\hyF\times\hyS\times\hyDX\times\hyQ$ satisfies} \cref{cond:identifiability:f} {and} the selection function $\sstar\in\hyS$ is \emph{not} identifiable from censored distributions realizable with respect to {$\Theta_0$}.
\end{restatable}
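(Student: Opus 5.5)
The plan is to build an explicit Cartesian-product example in which the selection function is non-identifiable for a trivial reason, namely that two different selectors induce the same observed kernel, while the regressor stays pinned down by the observed data. Take $\cX=\R$ and $\hyDX=\{\cD_0\}$ with $\cD_0=\normal{0}{1}$. Let $\hyQ=\{\cQ_0\}$ with $\cQ_0$ the uniform density on $[-1,1]$, so that the noise has bounded support. Let $\hyF=\hyFlin$, or more simply $\hyF=\{x\mapsto c\colon c\in\R\}$.

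The selector class will contain two functions that agree wherever the noise density is positive and differ elsewhere. Concretely, take $\hyS=\{s_1,s_2\}$ with $s_1\equiv 1$ and
\[
    s_2(x,y)\coloneqq \mathds{1}\{\abs{y-x}\leq 10\}+\tfrac12\,\mathds{1}\{\abs{y-x}>10\}\,.
\]
This choice is not yet correct, and the next step fixes it. We need $s_1\cdot \cQ_0(y-f(x)) = s_2\cdot\cQ_0(y-f(x))$ a.e. for the true $f$. That forces $s_1=s_2$ on the strip $\{\abs{y-f(x)}\leq 1\}$, whose location depends on $f$, yet the selectors must not depend on $f$.

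The fix is to make $\hyS$ pairs that agree on a region containing every possible support strip in some $x$-region, but differ somewhere. A cleaner construction restricts $\hyF$ to constants $c\in[0,1]$ and defines $s_2(x,y)=1$ for $y\in[-1,2]$ and $s_2=\tfrac12$ otherwise. Then for every $f\in\hyF$ the strip $y\in[c-1,c+1]\subseteq[-1,2]$, so $s_1\cQ_0(y-c)=s_2\cQ_0(y-c)$ identically. Hence $(c,s_1,\cD_0,\cQ_0)$ and $(c,s_2,\cD_0,\cQ_0)$ generate the same censored law, even though $s_1\neq s_2$ on a set of positive $(\cD_0\otimes\nu)$-measure. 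This shows $\sstar$ is not identifiable. One can also check that $s_1\neq s_2$ in $L_1(\cD_0\otimes\nu)$ or any reasonable sense, since they differ on all of $\{y>2\}$.

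Next we verify \cref{cond:identifiability:f}. Take constants $c\neq c'$ and any $s,s'\in\{s_1,s_2\}$. Both selectors equal $1$ on $y\in[-1,2]$, so on this range the kernels reduce to $\tfrac12\mathds{1}\{\abs{y-c}\leq1\}$ and $\tfrac12\mathds{1}\{\abs{y-c'}\leq 1\}$. These indicators of distinct intervals inside $[-1,2]$ differ on a set of positive Lebesgue measure in $y$. That set, crossed with all of $x$, has positive $(\cD_0\otimes\nu)$-measure, so the kernels are not equal a.e. By \cref{thm:characterization:f}, $\fstar$ is therefore identifiable.

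The only real obstacle is the one found above: the agreement region of the two selectors must cover the noise support for every candidate regressor simultaneously. Bounded noise support and a bounded regressor class resolve this. If a richer $\hyF$ such as linear functions is wanted, I would use selectors differing only at $\abs{y}$ large relative to the $x$-dependent strip, for instance $s_2=\tfrac12$ on $\{\abs{y-x}>\ldots\}$ paired with slopes in $[0,1]$ and compactly supported $\cD_0$. The core argument is the same.
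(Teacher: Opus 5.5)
Your final construction is correct, though you should delete the abandoned first attempt with $\abs{y-x}\le 10$ from the write-up. Take $\hyF$ to be the constants $c\in[0,1]$, take the single noise $\mathrm{Unif}[-1,1]$, and set $s_1\equiv 1$ and $s_2=\mathds{1}\{y\in[-1,2]\}+\tfrac12\mathds{1}\{y\notin[-1,2]\}$. For each $c$ the two observed kernels coincide identically, while $s_1\neq s_2$ on a set of positive $(\cD_0\otimes\nu)$-measure. For $c\neq c'$ the kernels differ on $\R\times([c-1,c+1]\triangle[c'-1,c'+1])$.

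This is a genuinely different route from the paper's. The paper verifies \cref{cond:identifiability:f} using one-sided noise. Every noise density is supported on $\R_{\geq 0}$ and positive on $(0,1)$, and every selector is at least $\eta$. Hence the kernel of $f$ is positive just to the right of $f(x)$, exactly where the kernel of any $f'>f$ vanishes. That argument works for an arbitrary measurable regressor class, all selectors with values in $[\eta,1]$, and a nonparametric noise class. The paper then breaks identifiability of $\sstar$ by trading selector shape against noise shape with $f=f'$, namely $s_{\mathrm{step}}\cdot\mathrm{Unif}[0,1]=c\cdot\cQ'$.

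Your version buys simplicity: a singleton noise class and a trivially identifiable $f$. The price is that the non-identifiability is degenerate. Your two selectors differ only where the latent outcome has zero density under every admissible tuple. So $\sstar$ is actually recoverable on the support of the data, which is all a debiasing step ever uses. Your example meets the letter of \cref{def:identifiable-selector} only because that definition demands equality $(\cD\otimes\nu)$-a.e. on all of $\cX\times\R$; it would fail under a support-restricted notion of selector identifiability. In the paper's example the selectors differ where the outcome density is positive. That is what supports the paper's intended message that ``estimate selection, then debias'' can genuinely fail.
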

Since \cref{cond:identifiability:f} holds, \cref{thm:characterization:f} guarantees that $\fstar$ is identifiable from {realizable} censored distributions.
At the same time, \cref{thm:f-identifiable-without-s} shows that multiple \textit{distinct} selectors $s\in\hyS$ can lead to the same censored distribution, making $\sstar$ unidentifiable.
Any approach that requires recovering $\sstar$, even approximately, as an intermediate step therefore cannot identify $\fstar$ in such settings.
This separation also extends to the finite-sample regime;  the specific estimation rates and sample sizes are not the key point of this section, so we defer the details to \cref{sec:proofof:thm:f-identifiable-without-s}.
Together, these results show that sometimes $\fstar$ can be learned using \cref{thm:characterization:f,thm:estimation:f} even when learning $\sstar$ is impossible, showing that our characterizations qualitatively extend beyond existing works.
That said, for these results to be meaningful, the concept classes witnessing the separation should be meaningful: 
 
\paragraph{Families in \cref{thm:f-identifiable-without-s}.}
The proof only requires simple concept classes: $\hyQ$ needs just two distributions and $\hyS$ needs just two rules (a constant function and an ``interval'' function  that takes a larger value on one interval in $y$).
The main structural restriction is \emph{one-sidedness} of noise: the noise distributions in our construction are supported on $\R_{\geq 0}$.
This is natural in settings where outcomes are inherently non-negative, such as durations, utilization measures, and inefficiency distributions in stochastic frontier models~\citep{aigner1977stochasticFrontier}.
Beyond this, the remaining requirements on $\hyQ$ are mild: we only need positive density on a short interval such as $(0,1)$, and the specific densities in our proof are chosen for convenience, and the same separation holds with other one-sided families (\eg{}, Gaussians truncated to $\R_{\geq 0}$).
See \cref{rem:app:construction} for a description of these assumptions.
{Two additional features are worth noting: the non-identifiability of $\sstar$ is \emph{robust to model enlargement} (taking supersets $\hyS\supseteq \hyS_{\min}$ and $\hyQ\supseteq \hyQ_{\min}$ preserves it), and it is \emph{not} an artifact of poor overlap (both selectors satisfy $s(x,y)\geq \eta$ for all $(x,y)$).}

\subsection{Revisiting Classical Examples from \cref{sec:illustrations}}\label{sec:illustrations_proofs}
{In this section, we revisit the classical models of limited dependence introduced in \cref{sec:illustrations}. Our goal is to provide a straightforward characterization of each variant and show that our identification condition (\Cref{cond:identifiability:f}) can be used to verify the identification of the target parameter in each of these settings. We briefly remark that in the following sections, we assume that $\cD_X$ is known exactly. This is a mild assumption because, regardless of the selection equation, the {covariates} are always observed.}

\subsubsection{Heckman's Sample-Selection Model}
We will now show that we may subsume Heckman's model into \Cref{def:selection_bias_model}. First, one can verify that Heckman's selection functions must {take} the form below.
\begin{proposition}\label{prop:heckman_selection_function}
    Let $s^\star(x,y) = \Pr\!\inparen{d=1\,\mid\,X=x,\ Y=y}$ be the probability {that $(x,y)$ is observed}, where $d=1$ is a selection indicator variable. Under Heckman's model, it holds that
    \[
    s^\star(x,y) =  \Phi\!\left(\frac{x^\top v/\sigma' + \rho\,\inparen{y-x^\top w}/\sigma}{\sqrt{1-\rho^2}}\right)\,.
\]
\end{proposition}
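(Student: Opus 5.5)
The plan is to compute the conditional law of the selection latent $d^\star$ given $(X,Y)=(x,y)$ directly from the bivariate normal structure in \cref{def:heckman}, and then read off $\Pr(d^\star>0\mid x,y)$. Since $(\xi,\xi')\perp x$ and $y=w^\top x+\xi$, conditioning on $X=x,Y=y$ is the same as conditioning on $X=x$ and $\xi=y-x^\top w$. Because $x$ is independent of the noise pair, the conditional law of $\xi'$ given this event is the conditional law of $\xi'$ given $\xi=y-x^\top w$ alone.

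The first step is the Gaussian conditioning formula for a jointly normal pair with correlation $\rho$. For $(\xi,\xi')$ with variances $\sigma^2,(\sigma')^2$ we have
\[
\xi'\mid \xi=e \;\sim\; \normal{\rho\,\tfrac{\sigma'}{\sigma}\,e}{(\sigma')^2(1-\rho^2)}\,.
\]
To verify this, write $\xi'=\rho\frac{\sigma'}{\sigma}\xi+\eta$. Then $\eta$ is Gaussian and uncorrelated with $\xi$, hence independent of $\xi$, and $\Var(\eta)=(\sigma')^2(1-\rho^2)$. This requires $|\rho|<1$, which holds by assumption.

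The second step evaluates the selection probability:
\[
\Pr(d=1\mid x,y)=\Pr\!\inparen{\xi'>-v^\top x \mid \xi=e},\qquad e=y-x^\top w\,.
\]
Standardizing and using $1-\Phi(-t)=\Phi(t)$, this equals
\[
\Phi\!\inparen{\frac{v^\top x+\rho\frac{\sigma'}{\sigma}e}{\sigma'\sqrt{1-\rho^2}}}\,.
\]
Dividing the numerator and the denominator by $\sigma'$ gives exactly
\[
\Phi\!\inparen{\frac{x^\top v/\sigma'+\rho\,(y-x^\top w)/\sigma}{\sqrt{1-\rho^2}}}\,,
\]
which is the claimed formula.

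The only delicate point is measure-theoretic rather than computational. The quantity $s^\star(x,y)=\Pr(d=1\mid X=x,Y=y)$ is a regular conditional probability, so it is defined only $(\cDX\otimes\nu)$-a.e. on the support. To make the statement precise, I would check that the displayed $\Phi$-expression is a valid version of this conditional probability. The check is that for measurable $A,B$,
\[
\Pr(d=1,\,X\in A,\,Y\in B)=\int_A\int_B \Phi(\cdot)\,\normal{0}{\sigma^2}(y-x^\top w)\,\d y\,\cD(\d x)\,,
\]
which follows from Fubini together with the independence of $x$ and $(\xi,\xi')$. This also shows that Heckman's model embeds in \cref{def:selection_bias_model} with $\cQ^\star=\normal{0}{\sigma^2}$, since generating $D$ via $U\le s^\star(X,Y)$ reproduces the same joint law of $(X,Y,D)$.
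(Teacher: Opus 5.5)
Your proposal is correct and follows essentially the same route as the paper: use the independence of $x$ and $(\xi,\xi')$ to reduce conditioning on $(X,Y)=(x,y)$ to conditioning on $\xi=y-x^\top w$, apply the Gaussian conditional law of $\xi'$ given $\xi$, standardize, and use $1-\Phi(-t)=\Phi(t)$. Two of your steps go beyond what the paper writes down: the derivation of the conditional law via the decomposition $\xi'=\rho\frac{\sigma'}{\sigma}\xi+\eta$, and the Fubini check that the $\Phi$-expression is a valid version of the conditional probability. The paper takes both for granted.
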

\begin{proof}
    {Fix \(x,y\) and write \(\epsilon\coloneqq y-w^\top x\). }First, observe that $d=1$ iff $d^\star = v^\top x + \xi' > 0$. Thus,
    \begin{align*}
        s^\star(x,y) = \Pr(d=1|X=x,Y=y) &= \Pr(v^\top x + \xi' > 0|X=x,Y=y)\\ 
        &= \Pr(v^\top x + \xi' > 0|X=x,\xi=\epsilon)\\
        &= \Pr(v^\top x + \xi' > 0|\xi=\epsilon)\,.
    \end{align*}
    {Here,} the last equality follows from $(\xi,\xi') \perp x$. Hence, $s^\star(x,y) = \Pr(\xi' > -v^\top x|\xi=\epsilon)$. Recall that $\xi$ and $\xi'$ are jointly normal; {thus, we have}
    \[
        \xi' | (\xi=\epsilon) \sim \normal{\frac{\rho \sigma' \epsilon}{\sigma}}{(\sigma')^2(1-\rho^2)}\,.
    \]
    Then, 
    \begin{align*}
         s^\star(x,y) &= \Pr(\xi' > -v^\top x|\xi=\epsilon)\\
        &= \Pr\left(\frac{\xi'-\frac{\rho \sigma' \epsilon}{\sigma}}{\sigma' \sqrt{1-\rho^2}} > \frac{-v^\top x -\frac{\rho \sigma' \epsilon}{\sigma}}{\sigma' \sqrt{1-\rho^2}} \;\middle|\; \xi = \epsilon\right)\,.
    \end{align*}
    The {left-hand} side is now distributed as a standard normal variable. Consequently,
    \begin{align*}
        s^\star(x,y) = 1- \Phi\left(\frac{-v^\top x -\frac{\rho \sigma' \epsilon}{\sigma}}{\sigma' \sqrt{1-\rho^2}}\right) 
        = \Phi\left(\frac{v^\top x + \frac{\rho \sigma' \epsilon}{\sigma}}{\sigma' \sqrt{1-\rho^2}}\right) = \Phi\!\inparen{\frac{v^\top x/\sigma' + \rho\,\inparen{y-w^\top x}/\sigma}{\sqrt{1-\rho^2}} }\,,
    \end{align*}
    where the second equality uses that $1-\Phi(-x) = \Phi(x)$ and the third equality uses $\xi = y-w^\top x$.

\end{proof}
\noindent Hence, the following concept classes capture \cref{def:heckman}:
        \begin{align*}
            \hyF & =  \inbrace{f(x)=\inangle{w, x}:\ w\in\R^d }\,,\\
        \hyS & =  \inbrace{ (x,y)\mapsto \Phi\!\inparen{\frac{v^\top x/\sigma' + \rho\,\inparen{y-w^\top x}/\sigma}{\sqrt{1-\rho^2}} } :\ v\in\R^d,\ \rho\in(-1,1),\ \sigma,\sigma'>0,\ w\in\R^d} \,,\\
	        \hyQ &= \{\normal{0}{\sigma^2} : {\sigma>0}\}\,.
        \end{align*}
         In the next theorem, we prove that \Cref{cond:identifiability:f} is satisfied with this choice of $\hyF$, $\hyS$, and $\hyQ$. 
         \begin{theorem}[Heckman Model Identification]
            \label{thm:heckman_identifiable}
	            Let $\hyF,\hyS,\hyQ$ be the Heckman concept classes defining the outcome equation, selection function, and noise distributions, respectively. Assume the covariate distribution $\cD_X$ has support with nonempty interior in $\R^d$.  Let \(\hyDX\coloneqq\{\cD_X\}\). Then, it holds that \Cref{cond:identifiability:f} is satisfied for the Cartesian-product class $\Theta_{\rm H}\coloneqq\hyF\times\hyS\times\hyDX\times\hyQ$.
        \end{theorem}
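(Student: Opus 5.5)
The plan is to verify \cref{cond:identifiability:f} directly, in contrapositive form. Take two tuples $(f,s,\cD_X,\cQ),(f',s',\cD_X,\cQ')\in\Theta_{\rm H}$ whose observed kernels agree $(\cD_X\otimes\nu)$-a.e. I will show that $f=f'$ holds $\cD_X$-a.e., which is exactly the negation of the hypothesis of the condition. The data are $f(x)=w^\top x$ and $\cQ=\normal{0}{\sigma^2}$. By \cref{prop:heckman_selection_function}, the selector has the form $s(x,y)=\Phi(\alpha(x)+\beta y)$ for some affine $\alpha(\cdot)$ and a constant $\beta$ whose sign is the sign of $\rho$. The selector's own internal parameter $\tilde w$ may differ from $w$, since $\hyS$ ranges over $w$ independently; this is harmless, because only the form $\Phi(\alpha(x)+\beta y)$ is used. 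The primed tuple has the analogous form. By Fubini, for $\cD_X$-a.e.\ $x$ the two kernels agree $\nu$-a.e.\ in $y$. Both kernels are continuous in $y$, so for such $x$ they agree for every $y\in\R$. Taking logarithms (both kernels are strictly positive), I obtain for such $x$ and all $y$
\[
\log\Phi(\alpha+\beta y)-\frac{(y-m)^2}{2\sigma^2}-\log\sigma
=\log\Phi(\alpha'+\beta' y)-\frac{(y-m')^2}{2\sigma'^2}-\log\sigma'\,,
\]
where $m=w^\top x$ and $m'=w'^\top x$.

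The core step is an asymptotic comparison as $y\to\pm\infty$. It uses two facts: $\log\Phi(t)\to 0$ as $t\to+\infty$, and $\log\Phi(t)=-t^2/2-\log|t|+O(1)$ as $t\to-\infty$. I split into two cases according to the signs of $\beta$ and $\beta'$.

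\emph{Case 1: $\beta$ and $\beta'$ are not of strictly opposite signs.} Then there is a direction ($y\to+\infty$ or $y\to-\infty$) along which neither $\alpha+\beta y$ nor $\alpha'+\beta' y$ tends to $-\infty$. Along that direction both $\log\Phi$ terms converge to finite constants: either $0$, or $\log\Phi(\alpha)$ when $\beta=0$. Subtracting the two sides then leaves a quadratic polynomial in $y$ that tends to a constant. Its $y^2$-coefficient must therefore vanish, which gives $\sigma=\sigma'$. Its $y$-coefficient must also vanish, which gives $m/\sigma^2=m'/\sigma'^2$ and hence $m=m'$.

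\emph{Case 2: $\beta>0>\beta'$ (the reverse case is symmetric).} As $y\to+\infty$, the left side behaves like $-y^2/(2\sigma^2)$ and the right side like $-(\beta'^2+1/\sigma'^2)y^2/2$. Matching leading coefficients gives $1/\sigma^2=\beta'^2+1/\sigma'^2$. As $y\to-\infty$, the same matching gives $\beta^2+1/\sigma^2=1/\sigma'^2$. Combining the two yields $1/\sigma^2>1/\sigma'^2>1/\sigma^2$, a contradiction, so this case cannot occur.

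Therefore $w^\top x=w'^\top x$ for $\cD_X$-a.e.\ $x$, that is, $f=f'$ holds $\cD_X$-a.e., which is what the condition requires. Since $\cD_X$ has support with nonempty interior, this moreover forces $w=w'$: the set of $x$ where $w^\top x=w'^\top x$ is a hyperplane unless $w=w'$, and a hyperplane cannot carry full measure of a distribution whose support contains an open set.

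The main obstacle I expect is the bookkeeping in the sign case analysis. In particular, when $\beta=0$ or $\beta'=0$ the selector does not tend to $1$ but to a constant $\Phi(\alpha)$, and the argument must still choose a direction along which no Gaussian-tail term from $\log\Phi$ contributes a quadratic. The passage from a.e.\ equality to pointwise equality via continuity in $y$ also needs care, but it is routine.
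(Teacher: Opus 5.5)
Your proof is correct, and it takes a different and somewhat cleaner route than the paper's.

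The paper proceeds in three stages. First, it uses the nonempty-interior assumption and joint continuity in $(x,y)$ to upgrade the a.e.\ kernel equality to a pointwise identity on $B\times\R$ for an open set $B$. Second, it extracts $\sigma_1=\sigma_2$ from the $y\to\pm\infty$ asymptotics, by comparing the two tail coefficients in each direction. Third, it differentiates the log-identity in $y$ to obtain an equation in the inverse Mills ratio $\lambda$. It analyzes that equation in four sign cases for $(b_1,b_2)$ and concludes $w_1=w_2$ from $\langle w_1-w_2,x\rangle=0$ on $B$.

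You instead argue fiberwise: Fubini plus continuity in $y$ alone give the identity for every $y$ at $\cD_X$-a.e.\ $x$, and you never differentiate. Your two-case split (slopes of strictly opposite sign versus not) is also tighter. In the non-opposite case, choosing the direction in which neither $\log\Phi$ term diverges forces the quadratic remainder to be constant, which yields $\sigma=\sigma'$ and $w^\top x=w'^\top x$ simultaneously. In the opposite case, the two leading-coefficient identities are incompatible.

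Your route buys generality and economy. \cref{cond:identifiability:f} only asks for $f=f'$ up to $\cD_X$-null sets, so your argument never needs the nonempty-interior hypothesis; it enters only in your closing upgrade to $w=w'$, which the condition does not require. Thus your proof shows $\Theta_{\rm H}$ satisfies the condition for an arbitrary covariate law, or even an arbitrary class of covariate laws. It also needs no monotonicity or tail facts about $\lambda$.

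The paper's route additionally recovers the selection parameters $(a_1,b_1)=(a_2,b_2)$ in its same-sign subcase. In your setup this also follows at once: once $\sigma=\sigma'$ and $m=m'$, the identity reduces to $\Phi(\alpha+\beta y)=\Phi(\alpha'+\beta' y)$ for all $y$, and strict monotonicity of $\Phi$ gives $\alpha=\alpha'$ and $\beta=\beta'$.
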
 
        \begin{proof}
	            \sloppy Let $f_1(x) = w_1^\top x$ and $f_2(x) = w_2^\top x$ be two outcome functions such that \\ $\cD_X\left(\{x\in\cX:f_1(x)\neq f_2(x)\}\right)>0.$ Since $\supp(\cD_X)$ has nonempty interior and \(f_1-f_2\) is linear, this means \(w_1\neq w_2\).
Let 
            \[
            s_1(x,y) = \Phi\!\inparen{\frac{v_1^\top x/\sigma_1' + \rho_1\,\inparen{y-u_1^\top x}/\tau_1}{\sqrt{1-\smash{\rho_1^2}}} }\quadand
	             s_2(x,y) = \Phi\!\inparen{\frac{v_2^\top x/\sigma_2' + \rho_2\,\inparen{y-u_2^\top x}/\tau_2}{\sqrt{1-\smash{\rho_2^2}}} 
	            }\,
	            \]
	            be two selection functions.
            Define
            \[
                a_i \coloneq \frac{1}{\sqrt{1-\smash{\rho_i^2}}}\left(\frac{v_i}{\sigma_i'} - \frac{\rho_iu_i}{\tau_i}\right)\,, 
                \quad 
	                b_i \coloneq \frac{\rho_i}{\tau_i\sqrt{1-\smash{\rho_i^2}}}\,,
	            \]
	            for $i \in \{1,2\}$. We may then re-parameterize the selection functions as $s_1(x,y) = \Phi(a_1^\top x + b_1y)$ and $s_2(x,y) = \Phi(a_2^\top x + b_2y)$.
            Let \(\cQ_i=\normal{0}{\sigma_i^2}\in\hyQ\), where \(\sigma_i>0\), for \(i\in\{1,2\}\).
Now, assume for the sake of contradiction that \cref{cond:identifiability:f} fails for these choices. Then the corresponding observed kernels agree $(\cD_X \otimes \nu)$-a.e.:
            \[
                \Phi(a_1^\top x + b_1 y)\,\frac{1}{\sigma_1}\varphi(\sigma_1^{-1}(y - \langle w_1,x\rangle)) = \Phi(a_2^\top x + b_2 y)\,\frac{1}{\sigma_2}\varphi(\sigma_2^{-1}(y - \langle w_2,x\rangle))\,.
            \]
            Here, $\varphi(\cdot)$ is the standard normal PDF. 
	Taking the natural logarithm of both sides gives the following equality, \((\cD_X\otimes\nu)\)-a.e.:
            \begin{equation}
                \log\bigl(\Phi(a_1^\top x + b_1 y)/\sigma_1\bigr) - \frac{1}{2\sigma_1^2}(y-\langle w_1,x\rangle)^2 = \log\bigl(\Phi(a_2^\top x + b_2 y)/\sigma_2\bigr) - \frac{1}{2\sigma_2^2}(y-\langle w_2,x\rangle)^2\,.
                \label{eq:log_equality}
            \end{equation}

	            The two sides of the above equation are continuous functions of \((x,y)\). By assumption, \(\supp(\cD_X)\) contains a nonempty open set \(B\). Hence the equality above holds for all \((x,y)\in B\times\R\). 
            Indeed, if the two continuous functions differed at some point in \(B\times\R\), they would differ on a nonempty open neighborhood and therefore on a set of positive \((\cD_X\otimes\nu)\)-measure, contradicting \eqref{eq:log_equality}. 
            We may therefore argue pointwise on \(B\times\R\).

            Using \(\log\Phi(t)=-t^2/2+o(t^2)\) as \(t\to-\infty\) and \(\log\Phi(t)=o(t^2)\) as \(t\to\infty\), divide both sides of \eqref{eq:log_equality} by \(-y^2/2\) and let \(y\) tend to \(+\infty\) and \(-\infty\), respectively. For every fixed \(x\in B\), this gives
            \[
                \frac{1}{\sigma_1^2}+b_1^2\mathds{1}\{b_1<0\}=\frac{1}{\sigma_2^2}+b_2^2\mathds{1}\{b_2<0\}
                \qquad\text{and}\qquad
                \frac{1}{\sigma_1^2}+b_1^2\mathds{1}\{b_1>0\}=\frac{1}{\sigma_2^2}+b_2^2\mathds{1}\{b_2>0\}\,.
            \]
            For each \(i\), the smaller of the corresponding two coefficients is \(1/\sigma_i^2\). Hence, \(\sigma_1=\sigma_2\eqqcolon\sigma\).

Let $\lambda(z) = \varphi(z)/\Phi(z)$ be the inverse Mills ratio. Differentiating \eqref{eq:log_equality} with respect to $y$ gives:
            \begin{equation}
                b_1 \lambda(a_1^\top x + b_1 y) - b_2 \lambda(a_2^\top x + b_2 y) = -\frac{1}{\sigma^2}(\langle w_1,x\rangle - \langle w_2,x\rangle)\,.
                \label{eq:lambda_equality}
            \end{equation}
            Since the right-hand side is constant with respect to $y$, the left-hand side must also be constant for all $y \in \R$.
            We consider four cases depending on the values of $b_1$ and $b_2$ and in each case arrive at a contradiction. 
            Recall that we know that $\lim_{z\to\infty} \lambda(z) = 0$ and $\lambda(z) \sim -z$ as $z \to -\infty$. 
            

\paragraph{Case 1: $b_1 \neq 0$ and $b_2 \neq 0$.}
            \begin{itemize}[leftmargin=20pt]
                \item \textbf{Subcase 1.1: $b_1$ and $b_2$ have the same sign.} If $b_1 > 0$ and $b_2 > 0$, let $y \to \infty$; if \(b_1,b_2<0\), let \(y\to-\infty\). In either case, the arguments of both $\lambda$ functions tend to $\infty$, so $\lambda(\cdot) \to 0$. The left-hand side of \eqref{eq:lambda_equality} tends to 0. This implies
                \[
                    \langle w_1,x\rangle - \langle w_2,x\rangle = 0 \implies \langle w_1 - w_2, x \rangle = 0\,.
                \]
                Since this holds for all $x\in B$ and \(B\) is a nonempty open set, we must have $w_1 = w_2$.
                \cref{eq:lambda_equality} simplifies to $b_1 \lambda(a_1^\top x + b_1 y) = b_2 \lambda(a_2^\top x + b_2 y)$. Now, consider the opposite tail: $y \to -\infty$ when \(b_1,b_2>0\), and \(y\to\infty\) when \(b_1,b_2<0\). Using $\lambda(z) \sim -z$, we get:
                \[
                    b_1 (-(a_1^\top x + b_1 y)) \sim b_2 (-(a_2^\top x + b_2 y)) \implies -b_1 a_1^\top x - b_1^2 y \sim -b_2 a_2^\top x - b_2^2 y\,.
                \]
                For this to hold, the coefficients of $y$ must be equal, implying $b_1^2 = b_2^2$. Since $b_1,b_2$ have the same sign, we have $b_1=b_2$. Substituting \(b_1=b_2\) into the exact equality at the start of this subcase and using that \(\lambda\) is strictly decreasing gives $a_1^\top x = a_2^\top x$. Since \(B\) is a nonempty open set, $a_1=a_2$.
                Thus, $(a_1,b_1)=(a_2,b_2)$. The equality \(w_1=w_2\) contradicts the choice of \(f_1,f_2\).
                
                \item \textbf{Subcase 1.2: $b_1$ and $b_2$ have opposite signs.} Without loss of generality, assume $b_1 > 0$ and $b_2 < 0$. As $y \to \infty$, $a_1^\top x + b_1 y \to \infty$ and $a_2^\top x + b_2 y \to -\infty$. The left-hand side of \eqref{eq:lambda_equality} is asymptotic to:
                \[
                    b_1 \lambda(\text{large pos}) - b_2 \lambda(\text{large neg}) \sim b_1(0) - b_2(-(a_2^\top x + b_2 y)) = b_2 a_2^\top x + b_2^2 y\,.
                \]
                This diverges as \(y\to\infty\) (since $b_2 \neq 0$) and, hence, cannot be constant -- leading to a contradiction.
            \end{itemize}

\paragraph{Case 2: Exactly one of $b_1, b_2$ is zero.}
            Without loss of generality, assume $b_1 \neq 0$ and $b_2=0$. \cref{eq:lambda_equality} becomes:
            \[
                b_1 \lambda(a_1^\top x + b_1 y) = -\frac{1}{\sigma^2}\langle w_1 - w_2, x \rangle\,.
            \]
            The function $\lambda(z)$ is strictly decreasing. Therefore, the left-hand side is a non-constant function of $y$, while the right-hand side is constant in $y$. This is a contradiction.
            
\paragraph{Case 3: $b_1 = 0$ and $b_2=0$.}
                \cref{eq:lambda_equality} forces $w_1=w_2$ since it holds for all $x\in B$ and \(B\) is a nonempty open set, and the assumed
              kernel equality reduces to $\Phi(a_1^\top x)=\Phi(a_2^\top x)$ for all
              $x\in B$, implying $a_1=a_2$.  The equality \(w_1=w_2\) contradicts the choice of \(f_1,f_2\). 
            
            \medskip
            
            \noindent Thus, in each case, we arrive at a contradiction and, hence, \cref{cond:identifiability:f} must be satisfied.

        \end{proof}

    \subsubsection{Ahn--Powell's Sample-Selection Model}

    We characterize the joint concept class for the Ahn--Powell model.
    
    For a parameter tuple $\theta=(f,s,\cD,\cQ)$ specifying the sample-selection model in \cref{def:selection_bias_model}, suppose that $\cQ$ admits a density, also denoted by $\cQ$, and that $f(x)=w^\top x$. Define its observed kernel, induced propensity score, and selected conditional mean by
    \begin{align*}
        k_\theta(x,y)
        &\coloneqq s(x,y)\cQ\!\inparen{y-f(x)}\,,\\
        p_\theta(x)
        &\coloneqq \int_\R k_\theta(x,y)\,\nu(\d y)\,,\\
        m_\theta(x)
        &\coloneqq
        \frac{\int_\R yk_\theta(x,y)\,\nu(\d y)}{p_\theta(x)}
        \qquad\text{whenever }p_\theta(x)>0\,.
    \end{align*}
    Let $\Theta_{\rm AP}$ be the class of all such tuples for which $\cQ$ has a finite first moment and, for some smooth function $\lambda_\theta$,
    \[
        m_\theta(x)=f(x)+\lambda_\theta\!\inparen{p_\theta(x)}
    \]
    for $\cD$-almost every $x$ such that $p_\theta(x)>0$. Equivalently, $\E_\theta[\xi\mid X=x,D=1]=\lambda_\theta\!\inparen{p_\theta(x)}$, as in \cref{def:ahn_powell}, where the correction function is denoted by $g$.

    In the next theorem, we prove $\Theta_{\rm AP}$ satisfies \Cref{cond:identifiability:f}.
    \begin{theorem}\label{thm:ahn_identifiable}
        Fix a measurable instrument map $z:\cX\to\R^d$. For each $\theta\in\Theta_{\rm AP}$, define
        \[
            \overline{x}_\theta(t)
            \coloneqq
            \E_\theta\!\left[X\mid p_\theta(X)=t,D=1\right]
            \quad\text{and}\quad
            \overline{z}_\theta(t)
            \coloneqq
            \E_\theta\!\left[z(X)\mid p_\theta(X)=t,D=1\right].
        \]
        Suppose that the matrix \[
            \Gamma_\theta
            \coloneqq
            \E_\theta\!\left[
                p_\theta(X)^2
                \left(
                    z(X)-\overline{z}_\theta\!\left(p_\theta(X)\right)
                \right)
                \left(
                    X-\overline{x}_\theta\!\left(p_\theta(X)\right)
                \right)^\top
            \right]
        \label{eq:ahn-powell-rank-matrix}
    \]
    is well-defined and nonsingular. (This is
    Assumption~3.4 of \cite{Ahn1993semiparametric} with $g_i=p_\theta(X_i)$, $d_i=D_i$, and $z_i=z(X_i)$.) Then, $\Theta_{\rm AP}$ satisfies \cref{cond:identifiability:f}.
    \end{theorem}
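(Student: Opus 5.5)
Suppose, toward a contradiction, that \cref{cond:identifiability:f} fails for $\Theta_{\rm AP}$. Then there are tuples $\theta=(f,s,\cD,\cQ)$ and $\theta'=(f',s',\cD,\cQ')$ in $\Theta_{\rm AP}$, with $f(x)=w^\top x$ and $f'(x)=w'^\top x$ differing on a set of positive $\cD$-measure, whose observed kernels agree $(\cD\otimes\nu)$-a.e. The plan is to show that every object entering Ahn and Powell's identification argument depends on $\theta$ only through the pair $(\cD,k_\theta)$. The two tuples then produce the same moment equation. Nonsingularity of $\Gamma_\theta$ then forces $w=w'$, which contradicts the assumption that $f\neq f'$ on a set of positive $\cD$-measure.

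\textbf{Step 1: everything observable coincides.} Because $k_\theta=k_{\theta'}$ a.e., integrating in $y$ gives $p_\theta=p_{\theta'}$ $\cD$-a.e. Integrating $y\,k$ gives $m_\theta=m_{\theta'}$ wherever $p>0$; this uses the finite first moment of $\cQ$ to guarantee integrability. The proof of \cref{thm:characterization:f} shows that equal kernels with a common $\cD$ give equal censored laws, $\cC_\theta=\cC_{\theta'}$. The joint law of $(X,D)$ is therefore the same under both tuples. Since $p_\theta=p_{\theta'}$ a.e., the conditional means $\overline{x}_\theta,\overline{z}_\theta$ coincide with $\overline{x}_{\theta'},\overline{z}_{\theta'}$, and so $\Gamma_\theta=\Gamma_{\theta'}\eqqcolon\Gamma$. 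Write $p\coloneqq p_\theta$.

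\textbf{Step 2: differencing out the correction term.} The index restriction gives $m(x)=w^\top x+\lambda_\theta(p(x))=w'^\top x+\lambda_{\theta'}(p(x))$ for $\cD$-a.e. $x$ with $p(x)>0$. Subtracting, $(w-w')^\top x=h(p(x))$ with $h\coloneqq\lambda_{\theta'}-\lambda_\theta$, a function of $p(x)$ alone. Conditioning on $\{p(X)=t,D=1\}$ yields $(w-w')^\top\overline{x}(t)=h(t)$. Subtracting this from the previous identity gives
\[
(w-w')^\top\bigl(X-\overline{x}(p(X))\bigr)=0
\]
almost surely on $\{D=1\}$, and hence $p(X)$-weighted a.s. under $\cD$. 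Multiplying by $p(X)^2\bigl(z(X)-\overline{z}(p(X))\bigr)$ and taking expectations gives $\Gamma(w-w')=0$. The same conclusion follows through the pairwise-differencing form used by Ahn and Powell: the $h(p_i)-h(p_j)$ terms vanish as $p_i\to p_j$.

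\textbf{Step 3: conclusion and main obstacle.} Nonsingularity of $\Gamma$ gives $w=w'$, so $f=f'$ everywhere, which is the desired contradiction. I expect the main obstacle to be the measure-theoretic bookkeeping in Steps 1--2. First, the expectation defining $\Gamma_\theta$ must be taken under the joint law of $(X,D)$ with weight $p(X)^2$, and this law must be the same for both tuples. Second, the set where $p(X)=0$ contributes nothing to $\Gamma$. Third, conditional expectations given $p(X)=t$ are defined only up to null sets, and the a.e. identities must be transferred to them. The smoothness of $\lambda_\theta$ is not actually needed for this population-level identification; it matters only for estimation.
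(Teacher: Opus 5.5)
Your proposal is correct and follows essentially the same route as the paper. In both, kernel equality yields a common propensity score $p$, common selected means, and a common law of $(X,D)$; conditioning $(w-w')^\top X=h(p(X))$ on $p(X)$ in the selected population, multiplying by $p(X)^2\bigl(z(X)-\overline{z}(p(X))\bigr)$, and using nonsingularity of $\Gamma_\theta$ then forces $w=w'$ (your pairwise-differencing aside would additionally need continuity of $h$, but the main argument does not).
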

    
    \begin{proof}
    Fix
    \[
        \theta_j=(f_j,s_j,\cD,\cQ_j)\in\Theta_{\rm AP},
        \qquad j\in\{1,2\},
    \]
    where $f_j(x)=w_j^\top x$, and suppose that their observed kernels
    agree $(\cD\otimes\nu)$-almost everywhere:
    \[
        k_{\theta_1}(x,y)=k_{\theta_2}(x,y).
    \]
    Integrating this equality over $y$ gives
    \[
        p_{\theta_1}(x)=p_{\theta_2}(x)
    \]
    for $\cD$-almost every $x$. Write the common propensity score as
    $p$. Because the noise distributions have finite first moments,
    multiplying the kernel equality by $y$ and integrating gives
    \[
        p_{\theta_1}(x)m_{\theta_1}(x)
        =
        p_{\theta_2}(x)m_{\theta_2}(x)
    \]
    for $\cD$-almost every $x$. Consequently,
    \[
        m_{\theta_1}(x)=m_{\theta_2}(x)
    \]
    for $\cD$-almost every $x$ such that $p(x)>0$.
    Let $\lambda_j$ be the correction function supplied by the index
    restriction for $\theta_j$. Since
    \[
        m_{\theta_j}(x)
        =
        w_j^\top x+\lambda_j\!\left(p_{\theta_j}(x)\right),
    \]
    the preceding equality implies
    \begin{equation}
        (w_1-w_2)^\top x
        =
        \left(\lambda_2-\lambda_1\right)\!\left(p(x)\right)
        \label{eq:ahn-index-equality}
    \end{equation}
    for $\cD$-almost every $x$ with $p(x)>0$.
    The joint law of $(X,D)$ is determined by $\cD$ and $p$ because
    \[
        \Pr(X\in A,D=1)
        =
        \int_A p(x)\,\cD(\d x).
    \]
    Thus, the two tuples induce the same conditional distribution of
    $X$ given $p(X)$ and $D=1$. In particular, we may use
    $\overline{x}_{\theta_1}$ as the common version of
    \[
        \E[X\mid p(X)=t,D=1].
    \]
    Taking the conditional expectation of
    \eqref{eq:ahn-index-equality} given $p(X)$ in the selected
    population and subtracting it from
    \eqref{eq:ahn-index-equality} yields
    \begin{equation}
        \left(
            X-\overline{x}_{\theta_1}\!\left(p(X)\right)
        \right)^\top(w_1-w_2)
        =0
        \qquad\text{almost surely given }D=1.
        \label{eq:ahn-residual-equality}
    \end{equation}
    The selected law of $X$ is equivalent to $\cD$ on $\{p>0\}$.
    Hence, \eqref{eq:ahn-residual-equality} holds
    $\cD$-almost everywhere on $\{p>0\}$. Multiplying it by
    \[
        p(X)^2
        \left(
            z(X)-\overline{z}_{\theta_1}\!\left(p(X)\right)
        \right)
    \]
    and taking expectations gives
    \begin{align*}
        \Gamma_{\theta_1}(w_1-w_2)
        &=
        \E_{\theta_1}\!\left[
            p(X)^2
            \left(
                z(X)-\overline{z}_{\theta_1}\!\left(p(X)\right)
            \right)
            \left(
                X-\overline{x}_{\theta_1}\!\left(p(X)\right)
            \right)^\top
        \right](w_1-w_2)\\
        &=0.
    \end{align*}
    The factor $p(X)^2$ makes the integrand vanish on $\{p=0\}$.
    Since $\Gamma_{\theta_1}$ is nonsingular, it follows that
    $w_1=w_2$, and therefore $f_1=f_2$. Thus, two tuples whose
    outcome regressors differ on a set of positive $\cD$-measure
    cannot induce the same observed kernel. This proves
    \Cref{cond:identifiability:f}.
\end{proof}

    \subsubsection{Censoring on Outcomes: Tobin Model and Variants}
    Models where revealment depends on the realized outcome, like Tobit, are captured by the following concept classes:

    \begin{align*}
        \hyF & = \inbrace{f(x)=\inangle{w,x}:w\in\R^d}\,,\\
        \hyS_{\rm out}
        & =
        \left\{
        (x,y)\mapsto \mathds{1}\{y\in S\}
        :
        S\subseteq\R\text{ measurable and }\nu(S)>0
        \right\}\,, \\
        \hyQ & = \inbrace{\cN(0,\sigma^2):\sigma^2>0} \,.
    \end{align*}
    
    In the following, we prove that these models also satisfy \cref{cond:identifiability:f}.
    
    \begin{theorem}[Outcome-only censoring identification]\label{thm:outcome_only_identifiable}
        Let $\hyF, \hyS_{\rm out}, \hyQ$ be the outcome-censored concept classes as defined above. 
        If \(\cD_X\) has full-dimensional support, let $\hyDX\coloneqq\{\cD_X\}$. Then,
        the Cartesian-product class $\Theta_{\rm out}\coloneqq\hyF\times\hyS_{\rm out}\times\hyDX\times\hyQ$ satisfies \cref{cond:identifiability:f}.
    \end{theorem}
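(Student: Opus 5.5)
The plan is a direct contrapositive argument. It uses only two facts: Gaussian densities are strictly positive, and a nonzero polynomial of degree at most two has finitely many roots. Fix $\cD_X$ and two tuples $(f_j,s_j,\cD_X,\cQ_j)\in\Theta_{\rm out}$ for $j\in\{1,2\}$, with $f_j(x)=w_j^\top x$, $s_j(x,y)=\mathds{1}\{y\in S_j\}$, $\nu(S_j)>0$, and $\cQ_j=\normal{0}{\sigma_j^2}$. Suppose the observed kernels agree $(\cD_X\otimes\nu)$-a.e.:
\[
    \mathds{1}\{y\in S_1\}\,\tfrac{1}{\sigma_1}\varphi\!\inparen{\tfrac{y-w_1^\top x}{\sigma_1}}
    =
    \mathds{1}\{y\in S_2\}\,\tfrac{1}{\sigma_2}\varphi\!\inparen{\tfrac{y-w_2^\top x}{\sigma_2}}\,.
\]
I will show that this forces $f_1=f_2$ $\cD_X$-a.e., which is exactly what \cref{cond:identifiability:f} requires.

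\emph{Step 1 (Fubini).} By Fubini, there is a set $G\subseteq\cX$ with $\cD_X(G)=1$ such that, for every $x\in G$, the displayed equality holds for $\nu$-a.e.\ $y$. Fix $x\in G$.

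\emph{Step 2 (the survival sets agree).} Since $\varphi>0$ everywhere, the left side vanishes exactly on $S_1^c$ and the right side exactly on $S_2^c$. Hence $\nu(S_1\triangle S_2)=0$. Set $S\coloneqq S_1\cap S_2$; then $\nu(S)=\nu(S_1)>0$.

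\emph{Step 3 (matching the Gaussian factors).} On $S$ both indicators equal $1$. Taking logarithms, for $\nu$-a.e.\ $y\in S$ we get
\[
    -\log\sigma_1-\frac{(y-w_1^\top x)^2}{2\sigma_1^2}
    +\log\sigma_2+\frac{(y-w_2^\top x)^2}{2\sigma_2^2}=0\,.
\]
The left side is a polynomial in $y$ of degree at most two. A nonzero such polynomial has at most two roots, while the equation holds on a set of positive Lebesgue measure. So all coefficients vanish:
\begin{itemize}
    \item the $y^2$ coefficient gives $\sigma_1=\sigma_2$;
    \item then the $y$ coefficient gives $w_1^\top x=w_2^\top x$.
\end{itemize}

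\emph{Conclusion.} Since Step 3 holds for every $x\in G$ and $\cD_X(G)=1$, we get $f_1=f_2$ $\cD_X$-a.e. This contradicts $\cD_X(\{f_1\neq f_2\})>0$, so \cref{cond:identifiability:f} holds.

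The full-dimensional support hypothesis is not needed for this argument. It only ensures that "$f_1\neq f_2$ on a positive-measure set" coincides with $w_1\neq w_2$, which is how the condition was read in \cref{thm:heckman_identifiable}. I would mention this as a remark.

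The main obstacle is measure-theoretic bookkeeping rather than analysis. One must order the quantifiers correctly: first the $x$-sections via Fubini, then the survival sets (which do not depend on $x$) up to null sets, then the polynomial-root argument on a positive-measure $y$-set. In particular, the argument must not assume $S$ contains an interval, since $S$ is only measurable with positive measure. The finite-roots argument avoids needing any interval.
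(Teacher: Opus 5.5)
Your proof is correct and follows the paper's argument. Positivity of the Gaussian densities forces the survival sets to agree up to a $\nu$-null set, and equality of the two Gaussian densities on a positive-measure $y$-set then forces $\sigma_1=\sigma_2$ and $w_1^\top x=w_2^\top x$ for $\cD_X$-a.e.\ $x$. The paper instead uses real analyticity to extend the equality to all $y$; your logarithm-plus-root-counting step is a slightly more elementary route to the same conclusion, and you are right that full-dimensional support plays no essential role.
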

    \begin{proof}
        Let \(f_1(x)=w_1^\top x\) and \(f_2(x)=w_2^\top x\) differ on a set of positive $\cD_X$-measure, so \(w_1\neq w_2\).
        Let \(s_i(y)=\mathds{1}\{y\in S_i\}\) with \(\nu(S_i)>0\), and let
        \(\cQ_i=\cN(0,\sigma_i^2)\) for \(i=1,2\). Suppose toward a contradiction
        that the observed kernels agree \((\cD_X\otimes\nu)\)-a.e.:
        \[
            \mathds{1}\{y\in S_1\}\,\varphi_{\sigma_1}(y-w_1^\top x)
            =
            \mathds{1}\{y\in S_2\}\,\varphi_{\sigma_2}(y-w_2^\top x)\,.
        \]
        Because Gaussian densities are strictly positive everywhere, this equality
        implies \(S_1=S_2\) \(\nu\)-a.e. Let \(S\) denote this common set, which has
        positive \(\nu\)-measure. Hence, for \(\cD_X\)-a.e. \(x\),
        \[
            \varphi_{\sigma_1}(y-w_1^\top x)
            =
            \varphi_{\sigma_2}(y-w_2^\top x)
            \qquad
            \nu\text{-a.e. } y\in S\,.
        \]
        Since both sides are real analytic functions of \(y\), equality on a set of
        positive Lebesgue measure implies equality for all \(y\in\R\). Therefore,
        for \(\cD_X\)-a.e. \(x\), the two Gaussian densities are identical, which
        forces
        \[
            w_1^\top x=w_2^\top x
            \quadand\quad
            \sigma_1=\sigma_2\,.
        \]
        Thus \(f_1=f_2\) \(\cD_X\)-a.e., contradicting the assumption that they
        differ on a set of positive \(\cD_X\)-measure.
    \end{proof}
    
    More generally, the censoring (as introduced in \cref{ex:outcome-only}) can also depend jointly on covariates and outcomes via an \emph{unknown} survival set $S^\star\subseteq\cX\times\R$,
        \[
        d  =  \mathds{1}\!\inbrace{(X,Y)\in S^\star}\quad\Longleftrightarrow\quad s^\star(x,y)=\mathds{1}\!\inbrace{(x,y)\in S^\star}\,.
        \]
    This broader class is captured by \cref{def:selection_bias_model}, but
	    \cref{cond:identifiability:f} need not hold for arbitrary measurable
	    \(S^\star\), and additional structure on the survival sets is necessary. Recent work \citep{Kontonis2019EfficientTS,lee2024unknown} develops algorithms and identifiability under mild complexity assumptions on $S^\star$ (\eg{}, that it has a low Gaussian surface area).

\subsection{Revisiting the Identification Condition of \citet{das2003nonparametric} }
\label{sec:revisiting:dvn} 
Since \citet{heckman1979sample}, a large body of work has progressively relaxed the parametric assumptions of the classical sample-selection model~\citep{lee1983genecon, Ahn1993semiparametric}. Among the most general contributions is \citet{das2003nonparametric}, who allow both the regression function and the selection mechanism to be nonparametric.
In this section, we embed the model of \citet{das2003nonparametric} into our framework and compare their sufficient conditions for identifying the regression function with \cref{cond:identifiability:f}. We verify that \cref{cond:identifiability:f} holds for the corresponding joint concept class, and we conclude with an outcome-dependent truncation example that violates the scalar-index restriction of \citet{das2003nonparametric} but remains identifiable under our characterization.
\par
We begin with the model studied by \citet{das2003nonparametric} with homoskedastic noise.\footnote{\citet{das2003nonparametric} also analyze other structures (\eg{}, multi-index and multi-outcome). We focus on the single-rule case as it already captures the main connection to our framework; the other cases can be analogously analyzed.}
\Needspace{6\baselineskip}
\begin{definition}[Das--Newey--Vella model]\label{def:dvn-model}
Let $x=(w,z)\in\cX\subseteq \R^d$ be observed covariates, where $z\in\R$, let $D\in\zo$ be a selection indicator, and let $\xi$ be an unobserved disturbance. The latent outcome is $y^\star = f^\star(x) + \xi$, where $f^\star:\R^d\to\R$ is an unknown smooth regression function satisfying $f^\star(w,z)=g^\star(w)$ for some unknown continuously differentiable function $g^\star$. The observed data consist of \iid{} samples of $(X,Y)$, where $(X,Y)\coloneqq (X,Y^\star)$ if $D=1$ and $(X,Y)\coloneqq (X,\emptyset)$ if $D=0$.
\end{definition}
\citet{das2003nonparametric} identify $\fstar$ under the following assumption. 
\par
\begin{assumption}\label{def:dvn-id-restrictions}
    Define the propensity score $p(x)\coloneqq \Pr(D=1\mid X=x)$.
    Assume that $\mathrm{int}(\cX)$ is nonempty and connected, that $\cX=\overline{\mathrm{int}(\cX)}$, that $\supp(\cD_X)=\cX$, that $\cD_X$ has a continuous distribution function, and that $p$ is continuously differentiable.\footnote{Here, $\mathrm{int}(\cX)$ denotes the interior of $\cX$. To the best of our understanding, the proof of \citet{das2003nonparametric} implicitly requires the interior of the support to be connected and dense in the support, although this is not stated explicitly. We make these requirements explicit here.}
    There is a continuously differentiable function $\lambda$ such that, for the decomposition $x=(w,z)$ fixed in \cref{def:dvn-model}, the following hold:
\begin{enumerate}[leftmargin=20pt,itemsep=2pt]
    \item \textbf{(Index restriction)} For $\cD_X$-a.e. $x$, $\E\insquare{Y \mid X=x, D=1} = \fstar(x)+\lambda\!\inparen{p(x)}$.
    \item \textbf{(Rank condition)} For each $x=(w,z)\in\supp(\cD_X)$, it holds that $\frac{\partial}{\partial z}\,p(w,z)\neq 0$.
    \item \textbf{(Positivity)} For $\cD_X$-almost every $x$, $p(x)>0$.  
    \item \textbf{(Constant)} It holds that $f^\star(x_0)=0$ for a known $x_0\in \cX$.
\end{enumerate}
\end{assumption}
The index restriction is the main modeling assumption: the selection-induced bias $\E[Y \mid X=x, D=1] - f^\star(x)$ depends on $x$ only through the scalar $p(x)$. The rank condition ensures that $p(\cdot)$ varies with (at least some of) the features; together with the exclusion of $z$ from $f^\star$, this enables the separation of $f^\star(x)$ from $\lambda(p(x))$.
Positivity is required to identify $\E[Y\mid X=x, D=1]$ for $\cD_X$-almost every $x$.
Without the ``constant'' restriction they can only identify $\fstar$ up to an additive constant.
\par
We capture the model of \citet{das2003nonparametric} in our framework using the following joint concept class.
For the common support $\cX$, decomposition $x=(w,z)$, and normalization point $x_0$ above, let $\Theta_{\rm DNV}$ be the class of all tuples $(f,s,\cD,\cQ)$ specifying the sample-selection model in \cref{def:selection_bias_model} such that $\cQ$ has a density and, after setting $f^\star=f$ and $\cD_X=\cD$, the induced instance is of the form in \cref{def:dvn-model} and satisfies \cref{def:dvn-id-restrictions}.

\begin{lemma}[Embedding]\label{lem:dvn-embed}
Every instance of the model in \Cref{def:dvn-model} satisfying \Cref{def:dvn-id-restrictions} with $\xi\perp X$ and $\xi$ having a density corresponds to an instance of \cref{def:selection_bias_model} with $(f^\star,s^\star,\cD_X,\cQ^\star)\in\Theta_{\rm DNV}$, where $\cQ^\star$ is the law of $\xi$ and $s^\star(x,y)\coloneqq\Pr(D=1\mid X=x,Y^\star=y)$ is a measurable version.
\end{lemma}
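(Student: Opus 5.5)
The plan is to construct the tuple explicitly from the primitives of the Das--Newey--Vella instance and then check, one requirement at a time, that it lies in $\Theta_{\rm DNV}$ and generates the same observed law. Given an instance, set $\fstar\coloneqq f^\star$, $\cD_X$ the law of $X$, and $\cQ^\star$ the law of $\xi$, which has a density by hypothesis. For the selector, use the joint law of $(X,Y^\star,D)$. Since $D\in\zo$, there is a regular conditional probability of $D$ given $(X,Y^\star)$, and we let $\sstar(x,y)\coloneqq\Pr(D=1\mid X=x,Y^\star=y)$ be a jointly measurable version with values in $[0,1]$. Because $\xi\perp X$ and $Y^\star=f^\star(X)+\xi$, the pair $(X,Y^\star)$ has law $\cD_X\otimes$ (density $\cQ^\star(y-f^\star(x))$). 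Every component therefore has the form required by \cref{def:selection_bias_model}.

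Next I show that the censored law $\cC_{\fstar,\sstar,\cD_X,\cQ^\star}$ equals the law of the observed data $(X,Y)$ in \cref{def:dvn-model}. The $X$-marginals agree trivially. For measurable $A\subseteq\cX$ and $B\subseteq\R$, the tower property gives
\[
\Pr(X\in A,D=1,Y^\star\in B)=\E\bigl[\mathds{1}\{X\in A,Y^\star\in B\}\,\sstar(X,Y^\star)\bigr]=\int_A\int_B \sstar(x,y)\,\cQ^\star(y-\fstar(x))\,\d y\,\cD_X(\d x).
\]
This is exactly the uncensored part of the model in \cref{def:selection_bias_model}. The censored part $A\times\{\emptyset\}$ then follows by complement, since its mass is $\cD_X(A)$ minus the uncensored mass on $A\times\R$. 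The model's mechanism $D=\mathds{1}\{U\le\sstar(X,Y)\}$ reproduces the same conditional selection probability, so the two laws coincide.

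It remains to verify membership in $\Theta_{\rm DNV}$. The key observation is that every quantity appearing in \cref{def:dvn-id-restrictions} is a functional of the law of $(X,Y)$. These quantities are the support and regularity conditions on $\cD_X$, the propensity $p(x)=\int \sstar(x,y)\cQ^\star(y-\fstar(x))\,\d y$, the selected mean $\E[Y\mid X=x,D=1]$, and $\fstar$ itself, including its exclusion structure $\fstar(w,z)=g^\star(w)$ and the normalization $\fstar(x_0)=0$. Since the embedded tuple induces the same joint law of $(X,Y)$ (and of $(X,D)$), the index restriction, rank condition, positivity, and constant condition transfer verbatim with the same $\lambda$.

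The main obstacle is technical rather than conceptual: choosing the version of $\sstar$. The conditional probability is defined only up to null sets of the law of $(X,Y^\star)$, and we need a jointly measurable $[0,1]$-valued version. Such a version exists by the Radon--Nikodym theorem applied to $\Pr(D=1,(X,Y^\star)\in\cdot)\ll\mathrm{law}(X,Y^\star)$. We must also confirm that the choice does not matter. Different versions change the kernel $\sstar\,\cQ^\star(\cdot-\fstar)$ only on a $(\cD_X\otimes\nu)$-null set, so $p$, the selected mean, and the censored law are all unaffected. I would state this explicitly, so that "a measurable version" in the lemma is unambiguous.
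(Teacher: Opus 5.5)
Your proposal is correct and follows essentially the same route as the paper: you build the tuple from the law of $X$, the law of $\xi$, and a measurable version of $\Pr(D=1\mid X=x,Y^\star=y)$, you check that the induced censored law coincides with the observed law of the original instance, and you conclude that the Das--Newey--Vella restrictions transfer because they depend only on that law and on $f^\star$ itself. Your explicit tower-property computation and your discussion of the choice of version make precise steps that the paper's short proof leaves implicit; this includes reading $p$ up to $\cD_X$-null sets, that is, through its continuously differentiable representative.
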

As a sanity check, we show \cref{cond:identifiability:f} is indeed satisfied (see \Cref{sec:das_indentifiable} for a proof).
\begin{theorem}\label{thm:das_indentifiable}
The joint concept class $\Theta_{\rm DNV}$ satisfies \cref{cond:identifiability:f}. 
\end{theorem}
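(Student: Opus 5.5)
Suppose two tuples $\theta_j=(f_j,s_j,\cD,\cQ_j)\in\Theta_{\rm DNV}$, $j\in\{1,2\}$, share a covariate distribution and have observed kernels $k_{\theta_j}(x,y)=s_j(x,y)\cQ_j(y-f_j(x))$ that agree $(\cD\otimes\nu)$-a.e. The plan is to show that $f_1=f_2$ $\cD$-a.e.; this is exactly \cref{cond:identifiability:f}. The argument mirrors the proof of \cref{thm:ahn_identifiable}, with the Das--Newey--Vella exclusion and rank conditions used in place of the nonsingularity of $\Gamma_\theta$.

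\textbf{Step 1: equal kernels give equal propensities and selected means.} Integrating the kernel equality over $y$ gives $p_{\theta_1}=p_{\theta_2}\eqqcolon p$ for $\cD$-a.e.\ $x$. By positivity, $p>0$ $\cD$-a.e. Multiplying the kernel equality by $y$ and integrating then gives $\E_{\theta_1}[Y\mid X=x,D=1]=\E_{\theta_2}[Y\mid X=x,D=1]$ for $\cD$-a.e.\ $x$. This step needs the first moments to be finite, which is implicit in the index restriction because it requires the selected mean to exist. Feeding in the index restriction for each tuple, we obtain
\[
  f_1(x)-f_2(x)=(\lambda_2-\lambda_1)\bigl(p(x)\bigr)\qquad\text{for }\cD\text{-a.e. }x .
\]

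\textbf{Step 2: upgrade to an everywhere identity on the interior.} Write $h\coloneqq f_1-f_2$ and $\mu\coloneqq\lambda_2-\lambda_1$. By the model, $h(w,z)=g_1(w)-g_2(w)$ is continuously differentiable, $\mu$ is continuously differentiable, and $p$ is continuously differentiable. Since $\supp(\cD)=\cX$, any open subset of $\mathrm{int}(\cX)$ has positive $\cD$-measure. Two continuous functions that agree $\cD$-a.e.\ therefore agree everywhere on $\mathrm{int}(\cX)$, and by density of the interior they agree on all of $\cX$.

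\textbf{Step 3: differentiate in $z$ and conclude.} Because $h$ does not depend on $z$, differentiating the identity in $z$ on $\mathrm{int}(\cX)$ gives $0=\mu'(p(x))\,\partial_z p(x)$. The rank condition says $\partial_z p\neq 0$, so $\mu'(p(x))=0$ for every $x\in\mathrm{int}(\cX)$. Next, differentiate the identity with respect to $w$:
\[
  \nabla_w h(x)=\mu'(p(x))\,\nabla_w p(x)=0 \qquad\text{on }\mathrm{int}(\cX).
\]
So $h$ is $C^1$ with vanishing gradient on the connected open set $\mathrm{int}(\cX)$, hence constant there. By continuity and $\cX=\overline{\mathrm{int}(\cX)}$, $h$ is constant on $\cX$. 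The normalization $f_1(x_0)=f_2(x_0)=0$ forces that constant to be $0$. Hence $f_1=f_2$ on $\cX$, and so $\cD$-a.e.

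\textbf{Main obstacle.} I expect the delicate point to be Steps 2--3: passing from a.e.\ equalities to pointwise ones, and knowing that $\mu'\circ p$ vanishes at every point rather than only at points where $\mu$ is differentiable. Two things must be checked here. First, the $\lambda_j$ are $C^1$ on the range of $p$, so $\mu'$ is defined there. Second, the chain rule is applied only on the open interior, where $z$-derivatives make sense. This is exactly where the connectedness and density assumptions of \cref{def:dvn-id-restrictions} are used. If $\mathrm{int}(\cX)$ were disconnected, $h$ could be a different constant on each component, and the single normalization at $x_0$ would fail to identify it.
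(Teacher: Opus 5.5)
Your proposal is correct and follows the paper's proof essentially step for step. Like the paper, you obtain equal propensities and equal selected means from the kernel equality, upgrade the a.e.\ identity to one on $\mathrm{int}(\cX)$ using continuity and full support, use the rank condition to kill $\mu'\circ p$, differentiate in $w$, and finish with connectedness and the normalization at $x_0$. The only cosmetic difference is that you get constancy of $h$ from its vanishing full gradient on the connected open set $\mathrm{int}(\cX)$, whereas the paper argues via the open, connected projection of $\mathrm{int}(\cX)$ onto the $w$-coordinates.
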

The sufficient conditions of \citet{das2003nonparametric} require that the selection-induced bias in the conditional mean can be written as a function of the scalar propensity score $p(x)$. In particular, if two covariate values $x_1$ and $x_2$ have the same selection probability, $p(x_1)=p(x_2)$, then the bias must also be the same. This limits the range of selection mechanisms their framework can accommodate: it excludes settings where selection keeps \emph{different regions} of the outcome distribution at different covariate values, even when the overall acceptance rate is unchanged. As a result, there are identifiable instances that their conditions fail to recognize. 
Such outcome-dependent selection is common in physical-science measurement pipelines, where an instrument records a signal only if it falls in an acceptance region determined by detection thresholds, trigger rules, saturation windows, or sign-dependent clipping (see \cref{ex:intro:sciences} in \cref{sec:intro}).

\begin{example}[Survival-set truncation]\label{ex:trunc-identifiable}
Let $d=1$, $\cX\coloneqq \zo$ and consider the concept classes
\begin{align*}
\hyF_{\rm T} \coloneqq \hyFlin\,,\qquad
\hyQ_{\rm T} \coloneqq \inbrace{\cN(0,1)}\,,\qquad
\hyD_{\rm T} \coloneqq \inbrace{\cDX \;:\; \supp(\cDX)=\cX}\,,
\end{align*}
with a family of survival-set selectors
\[
\hyS_{\rm T}
\coloneqq
\inbrace{
s_S(x,y)=\mathds{1}\!\inbrace{(x,y)\in S} \;\Big|\; S\subseteq \R^{d+1},~S\cap \inbrace{x}\times \R \text{ is nonempty and open for each $x$}
}\,.
\]
That is, for each feature value $x$, the instrument records outcomes on a nontrivial range of $y$ values.

The following two lemmas show that the sufficient conditions of \citet{das2003nonparametric} fail for these concept classes, while \cref{cond:identifiability:f} holds.
\begin{lemma}\label{lem:trunc-dvn-fails}
The index restriction in \Cref{def:dvn-id-restrictions} fails for some instances arising from $\inparen{\hyF_{\rm T}, \hyS_{\rm T}, \hyD_{\rm T}, \hyQ_{\rm T}}$. %
\end{lemma}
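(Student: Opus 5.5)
The plan is to exhibit one explicit instance and compute its selected conditional mean directly. Take $\cX=\zo$ with $\cD_X$ uniform on $\{0,1\}$ (so $\supp(\cD_X)=\cX$), $\fstar(x)=0$ (linear, with $w=0$), $\cQ^\star=\cN(0,1)$, and a survival set $S$ whose sections at $x=0$ and $x=1$ are open sets of equal Gaussian mass but with different conditional means. Concretely, set $S\cap(\{0\}\times\R)=\{0\}\times(0,\infty)$ and $S\cap(\{1\}\times\R)=\{1\}\times(-\infty,0)$. Both sections are nonempty and open, so $s_S\in\hyS_{\rm T}$.

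The propensity score is then $p(0)=\Pr(\xi>0)=\tfrac12$ and $p(1)=\Pr(\xi<0)=\tfrac12$, so $p(0)=p(1)$. The selected means are $\E[Y\mid X=0,D=1]=\E[\xi\mid\xi>0]=\sqrt{2/\pi}$ and $\E[Y\mid X=1,D=1]=-\sqrt{2/\pi}$.

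Now suppose the index restriction held with some function $\lambda$. Because both points carry positive $\cD_X$-mass, the a.e. statement must hold at $x=0$ and at $x=1$. This gives $\fstar(0)+\lambda(1/2)=\sqrt{2/\pi}$ and $\fstar(1)+\lambda(1/2)=-\sqrt{2/\pi}$. Since $\fstar\equiv 0$, these two equations force $\lambda(1/2)$ to equal two different numbers, which is a contradiction. No smoothness assumption on $\lambda$ is needed for this step.

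The only point requiring care is the quantifier in the lemma: it asks that the index restriction fail for the true regressor of some instance, not for every possible decomposition of the selected mean. One might worry that a different choice of $\fstar$ could absorb the discrepancy. However, the index restriction is stated relative to the instance's own $\fstar$, which is fixed here at $0$, so the argument is complete. For robustness, one can also note that with $\fstar(x)=wx$ the restriction would require $w=2\sqrt{2/\pi}$. This $w$ is determined by the data and differs from the true $w=0$, so the index restriction cannot certify the true regressor either way.
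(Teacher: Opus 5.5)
Your proof is correct and is essentially the paper's argument. The paper also takes $\fstar\equiv 0$ and $\cN(0,1)$ noise, and uses opposite one-sided survival sets at the two covariate values (the paper thresholds at $y>1$ and $y<-1$ rather than at $0$). This gives equal propensities with selected means of opposite signs. One small slip in your optional aside: with $f(x)=wx$ the two equations give $w=-2\sqrt{2/\pi}$, not $+2\sqrt{2/\pi}$, though this does not affect the main argument.
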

Nevertheless, the above example remains identifiable:
\begin{lemma}\label{lem:trunc-cond1}
The concept classes $\inparen{\hyF_{\rm T}, \hyS_{\rm T}, \hyD_{\rm T}, \hyQ_{\rm T}}$ satisfy \cref{cond:identifiability:f}.
\end{lemma}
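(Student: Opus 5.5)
We need to show: for $f_j(x)=w_jx$ (with $d=1$, $\cX=\{0,1\}$, so $f_j(0)=0$ and $f_j(1)=w_j$), selectors $s_{S_j}$, noise $\cN(0,1)$, and a common $\cD$ with full support on $\{0,1\}$, the assumption that $f_1$ and $f_2$ differ on a set of positive $\cD$-measure forces the observed kernels to differ on a set of positive $(\cD\otimes\nu)$-measure. Since $\cD$ puts positive mass on both points, $f_1\neq f_2$ on a set of positive measure means exactly $w_1\neq w_2$, and the disagreement occurs at the atom $x=1$. (If $\hyFlin$ is meant to include an intercept, the argument is the same at whichever atom has $f_1(x)\neq f_2(x)$.) Because $\cD(\{1\})>0$, it is enough to show that the two functions of $y$,
\[
\mathds{1}\{(1,y)\in S_1\}\,\varphi(y-w_1)
\quad\text{and}\quad
\mathds{1}\{(1,y)\in S_2\}\,\varphi(y-w_2),
\]
differ on a set of positive Lebesgue measure.

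Suppose instead that they agree $\nu$-a.e. Let $O_j$ be the section of $S_j$ at $x=1$; by the definition of $\hyS_{\rm T}$, each $O_j$ is nonempty and open. Since $\varphi>0$ everywhere, a.e. agreement forces $\mathds{1}_{O_1}=\mathds{1}_{O_2}$ a.e., so $\nu(O_1\triangle O_2)=0$. The set $O_1$ is nonempty and open, so it has positive measure, and therefore $O_1\cap O_2$ also has positive measure. On $O_1\cap O_2$ we get $\varphi(y-w_1)=\varphi(y-w_2)$ for a.e. $y$. Equivalently, $(y-w_1)^2=(y-w_2)^2$, which rearranges to $2y(w_2-w_1)=w_2^2-w_1^2$. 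When $w_1\neq w_2$, this has the single solution $y=(w_1+w_2)/2$, a null set. That contradicts positive measure, so $w_1=w_2$, and the kernels must differ. One could also argue by analyticity, as in the proof of \cref{thm:outcome_only_identifiable}.

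The only step that needs care is going from openness of the sections to positive measure of $O_1\cap O_2$. Nonempty open subsets of $\R$ have positive Lebesgue measure, and a.e. equality of the indicators transfers that measure to the intersection. I expect no real obstacle beyond stating precisely what ``differ on a set of positive $\cD$-measure'' means on the two-point space.
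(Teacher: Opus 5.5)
Your proof is correct and follows essentially the same route as the paper. At an atom $x$ with $f(x)\neq f'(x)$, either the selector sections differ on a set of positive measure, in which case $\varphi>0$ makes the kernels differ there, or they agree a.e., in which case on the common open section the two shifted Gaussians can only coincide at $\frac{f(x)+f'(x)}{2}$; the paper writes this as an explicit two-case split where you argue by a single contradiction, but the ingredients are the same.
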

\end{example} 
\noindent This example isolates a qualitative gap between the two approaches: two truncation rules can yield the same acceptance rate $p(x)$ while inducing opposite biases in the selected sample. Such mechanisms are natural in physical-science data acquisition, where ``trigger'' or ``detection'' pipelines adjust thresholds to maintain a target event rate, and different instrument configurations can select opposite tails of a signal while achieving similar overall yield. In these settings, the propensity score does not summarize the selection effect, and the conditions of \citet{das2003nonparametric} fail even when the selected data remains informative. Our framework handles such mechanisms directly through outcome-dependent revealment $s(x,y)$, and \cref{cond:identifiability:f} characterizes precisely when identification is possible.

\subsection{Bounded-Friction Participation Costs}\label{sec:boundedfriction}
In this section, we show an economically meaningful setup where our identification condition (\Cref{cond:identifiability:f}) is satisfied but both \cref{def:dvn-id-restrictions} of \citet{das2003nonparametric} and the sufficient conditions of \citet{gentry_li_2014} fail. To begin, consider participation rules of the form
\[
    s(x,y)=G\!\inparen{\rho(x)y-h(x,y)}\,,
\]
where $G:\R\to[0,1]$ is a known nondecreasing link satisfying $G(t)\to1$ as $t\to\infty$, $\rho(x)$ is a positive responsiveness parameter, and $h(x,y)$ is a bounded participation friction. With $G=\Phi$ and an appropriately structured index, this contains familiar probit/Heckman-style selection rules as a special case. The bounded friction $h$ may also encode local obstacles such as taxes, audit risk, compliance costs, or auction entry costs. Importantly, because $h$ may depend jointly on $(x,y)$, participation need not be globally monotone in $y$.

\begin{example}[Bounded-Friction Participation Costs]\label{ex:bounded-friction-participation}
Fix constants $0<\rho_L\leq \rho_U<\infty$ and $C<\infty$. Let $G:\R\to[0,1]$ be nondecreasing with $\lim_{t\to\infty}G(t)=1$. Define
\[
\hyS_{\rm BF}
\coloneqq
\left\{
\begin{array}{l}
 s_{\rho,h}:\cX\times\R\to[0,1]\,,\quad
 s_{\rho,h}(x,y)=G\!\inparen{\rho(x)y-h(x,y)}\,,\\[1mm]
 \rho:\cX\to[\rho_L,\rho_U]\,,\quad
 h:\cX\times\R\to[-C,C]
\end{array}
\right\}
\]
and take
\[
    \hyF_{\rm BF}\coloneqq \hyFlin\,,
    \qquad
    \hyQ_{\rm BF}\coloneqq\inbrace{\cN(0,1)}\,.
\]
Finally, let \(\hyD_{\rm BF}\) be any class of covariate distributions that
separate distinct linear functions in \(L_1(\cDX)\): for every
\(\cDX\in\hyD_{\rm BF}\) and every \(f,f'\in\hyFlin\),
$\cDX(\{x:f(x)\neq f'(x)\})>0$ 
whenever \(f\neq f'\) as linear functions. In particular, full-dimensional
support is sufficient.
\end{example}
The following lemma shows that \Cref{ex:bounded-friction-participation} is identifiable.
\begin{lemma}\label{lem:bounded_friction}
    The Cartesian-product class $\Theta_{\rm BF}\coloneqq\hyF_{\rm BF}\times\hyS_{\rm BF}\times\hyD_{\rm BF}\times\hyQ_{\rm BF}$ satisfies \cref{cond:identifiability:f}.
\end{lemma}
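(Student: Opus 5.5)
The plan is to verify \cref{cond:identifiability:f} directly. We show that if two admissible tuples in $\Theta_{\rm BF}$ with a common covariate law $\cDX$ have observed kernels that agree $(\cDX\otimes\nu)$-a.e., then their regressors agree pointwise on a set of full $\cDX$-measure. The mechanism is the upper tail in $y$: the bounded friction and the positive responsiveness force both selectors to tend to $1$, so only the Gaussian factors survive there. Gaussian densities with different means cannot have ratio tending to $1$.

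Concretely, fix $f(x)=w^\top x$ and $f'(x)=w'^\top x$, and selectors $s=s_{\rho,h}$ and $s'=s_{\rho',h'}$ in $\hyS_{\rm BF}$. Suppose
\[
G\!\inparen{\rho(x)y-h(x,y)}\,\varphi(y-w^\top x)=G\!\inparen{\rho'(x)y-h'(x,y)}\,\varphi(y-w'^\top x)\qquad(\cDX\otimes\nu)\text{-a.e.}
\]
By Fubini/Tonelli there is a set $A\subseteq\cX$ with $\cDX(A)=1$ such that, for each $x\in A$, the equality holds for all $y$ outside a $\nu$-null set $N_x$.

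Fix $x\in A$ and consider $y>0$. Since $\rho(x)\ge\rho_L>0$, $\abs{h}\le C$, and $G$ is nondecreasing, we get
\[
G(\rho_L y-C)\le G\!\inparen{\rho(x)y-h(x,y)}\le 1,
\]
and likewise for the primed selector. Because $G(t)\to1$ as $t\to\infty$, both selector values tend to $1$ as $y\to\infty$, uniformly in the unknown $h,h'$. In particular both are strictly positive for all large $y$, so along $y\notin N_x$, $y\to\infty$, we may divide to obtain
\[
\frac{\varphi(y-w^\top x)}{\varphi(y-w'^\top x)}=\exp\!\Big((w^\top x-w'^\top x)\,y-\tfrac12\big((w^\top x)^2-(w'^\top x)^2\big)\Big)\longrightarrow 1 .
\]
Since $N_x$ is null, its complement is unbounded above, so this limit statement is meaningful as an essential limit. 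The exponent is affine in $y$, so it can converge to $0$ only if its slope vanishes, giving $w^\top x=w'^\top x$.

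Hence $f=f'$ on $A$, that is, $\cDX$-a.e. By the defining property of $\hyD_{\rm BF}$, this forces $f=f'$ as linear functions. This is the contrapositive of \cref{cond:identifiability:f}, proving the lemma. Note that no identification of $\rho$ or $h$ is needed, and indeed they need not be identifiable.

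The main obstacle is the measure-theoretic care rather than the algebra. The kernels agree only almost everywhere and $h,h'$ are arbitrary bounded measurable functions, so pointwise limits are unavailable. This is handled by restricting to the full-measure set $A$ of good $x$ and taking limits along the co-null set of good $y$. The monotonicity of $G$ with the lower bound $\rho_L y-C$ supplies the uniform-in-$h$ convergence to $1$. That bound is exactly where the assumptions $\rho\ge\rho_L>0$ and bounded friction enter.
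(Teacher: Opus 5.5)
Your proposal is correct and follows essentially the same route as the paper. Both arguments fix a good $x$ via Fubini and use right-tail saturation $s(x,y),s'(x,y)\to 1$, which comes from $\rho\ge\rho_L>0$ and $\abs{h}\le C$. The Gaussian likelihood ratio $\exp\!\big((a-b)y-\tfrac{a^2-b^2}{2}\big)$ tending to $1$ then forces $f(x)=f'(x)$ for $\cDX$-a.e.\ $x$.

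Your explicit lower bound $G(\rho_L y-C)\le s(x,y)$ only makes the paper's limit step slightly more explicit. Your final appeal to the defining property of $\hyD_{\rm BF}$ is harmless but unnecessary, since $f=f'$ $\cDX$-a.e.\ is already the contrapositive of \cref{cond:identifiability:f}.
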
 
\begin{proof}
Let $\varphi$ denote the standard normal density. Suppose, toward a contradiction, that \cref{cond:identifiability:f} fails. Then there exist distinct $f,f'\in\hyFlin$ (differing on a set of positive $\cDX$-measure), selectors $s,s'\in\hyS_{\rm BF}$, and $\cDX\in\hyD_{\rm BF}$ such that the observed kernels agree \((\cDX\otimes\nu)\)-a.e.:
\begin{align*}
    s(x,y)\varphi\!\inparen{y-f(x)}
    =
    s'(x,y)\varphi\!\inparen{y-f'(x)}\,.
\end{align*}
By Fubini's theorem, for \(\cDX\)-a.e. \(x\), the equality holds for \(\nu\)-a.e. \(y\). Fix such an \(x\). Since $s,s'\in\hyS_{\rm BF}$, there are functions $\rho,\rho'$ and $h,h'$ with $\rho(x),\rho'(x)\geq\rho_L>0$ and $|h(x,y)|,|h'(x,y)|\leq C$. Therefore
\[
    \rho(x)y-h(x,y)\to\infty
    \quad\text{and}\quad
    \rho'(x)y-h'(x,y)\to\infty
    \qquad \text{as } y\to\infty\,,
\]
so $s(x,y)\to1$ and $s'(x,y)\to1$. In particular, $s(x,y)>0$ for all sufficiently large $y$. For such $y$, since the displayed equality holds for \(\nu\)-a.e. \(y\), it holds along a sequence
\(y_n\to\infty\):
\[
    \frac{\varphi\!\inparen{y-f(x)}}{\varphi\!\inparen{y-f'(x)}}
    =
    \frac{s'(x,y)}{s(x,y)}
    \rightarrow 1\,.
\]
Writing $a=f(x)$ and $b=f'(x)$, the Gaussian likelihood ratio equals
\[
    \frac{\varphi(y-a)}{\varphi(y-b)}
    =
    \exp\!\inparen{(a-b)y-\frac{a^2-b^2}{2}}\,.
\]
This expression converges to $1$ as $y\to\infty$ only if $a=b$. Hence \(f(x)=f'(x)\) for \(\cDX\)-a.e. \(x\), contradicting \(\cDX(\{x:f(x)\neq f'(x)\})>0\). Therefore \cref{cond:identifiability:f} holds.
\end{proof}

\paragraph{Why this goes beyond scalar-index selection corrections and global monotonicity.}
The proof uses only the right-tail saturation $s(x,y)\to1$, not a scalar-index restriction on the selected conditional mean. Thus the example contains instances that are identifiable by \cref{thm:characterization:f} but violate the sufficient conditions in \cref{def:dvn-id-restrictions}. To see this concretely, take a symmetric link satisfying $G(-t)=1-G(t)$, such as the logistic link or the Gaussian CDF $\Phi$, let $Y\sim\cN(0,1)$ be independent of $X$ and $f\equiv0$. Suppose $C>0$ and that some $\cD_X\in\hyD_{\rm BF}$ assigns positive probability to two disjoint measurable sets $A_0,A_1\subseteq\cX$. Fix $\rho\in[\rho_L,\rho_U]$ and $\epsilon\in(0,C]$, and define
\[
    s(x,y)=G\!\inparen{\rho y-\epsilon\bigl(\mathds{1}\{x\in A_0\}-\mathds{1}\{x\in A_1\}\bigr)\tanh(y)}
\]
This selector belongs to $\hyS_{\rm BF}$. Next, we can show that the propensity is equal on both groups. Namely, write
\[
    z_0(y)\coloneqq \rho y-\epsilon\tanh y\,,
    \qquad
    z_1(y)\coloneqq \rho y+\epsilon\tanh y\,.
\]
Since $\tanh$ is odd, both $z_0$ and $z_1$ are odd:
\[
    z_j(-y)=-z_j(y)\,,
    \qquad j\in\{0,1\}\,.
\]
Using the symmetry of the link, $G(-t)=1-G(t)$, and the symmetry of the
standard Gaussian density, $\varphi(-y)=\varphi(y)$, we obtain
\[
\begin{aligned}
    p(x)
    &=
    \E[s(x,Y)]
     =
    \int_{-\infty}^{\infty} G(z_j(y))\varphi(y)\,\d y \\
    &=
    \int_0^\infty
    \Big\{
        G(z_j(y))\varphi(y)
        +
        G(z_j(-y))\varphi(-y)
    \Big\}\,\d y \\
    &=
    \int_0^\infty
    \Big\{
        G(z_j(y))
        +
        G(-z_j(y))
    \Big\}\varphi(y)\,\d y \\
    &=
    \int_0^\infty \varphi(y)\,\d y
     =
    \frac12\,.
\end{aligned}
\]
This holds for every \(x \in A_j\). Therefore,

\[ p(x)=\frac12 \qquad \text{for every }x\in A_0\cup A_1 \,. \]

Even though the propensity scores are equal, the selected mean biases differ. Indeed, for every $x_0 \in A_0$, $x_1 \in A_1$,
\[
    \E\!\insquare{Y s(x_1,Y)}-\E\!\insquare{Y s(x_0,Y)}
    =
    \E\!\insquare{Y\Big(G(\rho Y+\epsilon\tanh Y)-G(\rho Y-\epsilon\tanh Y)\Big)}
    >0\,,
\]
whenever $G$ is strictly increasing. The inequality follows because the integrand is an even nonnegative function and is strictly positive on a set of positive measure. More precisely, write
\[
    z_+(y)\coloneqq \rho y+\epsilon\tanh y\,,
    \qquad
    z_-(y)\coloneqq \rho y-\epsilon\tanh y\,.
\]
Since $\tanh$ is odd, $z_+(-y)=-z_+(y)$ and $z_-(-y)=-z_-(y)$.
Using the symmetry $G(-t)=1-G(t)$,
\[
\begin{aligned}
    G(z_+(-y))-G(z_-(-y))
    &=
    G(-z_+(y))-G(-z_-(y)) \\
    &=
	    (1-G(z_+(y)))-(1-G(z_-(y))) \\
    &=
    -(G(z_+(y))-G(z_-(y)))\,.
\end{aligned}
\]
Therefore the above term is odd in $y$. Multiplying by $y$, which is also
odd, and by the Gaussian density $\varphi(y)$, which is even, shows that
\[
    y(G(z_+(y))-G(z_-(y)))\varphi(y)
\]
is even.

Moreover, $z_+(y)-z_-(y)=2\epsilon\tanh y$. Hence, for $y>0$, the bracketed
term is nonnegative, while for $y<0$ it is nonpositive. Multiplication by $y$
therefore makes the whole integrand nonnegative. If $G$ is strictly increasing
and $\epsilon>0$, the integrand is strictly positive for $y\neq0$, and hence
strictly positive on a set of positive Gaussian measure.

Since \(p(x)=1/2\) on both \(A_0\) and \(A_1\) but the selected mean bias differs between the two groups, the bias cannot be written as a function of \(p(x)\) alone. Therefore, there is no function $\lambda$ for which
\[
    \E[Y\mid X=x,D=1]=f(x)+\lambda(p(x))
\]
holds \(\cD_X\)-a.e. on \(A_0\cup A_1\). Thus the Das--Vella--Newey scalar-index condition fails, even though \cref{lem:bounded_friction} verifies our identification condition.

The same construction also illustrates why monotonicity of participation in the realized payoff is not necessary for identification. If $G$ is differentiable and strictly increasing and $\epsilon>\rho$, then the restriction of the selector to $A_0$ is locally decreasing at $y=0$, because the derivative of its index is $\rho-\epsilon\operatorname{sech}^2(y)$, which is negative at zero. Hence local hurdles can violate monotone-entry restrictions of the kind often imposed in auction-entry models \citep[e.g.,][]{gentry_li_2014}, while the regression function remains identifiable under \cref{cond:identifiability:f}.

\subsection{Characterizations of Related Models: Extended Roy and Auctions with Entry Costs}
    \label{sec:app:otherModels}

We now apply our framework to two well-studied econometric models: the extended Roy model and auctions with entry costs. For these models, previously only sufficient conditions for identification were known.
In both cases, each admissible model induces an instance of \cref{def:selection_bias_model}, and our characterization (\cref{cond:identifiability:f}) immediately yields necessary and sufficient conditions for identifiability over the resulting joint concept class, provided its noise components admit densities.

\subsubsection{Extended Roy Model}
\label{sec:extended-roy}
Roy-style models describe self-selection into occupations or treatments when agents choose among alternatives based on latent potential outcomes \citep{roy1951earnings}. A key difficulty in extended versions, such as that of \citet{heckman_sedlacek_1985}, is that the selection decision depends on unobserved components of utility, so only the chosen outcome is informative for learning the corresponding regression function. A large literature studies identification under additional structure; see, \eg{}, \citep{lee_park_2023} and references therein.
We work with the following formulation by \citet{lee_park_2023}. 

\newcommand{\cU}{\mathcal{U}}
\begin{definition}[Extended Roy model \citep{lee_park_2023}]
\label{def:extended-roy}
Let $X\in\cX\subseteq\R^d$ be observed covariates.
There are two latent potential outcomes,
\[
    Y_1^\star \coloneqq f_1^\star(X)+\xi_1
    \qquadand 
    Y_0^\star \coloneqq f_0^\star(X)+\xi_0 \,,
\]
where $f_1^\star,f_0^\star:\cX\to\R$ are unknown regression functions and $(\xi_1,\xi_0)$ are unobserved disturbances independent of $X$.
Each alternative $d\in\zo$ yields utility $\mathcal{U}_d\sinparen{X,Y_d^\star,\eta_d}$, where $\eta_d$ is an unobserved component (\eg{}, amenities or match quality).
The agent chooses
\[
     D\coloneqq \argmax\nolimits_{d\in\zo}\mathcal{U}_d\sinparen{X,Y_d^\star,\eta_d}\,,
\]
and the econometrician observes the chosen outcome $Y \coloneqq Y_D^\star$ together with $(X,D)$.
\end{definition}
The model naturally decomposes into two regression problems: learning $f_1^\star$ from the subsample with $D=1$, and learning $f_0^\star$ from the subsample with $D=0$. We show that each reduces to an instance of our sample-selection model.

\begin{proposition}[Extended Roy reduces to sample-selection regression]
\label{prop:roy-embed}
Consider the extended Roy model in \cref{def:extended-roy}. Define the regime-$1$ censored outcome
$Y^{(1)}\coloneqq Y$ if $D=1$ and $Y^{(1)}\coloneqq \emptyset$ otherwise. Then $(X,Y^{(1)})$ is an instance of
\cref{def:selection_bias_model} with regression function $f_1^\star$, covariate law $\cD_X$ equal to the marginal
distribution of $X$, noise law $\cQ_1^\star$ equal to the law of $\xi_1$, and selection function
\[s_1^\star(x,y)\coloneqq \Pr(D=1\mid X=x,\ Y_1^\star=y)\,.\]
An analogous statement holds for regime~$0$.
\end{proposition}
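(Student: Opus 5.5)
The plan is to show that the law of $(X,Y^{(1)})$ matches the censored law $\cC_{f_1^\star,s_1^\star,\cD_X,\cQ_1^\star}$ from \cref{def:selection_bias_model}. Two distributions on $\cX\times(\R\cup\{\emptyset\})$ coincide once their $X$-marginals agree and, for $\cD_X$-a.e.\ $x$, their conditional laws of the second coordinate agree. The $X$-marginal agreement is immediate, since $X$ is always observed and has law $\cD_X$ in both descriptions. For the conditional part it is enough to match the subprobability measure $B\mapsto\Pr(D=1,\,Y_1^\star\in B\mid X=x)$ on $\R$, because the mass at $\emptyset$ is then determined as one minus the total mass.

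First I will construct a measurable version of $s_1^\star$. The pair $(X,Y_1^\star)$ takes values in a standard Borel space, so a regular conditional expectation $s_1^\star(x,y)\coloneqq\E[\mathds{1}\{D=1\}\mid X=x,Y_1^\star=y]$ exists. This version is jointly measurable with values in $[0,1]$, so $s_1^\star$ is a legitimate selector. In the model, $Y_1^\star=f_1^\star(X)+\xi_1$ with $\xi_1\perp X$ and $\xi_1\sim\cQ_1^\star$. Hence $(X,Y_1^\star)$ has exactly the joint law of step~1 of \cref{def:selection_bias_model} with $(f_1^\star,\cD_X,\cQ_1^\star)$. The dependence of $\xi_1$ on $(\xi_0,\eta_0,\eta_1)$ is absorbed into $s_1^\star$, which is the point of the reduction.

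Next I will compute, for measurable $A\subseteq\cX$ and $B\subseteq\R$, the quantity $\Pr(X\in A,\,D=1,\,Y\in B)$. On $\{D=1\}$ we have $Y=Y_1^\star$, so this equals $\Pr(X\in A,\,Y_1^\star\in B,\,D=1)$. By the tower property this is $\E[\mathds{1}\{X\in A,\,Y_1^\star\in B\}\,s_1^\star(X,Y_1^\star)]$, which is the same quantity produced by step~2 of \cref{def:selection_bias_model}. There one draws $U$ independently and sets $D'=\mathds{1}\{U\le s_1^\star(X,Y_1^\star)\}$, giving $\Pr(D'=1\mid X,Y_1^\star)=s_1^\star(X,Y_1^\star)$. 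The $\emptyset$ events then match by complement: $\Pr(X\in A,\,Y^{(1)}=\emptyset)=\cD_X(A)-\Pr(X\in A,\,D=1)$. A $\pi$-system argument on the generating rectangles extends agreement to all measurable sets.

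The only subtle step is the coupling. The real $D$ is not generated by an independent uniform, yet only the joint law of $(X,Y_1^\star,D)$ on the observed coordinates matters. So I will argue at the level of distributions and make no pathwise identification. Regime~$0$ is symmetric: replace $D=1$ by $D=0$, $f_1^\star$ by $f_0^\star$, and $\xi_1$ by $\xi_0$, and use $s_0^\star(x,y)=\Pr(D=0\mid X=x,\,Y_0^\star=y)$. Ties in the argmax are handled by any fixed measurable tie-breaking rule, which does not affect the argument.
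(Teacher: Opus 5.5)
Your proposal is correct and follows essentially the same route as the paper: take $s_1^\star$ to be the conditional selection probability given $(X,Y_1^\star)$, and note that replacing $D$ by the independent-uniform threshold $\mathds{1}\{U\le s_1^\star(X,Y_1^\star)\}$ leaves the law of the observed data unchanged. Your version spells out steps the paper only asserts as ``observational equivalence'' — the measurable version of $s_1^\star$, the rectangle computation via the tower property, the complement step for $\emptyset$, and the $\pi$-system extension — and it skips the paper's representation $D=\phi(X,Y_1^\star,W)$, which the argument does not need.
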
 
\begin{proof}[Proof of \cref{prop:roy-embed}]
    
\cref{def:extended-roy} defines two latent outcome equations, $Y_1^\star=f_1^\star(X)+\xi_1$ and $Y_0^\star=f_0^\star(X)+\xi_0$, and an occupation choice rule that compares the two utilities. This naturally gives rise to two regression problems: learning $f_1^\star$ from the subsample with $D=1$, and learning $f_0^\star$ from the subsample with $D=0$. Without loss of generality, we focus on learning $f_1^\star$.

To make the reduction explicit, write the structural choice rule as
\[
D
=
\mathds{1}\!\inbrace{
\cU_1\sinparen{X,Y_1^\star,\eta_1}
\geq 
\cU_0\sinparen{X,Y_0^\star,\eta_0}
}\,,
\qquad
\text{where}\quad
Y_0^\star=f_0^\star(X)+\xi_0\,.
\]
For fixed $(X,Y_1^\star)$, all remaining randomness in the decision is carried by the latent variables
$W\coloneqq (\xi_0,\eta_0,\eta_1)$, so there exists a measurable function $\phi$ such that
\[
D=\phi\!\inparen{X,Y_1^\star,W}
\qquad
\text{with}\quad
\phi(x,y,w)\in\zo\,.
\]
This implies that the selection rule for regime $1$, viewed as a function of $(X,Y_1^\star)$, is summarized by the conditional selection probability
\[
s_1^\star(x,y)
\coloneqq
\Pr\!\inparen{D=1 \mid X=x,\ Y_1^\star=y}
=
\E\!\insquare{\phi\!\inparen{x,y,W}\mid X=x,\ Y_1^\star=y}\,.
\]
\noindent Introduce an auxiliary $U\sim \mathrm{Unif}\inparen{0,1}$ independent of all model variables and define
$\wt D\coloneqq \mathds{1}\sinbrace{U \le s_1^\star\!\inparen{X,Y_1^\star}}$.
Then $\Pr\sinparen{\wt D=1\mid X,Y_1^\star}=s_1^\star(X,Y_1^\star)=\Pr\sinparen{D=1\mid X,Y_1^\star}$, so $\wt D$ is observationally equivalent to $D$ once we condition on $(X,Y_1^\star)$.
Consequently, the regime-$1$ learning problem is an instance of \cref{def:selection_bias_model} with latent outcome $Y_1^\star=f_1^\star(X)+\xi_1$, covariate law $\cD_X$ equal to the marginal distribution of $X$, noise law $\cQ_1^\star$ equal to the law of $\xi_1$, and selection function $s_1^\star(x,y)$.

Concretely, from the observed triples $(X,D,Y)$ one forms the censored outcome $Y^{(1)}\coloneqq Y$ if $D=1$ and $Y^{(1)}\coloneqq \emptyset$ otherwise; then $(X,Y^{(1)})$ has exactly the censoring structure in \cref{def:selection_bias_model}. The reduction for learning $f_0^\star$ is symmetric.
\end{proof} 
\noindent Once the extended Roy model is embedded via \cref{prop:roy-embed}, identifiability of $f_1^\star$ (or $f_0^\star$) is characterized by \cref{cond:identifiability:f} applied to the corresponding joint concept class of admissible regime-specific parameter tuples. Recent work by \citet{lee_park_2023} provides sufficient conditions for identification in this model under monotonicity requirements on how certain instruments shift the selection rule. Our characterization does not require such monotonicity in general. It would be interesting to explore whether this yields identification in empirically relevant regimes not covered by the conditions of \citet{lee_park_2023}.

\subsubsection{Auctions with Entry Costs}
\label{sec:auctions-entry-costs}
 
In auctions with endogenous participation, a potential bidder must incur a cost before submitting a bid, so the auctioneer observes bids only from entrants. This creates sample selection bias when using observed bids to infer bidder values; see, \eg{}, \citep{samuelson1985entry,mcafee1987entry,levin1994entry,gentry_li_2014,gentry2017selective}.

\begin{definition}[Auctions with entry costs]
\label{def:auction-entry-costs}
Fix a feature space $\cX\subseteq\R^d$ describing bidder and/or auction covariates.
A (bidder, auction) instance has covariates $X\in\cX$ and a latent value
$Y^\star \coloneqq f^\star(X)+\xi$, where $f^\star:\cX\to\R$ is the target regression function and $\xi$ is an unobserved disturbance independent of $X$.
The bidder enters if and only if $D=1$, where $D\in\zo$ is an entry indicator that may depend on $(X,Y^\star)$ (\eg{}, through an entry rule of the form $\mathds{1}\!\sinbrace{Y^\star \geq c(X,Y^\star)}$).
The econometrician observes \iid{} samples of $(X,Y)$, where $(X,Y)\coloneqq (X,Y^\star)$ if $D=1$ and $(X,Y)\coloneqq (X,\emptyset)$ if $D=0$.
\end{definition}
This is a direct instance of \cref{def:selection_bias_model}: the latent value $Y^\star = f^\star(X)+\xi$ plays the role of the outcome, and the entry decision induces a selection function $s^\star(x,y) \coloneqq \Pr(D=1 \mid X=x, Y^\star=y)$. Once one specifies a joint concept class of admissible parameter tuples, identifiability is characterized by \cref{cond:identifiability:f}.

In this setting, \citet{gentry_li_2014} provide sufficient conditions for identification using exclusion restrictions and monotonicity conditions that link entry signals to values. Our characterization weakly generalizes such conditions; understanding when this generalization is meaningful in empirically relevant auction environments is an interesting direction for future work.

\subsection{Instrumental Variables and the Estimation Condition}
\label{sec:iv-estimation}
This section gives a sufficient, quantitative version of the instrumental-variable intuition behind semi-parametric selection estimators such as \citet{Ahn1993semiparametric}. We ask when an index restriction, combined with instrumental variable assumptions, forces two separated linear regressors to induce separated observed-data laws. In the language of \cref{cond:estimation:f}, this yields an explicit rate function.
For notational simplicity we state the result for linear regressors
\[
    \hyF_{\rm IV}
    \coloneqq
    \inbrace{f_\beta(x)=\inangle{\beta,x}:\beta\in\R^d}\,.
\]
Let $\Theta_{\rm IV}\subseteq\hyF_{\rm IV}\times\hyS_{\rm IV}\times\hyD_{\rm IV}\times\hyQ_{\rm IV}$ be a joint concept class of admissible parameter tuples.
The same statement applies when the observed covariates decompose as $x=(w,z)$, only $w$ enters the outcome equation, and $z$ plays the role of an excluded instrument: replace $x$ below by the outcome regressors $w$ and let the selection index depend on the full vector $(w,z)$.
Given a candidate tuple $\pi=(f_\beta,s,\cQ)$ and a covariate law $\cD$, define its observed kernel
\[
    k_\pi(x,y)
    \coloneqq
    s(x,y)\,\cQ (y-f_\beta(x))\,.
\]
Thus
\[
    p_\pi(x)
    \coloneqq
    \int k_\pi(x,y)\,\d y
    = \Pr_\pi(D=1\mid X=x)
\]
is the propensity score induced by $\pi$, and
\[
    \mu_\pi(x)
    \coloneqq
    \E_\pi[Y\mid X=x,D=1]
    =
    \frac{\int y\,k_\pi(x,y)\,\d y}{p_\pi(x)}
\]
is the conditional mean of observed outcomes. The following assumptions are analogous to the usual index and rank restrictions seen in \citet{das2003nonparametric,Ahn1993semiparametric}.
\begin{assumption}[Instrumental Variable Restrictions]
\label{asmp:iv-estimation}
Fix constants $\eta\in(0,1]$, $M<\infty$, $\rho>0$, $c_{\rm iv}>0$. For every ordered pair $\theta=(f_\beta,s,\cD,\cQ)$ and $\theta'=(f_{\beta'},s',\cD',\cQ')$ in $\Theta_{\rm IV}$, writing $\pi=(f_\beta,s,\cQ)$ and $\pi'=(f_{\beta'},s',\cQ')$, the following hold with $X\sim\cD$.
\begin{enumerate}[leftmargin=22pt,itemsep=2pt]
    \item \textbf{Index restriction.}  There are functions $\lambda_\pi,\lambda_{\pi'}:[\eta,1]\to\R$ such that, for $\cD$-a.e. $x$,
    \[
        \mu_\pi(x)=\inangle{\beta,x}+\lambda_\pi\!\inparen{p_\pi(x)}
        \quadand
        \mu_{\pi'}(x)=\inangle{\beta',x}+\lambda_{\pi'}\!\inparen{p_{\pi'}(x)}\,.
    \]
    \item \textbf{Overlap.}  The propensities are uniformly bounded away from zero:
    \[
        p_\pi(x)\geq \eta
        \quadand
        p_{\pi'}(x)\geq \eta
        \qquad\text{for $\cD$-a.e. }x\,.
    \]
    \item \textbf{Moment bound.}  The selected second moments are uniformly bounded:
    \[
        \iint y^2 k_\pi(x,y)\,\d y\,\d\cD(x)\leq M
        \quadand
        \iint y^2 k_{\pi'}(x,y)\,\d y\,\d\cD(x)\leq M\,.
    \]
    In addition, the covariate first moment is uniformly bounded:
    \[
        L_{\rm IV}
        \coloneqq
        \sup_{\cD\in\hyD_{\rm IV}}
        \E_{X\sim\cD}\norm{X}_2
        <\infty\,.
    \]
    \item \textbf{Instrument rank.}  Let $P_\pi\coloneqq p_\pi(X)$ and $P_{\pi'}\coloneqq p_{\pi'}(X)$.  For a conditional law of $X$ given $(P_\pi,P_{\pi'})=(u,v)$, write
    \[
        \Sigma_{\pi,\pi'}(u,v)
        \coloneqq
        \cov\!\inparen{X\mid P_\pi=u,\ P_{\pi'}=v}\,.
    \]
    For almost every $(u,v)$ in the support of $(P_\pi,P_{\pi'})$,
    \[
        \lambda_{\min}\!\inparen{\Sigma_{\pi,\pi'}(u,v)}\geq \rho\,.
    \]
    \item \textbf{Directional anti-concentration.}  If $X_1,X_2$ are independent draws from the same conditional law of $X$ given $(P_\pi,P_{\pi'})=(u,v)$, then for every $a\in\R^d$ and almost every $(u,v)$,
    \[
        \E\!\insquare{\abs{a^\top(X_1-X_2)}\mid P_\pi=u,\ P_{\pi'}=v}
        \geq
        c_{\rm iv}\,\norm{a}_2\sqrt{\lambda_{\min}\!\inparen{\Sigma_{\pi,\pi'}(u,v)}}\,.
    \]
\end{enumerate}
\end{assumption}
The index restriction makes the selection bias constant after conditioning on the propensity score. The overlap and moment bound assumptions are necessary to obtain an explicit rate function. The rank and anti-concentration assumptions say that, after conditioning on the selection indices, the regressors still contain nontrivial variation in every direction. In the following theorem, we show that these assumptions are sufficient to satisfy \Cref{cond:estimation:f} with an explicit rate function.
\begin{theorem}[Instrumental Variables Imply Estimation]
\label{thm:iv-estimation-condition}
Suppose \Cref{asmp:iv-estimation} holds uniformly over $\Theta_{\rm IV}$ and let
\[
    C_{\eta,M}\coloneqq \frac{\sqrt{2M}}{\eta}+\frac{\sqrt{M}}{\eta^2}\,.
\]
Then $\Theta_{\rm IV}$ satisfies \cref{cond:estimation:f} with rate
\[
    R_{\rm IV}(\eps)
    \coloneqq \frac{c_{\rm iv}^2\rho}{4L_{\rm IV}^2 C_{\eta,M}^2}\,\eps^2\,.
\]
\end{theorem}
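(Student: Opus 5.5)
The plan is to prove the stronger statement $d_{\cD}(\pi,\pi')\geq R_{\rm IV}(\eps)$ whenever $\Ex_{X\sim\cD}\abs{f_\beta(X)-f_{\beta'}(X)}\geq\eps$. The term $\tv{\cD}{\cD'}$ is nonnegative and is simply dropped. Every item of \cref{asmp:iv-estimation} is stated with $X\sim\cD$, so the whole argument runs under the single covariate law $\cD$. Write $\Delta\coloneqq d_{\cD}(\pi,\pi')=\iint\abs{k_\pi-k_{\pi'}}\,\d y\,\d\cD(x)$ and $a\coloneqq\beta-\beta'$.

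\emph{Step 1: bound the propensity and numerator gaps by $\Delta$.} Let $N_\pi(x)\coloneqq\int y\,k_\pi(x,y)\,\d y$, so that $\mu_\pi=N_\pi/p_\pi$. Integrating $\abs{k_\pi-k_{\pi'}}$ in $y$ gives $\Ex\abs{p_\pi-p_{\pi'}}\leq\Delta$. For the numerators, apply Cauchy--Schwarz to $\abs{y}\,\abs{k_\pi-k_{\pi'}}=\abs{y}\abs{k_\pi-k_{\pi'}}^{1/2}\cdot\abs{k_\pi-k_{\pi'}}^{1/2}$, and use $\abs{k_\pi-k_{\pi'}}\leq k_\pi+k_{\pi'}$ together with the moment bound. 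This gives
\[
\Ex\abs{N_\pi-N_{\pi'}}\leq\sqrt{2M}\sqrt{\Delta}.
\]

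\emph{Step 2: bound the gap in selected means.} Decompose
\[
\mu_\pi-\mu_{\pi'}=\frac{N_\pi-N_{\pi'}}{p_\pi}+N_{\pi'}\,\frac{p_{\pi'}-p_\pi}{p_\pi p_{\pi'}}.
\]
By overlap, the first term contributes at most $\sqrt{2M\Delta}/\eta$. For the second term, use $\abs{p_\pi-p_{\pi'}}\leq\abs{p_\pi-p_{\pi'}}^{1/2}$ (both propensities lie in $[0,1]$). Then Cauchy--Schwarz together with $N_{\pi'}(x)^2\leq p_{\pi'}(x)\int y^2k_{\pi'}(x,y)\,\d y$ gives $\Ex\bigl[\abs{N_{\pi'}}\abs{p_\pi-p_{\pi'}}^{1/2}\bigr]\leq\sqrt{M}\sqrt{\Delta}$. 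Dividing by $\eta^2$ and combining the two terms yields
\[
\Ex\abs{\mu_\pi-\mu_{\pi'}}\leq C_{\eta,M}\sqrt{\Delta}.
\]

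\emph{Step 3: cancel the correction terms and conclude.} By the index restriction, $\mu_\pi(X)-\mu_{\pi'}(X)=a^\top X+g(P_\pi,P_{\pi'})$ with $g\coloneqq\lambda_\pi(P_\pi)-\lambda_{\pi'}(P_{\pi'})$. Take $X_1,X_2$ conditionally i.i.d.\ given $(P_\pi,P_{\pi'})$; marginally each has law $\cD$. The $g$ terms cancel in the difference, so the triangle inequality gives $\Ex\abs{a^\top(X_1-X_2)}\leq 2\Ex\abs{\mu_\pi-\mu_{\pi'}}\leq 2C_{\eta,M}\sqrt\Delta$. Directional anti-concentration and the rank condition lower-bound the left side by $c_{\rm iv}\norm{a}_2\sqrt\rho$. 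On the other side, $\eps\leq\Ex\abs{a^\top X}\leq\norm{a}_2L_{\rm IV}$. Chaining these,
\[
c_{\rm iv}\sqrt\rho\,\eps/L_{\rm IV}\leq 2C_{\eta,M}\sqrt\Delta,
\]
i.e.\ $\Delta\geq R_{\rm IV}(\eps)$.

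The main obstacle is the rigorous setup of Step 3. One must build the conditional coupling from a regular conditional distribution of $X$ given $(P_\pi,P_{\pi'})$. One must check that the index restriction, which holds $\cD$-a.e., transfers to $(X_1,X_2)$ almost surely. One must also check that the integrability guaranteed by the moment bounds justifies every expectation used. The algebra in Steps 1--2 is routine by comparison.
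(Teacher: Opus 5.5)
Your proposal is correct and follows essentially the same route as the paper. In the first part, the kernel discrepancy controls the propensity and numerator gaps via Cauchy--Schwarz and the moment bound, giving $\Ex_{X\sim\cD}\abs{\mu_\pi(X)-\mu_{\pi'}(X)}\leq C_{\eta,M}\,d_\cD(\pi,\pi')^{1/2}$. In the second, symmetrizing within cells of $(P_\pi,P_{\pi'})$ cancels the correction terms, and rank, anti-concentration, and $\eps\leq L_{\rm IV}\norm{\beta-\beta'}_2$ then yield the stated rate; your explicit bound $\abs{p_\pi-p_{\pi'}}\leq\abs{p_\pi-p_{\pi'}}^{1/2}$ just spells out what the paper leaves implicit in its ``second application of Cauchy--Schwarz.''
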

    \begin{proof} 
    Fix an ordered pair $\theta=(f_\beta,s,\cD,\cQ)$ and $\theta'=(f_{\beta'},s',\cD',\cQ')$ in $\Theta_{\rm IV}$, and write $\pi=(f_\beta,s,\cQ)$ and $\pi'=(f_{\beta'},s',\cQ')$. For notational convenience, define
    \[
        p_\pi(x)
        \coloneqq
        \int k_\pi(x,y)\,\d y\,,
        \qquad
        a_\pi(x)
        \coloneqq
        \int y k_\pi(x,y)\,\d y\,,
        \qquad
        \mu_\pi(x)
        \coloneqq
        \frac{a_\pi(x)}{p_\pi(x)}\,.
    \]
    \paragraph{Step 1: the observed kernel controls the observed conditional mean.}
    For \(\cD\)-a.e. \(x\), the overlap condition gives
    \(p_\pi(x),p_{\pi'}(x)\geq\eta > 0\), and for such \(x\),
    \begin{align*}
        \mu_\pi(x)-\mu_{\pi'}(x)
        &=
        \frac{a_\pi(x)}{p_\pi(x)}
        -
        \frac{a_{\pi'}(x)}{p_{\pi'}(x)} \\
        &=
        \frac{a_\pi(x)}{p_\pi(x)}
        -
        \frac{a_{\pi'}(x)}{p_\pi(x)}
        +
        \frac{a_{\pi'}(x)}{p_\pi(x)}
        -
        \frac{a_{\pi'}(x)}{p_{\pi'}(x)} \\
        &=
        \frac{a_\pi(x)-a_{\pi'}(x)}{p_\pi(x)}
        +
        a_{\pi'}(x)
        \left(
            \frac{1}{p_\pi(x)}
            -
            \frac{1}{p_{\pi'}(x)}
        \right) \\
        &=
        \frac{a_\pi(x)-a_{\pi'}(x)}{p_\pi(x)}
        +
        a_{\pi'}(x)
        \frac{p_{\pi'}(x)-p_\pi(x)}
             {p_\pi(x)p_{\pi'}(x)}\,.
    \end{align*}
    Therefore, using the triangle inequality and the overlap condition
    $p_\pi(x),p_{\pi'}(x)\geq \eta$,
    \begin{align*}
        \abs{\mu_\pi(x)-\mu_{\pi'}(x)}
        &\leq
        \frac{\abs{a_\pi(x)-a_{\pi'}(x)}}{p_\pi(x)}
        +
        \abs{a_{\pi'}(x)}
        \frac{\abs{p_\pi(x)-p_{\pi'}(x)}}{p_\pi(x)p_{\pi'}(x)} \\
        &\leq
        \frac{\abs{a_\pi(x)-a_{\pi'}(x)}}{\eta}
        +
        \frac{\abs{a_{\pi'}(x)}\abs{p_\pi(x)-p_{\pi'}(x)}}{\eta^2}\,.
    \end{align*}
    For the first term, Cauchy--Schwarz and the moment bound give
    \begin{align*}
        \int \abs{a_\pi(x)-a_{\pi'}(x)}\,\d\cD(x)
        &\leq
        \iint \abs{y}\abs{k_\pi(x,y)-k_{\pi'}(x,y)}\,\d y\,\d\cD(x)\\
        &\leq
        \inparen{\iint y^2\abs{k_\pi(x,y)-k_{\pi'}(x,y)}\,\d y\,\d\cD(x)}^{1/2}
        d_\cD(\pi,\pi')^{1/2}\\
        &\leq
        \sqrt{2M}\, d_\cD(\pi,\pi')^{1/2}\,.
    \end{align*}
    For the second term, consider that
    \[
        p_\pi(x)-p_{\pi'}(x)
        =
        \int \left(k_\pi(x,y)-k_{\pi'}(x,y)\right)\,\d y\,.
    \]
    Thus, by the triangle inequality,
    \[
        \abs{p_\pi(x)-p_{\pi'}(x)}
        \leq
        \int
        \abs{k_\pi(x,y)-k_{\pi'}(x,y)}
        \,\d y\,.
    \]
    Integrating over $x\sim\cD$ gives
    \[
    \begin{aligned}
        \int \abs{p_\pi(x)-p_{\pi'}(x)}\,\d\cD(x)
        &\leq
        \iint
        \abs{k_\pi(x,y)-k_{\pi'}(x,y)}
        \,\d y\,\d\cD(x) 
        = d_\cD(\pi,\pi')\,.
    \end{aligned}
    \]
    Next, observe that
    \begin{align*}
        a_{\pi'}(x)^2
        =
        \left(
            \int y \cdot k_{\pi'}(x,y)\,\d y
        \right)^2  &\leq
        \left(
            \int y^2 k_{\pi'}(x,y)\,\d y
        \right)
        \left(
            \int k_{\pi'}(x,y)\,\d y
        \right) \\
        &=
        p_{\pi'}(x)
        \int y^2 k_{\pi'}(x,y)\,\d y \\
        &\leq
        \int y^2 k_{\pi'}(x,y)\,\d y\,.
    \end{align*}
    A second application of Cauchy--Schwarz therefore yields
    \[
        \int \abs{a_{\pi'}(x)}\abs{p_\pi(x)-p_{\pi'}(x)}\,\d\cD(x)
        \leq
        \sqrt{M}\,d_\cD(\pi,\pi')^{1/2}\,.
    \]
    Combining these bounds gives
    \begin{equation}
    \label{eq:iv-upper-mean}
        \E_{X\sim\cD}\!\insquare{\abs{\mu_\pi(X)-\mu_{\pi'}(X)}}
        \leq
        C_{\eta,M}\,d_\cD(\pi,\pi')^{1/2}\,.
    \end{equation}
    \paragraph{Step 2: the index restriction converts regressor separation into mean separation.}
    Let $G(x)\coloneqq \mu_\pi(x)-\mu_{\pi'}(x)$ and condition on $(P_\pi,P_{\pi'})=(u,v)$. If $X_1,X_2$ are independent draws from this conditional law, then the index restriction implies
    \[
        G(X_1)-G(X_2)
        =
        \inangle{\beta-\beta',X_1-X_2}\,,
    \]
    because the two selection-correction terms $\lambda_\pi(u)$ and $\lambda_{\pi'}(v)$ cancel within the cell. Hence, by symmetrization,
    \begin{align*}
        \E\!\insquare{\abs{G(X)}}
        \geq
        \frac12\,\E\!\insquare{\abs{G(X_1)-G(X_2)}}
        =
        \frac12\,\E\!\insquare{\abs{\inangle{\beta-\beta',X_1-X_2}}}
        \geq
        \frac12 c_{\rm iv}\sqrt{\rho}\,\norm{\beta-\beta'}_2\,,
    \end{align*}
    where the last inequality uses the rank and anti-concentration assumptions.  If the regressors are $\eps$-separated, then
    \[
        \eps
        \leq
        \E\!\insquare{\abs{\inangle{\beta-\beta',X}}}
        \leq
        L_{\rm IV}\norm{\beta-\beta'}_2\,,
    \]
    and therefore
    \begin{equation}
    \label{eq:iv-lower-mean}
        \E_{X\sim\cD}\!\insquare{\abs{\mu_\pi(X)-\mu_{\pi'}(X)}}
        \geq
        \frac{c_{\rm iv}\sqrt{\rho}}{2L_{\rm IV}}\,\eps\,.
    \end{equation}
    Combining \eqref{eq:iv-upper-mean} and \eqref{eq:iv-lower-mean} gives
    \[
        d_\cD(\pi,\pi')
        \geq
        \frac{c_{\rm iv}^2\rho}{4L_{\rm IV}^2C_{\eta,M}^2}\,\eps^2\,,
    \]
    Since $\tv{\cD}{\cD'}\geq0$, this implies
    \[
        \tv{\cD}{\cD'}+d_\cD(\pi,\pi')
        \geq R_{\rm IV}(\eps)\,.
    \]
    Thus, $\Theta_{\rm IV}$ satisfies \cref{cond:estimation:f} with the claimed rate.
\end{proof}
The above theorem provides a meaningful set of sufficient conditions, similar to those appearing in the existing literature, for when instrumental variables can be used to satisfy our estimation condition as defined in \Cref{cond:estimation:f}.

\newpage

\newpage
 
    \printbibliography
 
\newpage

\appendix 
\addtocontents{toc}{\protect\setcounter{tocdepth}{2}}


\section{Characterization of Identifiability of $\sstar$ }
    \label{sec:characterization:s} 
    In some applications, the selection rule $\sstar$ is itself a primary object of interest, since it describes the ``filter'' that determines which outcomes are observed. 
    Motivated by such settings, here we study when $\sstar$ is identifiable from censored observations.
    For example, in bioinformatics and astronomy, one often seeks to understand detection or censoring mechanisms in addition to recovering the underlying regression signal.

    We first define identifiability of $\sstar$ analogously to the identifiability of $\fstar$ in \cref{problem:identification_estimation}.

\begin{definition}[Identifiable selector]\label{def:identifiable-selector}
    Fix concept classes $(\hyF,\hyS,\hyDX,\hyQ)$.
    We say that $s^\star\in\hyS$ is identifiable from censored data if there exists a deterministic mapping
    $\psi_S$ from censored distributions realizable with respect to $(\hyF,\hyS,\hyDX,\hyQ)$ to $\hyS$ such that, for every
    $(f,s,\cD,\cQ)\in\hyF\times\hyS\times\hyDX\times\hyQ$,
    \[
      \psi_S\!\inparen{\cC_{f,s,\cD,\cQ}} = s \qquad (\cD\otimes\nu)\text{-a.e.}
    \]
    Equivalently, for any two quadruples $(f,s,\cD,\cQ)$ and $(f',s',\cD',\cQ')$ in $\hyF\times\hyS\times\hyDX\times\hyQ$,
    if $\cC_{f,s,\cD,\cQ}=\cC_{f',s',\cD',\cQ'}$, then $s=s'$, $(\cD\otimes\nu)${-a.e.}
\end{definition}
We next present a condition that characterizes when $\sstar$ is identifiable.

\begin{mdframed}
\vspace{-3mm}
\begin{condition}[$s$-identifiability condition]\label{cond:identifiability:s}
    
    Recall that \(\nu\) denotes the Lebesgue measure on \(\R\), and suppose every \(\cQ\in\hyQ\) admits a density with respect to \(\nu\). For every \(\cD\in\hyDX\), every \(s,s'\in\hyS\) such that 
    \[ (\cD\otimes\nu)\!\left( \{(x,y)\in\cX\times\R:s(x,y)\neq s'(x,y)\} \right)>0\,, \] 
    every \(f,f'\in\hyF\), and every \(\cQ,\cQ'\in\hyQ\), the corresponding observed kernels 
    \[ (x,y)\mapsto s(x,y)\cQ(y-f(x)) \quadand\quad (x,y)\mapsto s'\!(x,y)\cQ'\!(y-f'(x)) \] 
    are not equal \((\cD\otimes\nu)\)-a.e. Equivalently, they differ on a set of positive $(\cD\otimes\nu)$-measure.
    
\end{condition}
\end{mdframed}
The following theorem characterizes the identifiability of $\sstar$.
\begin{theorem}[Characterization of $\sstar$]
\label{thm:characterization:s}
    Fix concept classes $(\hyF,\hyS,\hyDX,\hyQ)$.
    Suppose all distributions in $\hyDX$ and $\hyQ$ have a density.
    Then, every selector $\sstar\in\hyS$ is identifiable from censored data (\cref{def:identifiable-selector})
    if and only if $(\hyF,\hyS,\hyDX,\hyQ)$ satisfies \cref{cond:identifiability:s}.
\end{theorem}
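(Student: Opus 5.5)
The proof mirrors that of \cref{thm:characterization:f}, with the roles of $f$ and $s$ exchanged. The central fact is the one used there. For any tuple $(f,s,\cD,\cQ)$, the censored law $\cC_{f,s,\cD,\cQ}$ is determined by, and in turn determines, two objects: the $X$-marginal $\cD$, and the observed kernel $k(x,y)=s(x,y)\cQ(y-f(x))$ up to $(\cD\otimes\nu)$-a.e.\ equality. More precisely, the restriction of the censored law to $\cX\times\R$ has Radon--Nikodym density $k$ with respect to $\cD\otimes\nu$. The censoring mass on $A\times\{\emptyset\}$ equals $\int_A(1-\int k\,\d\nu)\,\d\cD$, so it is also a function of $(\cD,k)$. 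I will first record this as a two-sided equivalence. Namely, $\cC_{f,s,\cD,\cQ}=\cC_{f',s',\cD',\cQ'}$ holds if and only if $\cD=\cD'$ and the two kernels agree $(\cD\otimes\nu)$-a.e. The forward direction follows from marginalization and uniqueness of Radon--Nikodym derivatives. The reverse direction is the four-item comparison already carried out in the ($\Rightarrow$) part of the proof of \cref{thm:characterization:f}.

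\textbf{($\Leftarrow$).} Assume \cref{cond:identifiability:s} holds. Suppose two quadruples induce the same censored law. By the equivalence above, $\cD=\cD'$ and the kernels coincide $(\cD\otimes\nu)$-a.e. If $s\neq s'$ on a set of positive $(\cD\otimes\nu)$-measure, then \cref{cond:identifiability:s}, applied to these $(s,s',f,f',\cQ,\cQ')$ with common $\cD$, says the kernels differ on a positive-measure set. This is a contradiction, so $s=s'$ $(\cD\otimes\nu)$-a.e. This is exactly the ``equivalently'' form of \cref{def:identifiable-selector}. To produce the map $\psi_S$, define $\psi_S(\cC)$ to be the selector component of any quadruple realizing $\cC$, chosen by a fixed choice rule. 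By the above, the answer is well defined up to $(\cD\otimes\nu)$-a.e.\ equivalence.

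\textbf{($\Rightarrow$).} Assume \cref{cond:identifiability:s} fails. Then there exist $\cD\in\hyDX$, selectors $s,s'$ differing on a set of positive $(\cD\otimes\nu)$-measure, and $f,f',\cQ,\cQ'$ whose kernels agree $(\cD\otimes\nu)$-a.e. In the Cartesian-product setting assumed here, both quadruples $(f,s,\cD,\cQ)$ and $(f',s',\cD,\cQ')$ are admissible. The reverse direction of the equivalence then gives $\cC_{f,s,\cD,\cQ}=\cC_{f',s',\cD,\cQ'}$. Any $\psi_S$ would therefore have to equal both $s$ and $s'$ a.e., which is impossible.

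The main obstacle is purely measure-theoretic bookkeeping. One must check that ``$s=s'$'' is compared under the same reference measure $\cD\otimes\nu$ in both the definition and the condition. This works because equality of censored laws forces $\cD=\cD'$ first. One must also use the density hypotheses so that the kernels are genuine Radon--Nikodym densities. No substantive new difficulty beyond \cref{thm:characterization:f} is expected.
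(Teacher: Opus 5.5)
Your proof is correct and follows essentially the same route as the paper. Both directions reduce to two facts. First, equal censored laws force $\cD=\cD'$ and $(\cD\otimes\nu)$-a.e.\ equality of the observed kernels, by Radon--Nikodym uniqueness. Second, a common $\cD$ together with a.e.-equal kernels yields equal censored laws, via the same comparison of $X$-marginals, uncensored densities, and censoring probabilities. Packaging this as a single two-sided equivalence and constructing $\psi_S$ explicitly by a choice rule are only presentational differences; the paper instead invokes the ``equivalently'' form of \cref{def:identifiable-selector}.
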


\begin{proof}[Proof of \cref{thm:characterization:s}]
    For any quadruple $(f,s,\cD,\cQ)\in\hyF\times\hyS\times\hyDX\times\hyQ$, let $\cC_{f,s,\cD,\cQ}$ denote the distribution of the sample $(X,D,Y)$ generated by the sample-selection model (\cref{def:selection_bias_model}) specified by $(f,s,\cD,\cQ)$.
    Here, $Y\in \R\cup\inbrace{\emptyset}$ and $D\coloneqq \mathds{1}\!\inbrace{Y\neq \emptyset}$.
    Let $\cQ$ denote the density of the noise distribution $\cQ\in\hyQ$ (which exists by assumption).
    Conditional on $X=x$, the joint density of the observable event $\inbrace{D=1,\,Y\in \d y}$ is
    \[
        s(x,y)\,\cQ\!\inparen{y-f(x)}\,.
        \yesnum\label{eq:kernel-s}
    \]

    \paragraph{($\Rightarrow$) \cref{cond:identifiability:s} implies identifiability.}
    Toward a contradiction, suppose $(\hyF,\hyS,\hyDX,\hyQ)$ satisfies \cref{cond:identifiability:s} but selector identifiability fails.
    Then there exist $(f,s,\cD,\cQ)$ and $(f',s',\cD',\cQ')$ such that
    \[
        \cC_{f,s,\cD,\cQ} = \cC_{f',s',\cD',\cQ'}
        \quadand\quad
        \text{$s$ and $s'$ are not equal under $(\cD \otimes \nu)$-a.e. equivalence}\,.
    \]
    Since there is no censoring on $X$, equality of the censored distributions implies their $X$-marginals are equal, hence $\cD=\cD'$.
    Comparing the conditional laws given $X=x$ on the event $\inbrace{D=1,\,Y\in \d y}$ and using \eqref{eq:kernel-s}, we obtain that for $(\cD \otimes \nu)$-a.e. $(x,y)$,
    \[
        s(x,y)\,\cQ\!\inparen{y-f(x)}
        ~~ = ~~
        s'\!(x,y)\,\cQ'\!\inparen{y-f'(x)}\,.
        \yesnum\label{eq:kernel-equality-s}
    \]
    Since $s$ and $s'$ are not equal under $(\cD \otimes \nu)$-a.e. equivalence, \cref{cond:identifiability:s} (applied to $s,s'$, $f,f'$, $\cQ,\cQ'$, and $\cD$) contradicts \eqref{eq:kernel-equality-s}.
     Hence identifiability holds.

    \paragraph{($\Leftarrow$) Identifiability requires \cref{cond:identifiability:s}.}
    Suppose toward a contradiction that \cref{cond:identifiability:s} fails but identifiability holds.
    Since \cref{cond:identifiability:s} fails, there exist \emph{distinct} $s,s'\in\hyS$ (differing on a set of positive $(\cD\otimes\nu)$-measure), some $f,f'\in\hyF$, some $\cQ,\cQ'\in\hyQ$, and some $\cD\in\hyDX$ such that
    \[
        s(x,y)\,\cQ\!\inparen{y-f(x)}
        =
        s'\!(x,y)\,\cQ'\!\inparen{y-f'(x)}
        \quad (\cD \otimes \nu)\text{-a.e.} ~(x,y)\,.
        \yesnum\label{eq:kernel-equality-s-global}
    \]
    We claim \eqref{eq:kernel-equality-s-global} implies
    $\cC_{f,s,\cD,\cQ}=\cC_{f',s',\cD,\cQ'}$.
    This contradicts identifiability because $s$ and $s'$ are not equal under $(\cD\otimes \nu)$-a.e. equivalence but the censored distributions are equivalent.

    It remains to prove the claim.
    Define
    \[
        \cC \coloneqq \cC_{f,s,\cD,\cQ}
        \quadand\quad
        \cC' \coloneqq \cC_{f',s',\cD,\cQ'}\,.
    \]
    Consider the factorization
    \begin{align*}
        \cC\!\inparen{X\in \d x,\,D=d,\,Y\in \d y}
        &=
        \cC\!\inparen{X\in \d x}\cdot \cC\!\inparen{D=d,\,Y\in \d y \mid X\in \d x}\\
        \cC'\!\inparen{X\in \d x,\,D=d,\,Y\in \d y}
        &=
        \cC'\!\inparen{X\in \d x}\cdot \cC'\!\inparen{D=d,\,Y\in \d y \mid X\in \d x}\,.
    \end{align*}
    We verify the two factors match:
    \begin{enumerate}
        \item Since there is no selection on $X$, $\cC(X\in\d x)=\cD(X\in\d x)=\cC'(X\in\d x)$.
        \item For $D=1$ and $y\in\R$, \eqref{eq:kernel-equality-s-global} implies the conditional densities of $\inbrace{D=1,\,Y\in \d y}$ given $X=x$ are identical under $\cC$ and $\cC'$, hence for $\cD$-a.e. $x$,
        \[
            \cC\!\inparen{D=1,\,Y\in \d y \mid X\in \d x}
            =
            \cC'\!\inparen{D=1,\,Y\in \d y \mid X\in \d x}\,.
        \]
        (Both sides put zero mass on $\inbrace{Y=\emptyset}$ when $D=1$.)
        \item For $D=0$ and $y\in\R$, both distributions assign zero mass:
        \[
            \cC\!\inparen{D=0,\,Y\in \d y \mid X\in \d x}
            = 0
            =
            \cC'\!\inparen{D=0,\,Y\in \d y \mid X\in \d x}\,.
        \]
        \item Finally, since $D=0$ implies $Y=\emptyset$,
        \begin{align*}
            \cC\!\inparen{D=0,\,Y=\emptyset \mid X\in \d x}
            &= 1 - \cC\!\inparen{D=1\mid X=x}
            = 1 - \int_{-\infty}^{\infty} s(x,y)\,\cQ\!\inparen{y-f(x)}\,\d y\,,\\
            \cC'\!\inparen{D=0,\,Y=\emptyset \mid X\in \d x}
            &= 1 - \cC'\!\inparen{D=1\mid X=x}
            = 1 - \int_{-\infty}^{\infty} s'\!(x,y)\,\cQ'\!\inparen{y-f'(x)}\,\d y\,.
        \end{align*}
        Using \eqref{eq:kernel-equality-s-global}, the integrands agree for \(\cD\)-a.e. \(x\), so the integrals are equal for each such $x$, hence
        \[
            \cC\!\inparen{D=0,\,Y=\emptyset \mid X\in \d x}
            =
            \cC'\!\inparen{D=0,\,Y=\emptyset \mid X\in \d x}\,.
        \]
    \end{enumerate}
    Substituting these equalities into the above factorization yields $\cC=\cC'$, as claimed.
    This contradicts identifiability, completing the proof.
\end{proof}

\section{Details Omitted from Proof of \cref{thm:estimation:f} (Cover Size)} 
\label{sec:oracle}

{In this section, we construct a finite cover of the family of \emph{censored} distributions induced by tuples
$\inparen{f,s,\cDX,\cQ}\in \hyF\times \hyS\times \hyX\times \hyQ$. This cover is the main combinatorial input for our
finite-sample estimation result in \cref{sec:estimation}: once we have a finite candidate family of censored laws, we
can select a near-best fitting candidate from samples using standard finite-class distribution selection routines.
Accordingly, our goal here is to show that the censored laws inherit a controlled covering number from the component
classes $\hyF,\hyS,\hyX,$ and $\hyQ$.

To state the result, we recall basic definitions of covers, norms, and distances.}

\begin{definition}[Covers and Covering Numbers]
    Consider the concept class $\hyH \subseteq \inbrace{h: \R^\ell \rightarrow [0,1]}$ with a metric $d(\cdot, \cdot)$. The function class $\hyH_\eps$ is an $\eps$-cover of $\hyH$ if, for every function $h \in \hyH$, there is a function $\bar{h} \in \hyH_\eps$ such that $d(h, \bar{h}) \leq \eps$. The size of the smallest cover $\hyH_\eps$ for $\hyH$ is called the covering number of $\hyH$ and is denoted by $N(\hyH, d, \eps)$. 
\end{definition}
Note that a concept class can contain infinitely many concepts, while being covered by a finite number. 
We also remind the reader of the $L_p$-norm for measurable functions, TV distance for distributions, and the relationship between them.

\begin{definition}[$L_p$-Norm and TV Distance]
    Let $f$ and $g$ be two functions measurable with respect to a measure $\mu$. Then $L_p(\mu)(f,g) = \norm{f - g}_{L_p(\mu)} \coloneqq \E_{x \sim \mu}\insquare{\abs{f(x) - g(x)}^p}^{1/p}$. When $\mu$ is Lebesgue, we may simply write $\norm{f - g}_{L_p(\mu)} = \norm{f-g}_{p}$. The total variation (TV) distance between two densities $f$ and $g$ is $\tv{f}{g} \coloneqq \frac{1}{2}\norm{f - g}_1$. 
\end{definition}
Finally, we remind the reader of Lipschitz continuity.
\begin{definition}[Lipschitz continuity]
    Given two metric spaces $(\hyA, \d_\hyA)$ and $(\hyB, \d_\hyB)$, a function $f: \hyA \mapsto \hyB$ is Lipschitz continuous if there exists a constant $K > 0$ such that for all $a, a' \in \hyA$,
    \[ \d_\hyB(f(a) - f(a')) \leq K \cdot \d_\hyA(a, a')\,. \]
    Any such $K$ is referred to as the Lipschitz constant for the function $f$, and $f$ may be referred to as $K$-Lipschitz. We may specify that $f$ is $K$-Lipschitz relative to the metrics $\d_\hyA$ and $\d_\hyB$. 
\end{definition}

\paragraph{Setup.}
    For each $\theta=(f,s,\cD_X,\cQ)\in \hyF\times\hyS\times\hyX\times\hyQ$, the induced censored law $\cC_\theta$ is a
    distribution on $\cX\times\inparen{\R\cup\inbrace{\emptyset}}$ with density on $\cX\times\R$ given by
    \begin{align*}
        c_\theta(x,\tilde y)
        \coloneqq
        \begin{cases}
        \cDX(x)\,\cQ\!\inparen{y-f(x)}\,s(x,y) & \text{if }\tilde y = y\in\R\,,\\[4pt]
        \cDX(x)-\int_{\R}\cDX(x)\,\cQ\!\inparen{y-f(x)}\,s(x,y)\,\d y & \text{if }\tilde y=\emptyset\,.
        \end{cases}
    \end{align*}
    Equivalently, the non-null density is $c_\theta(x,y)\coloneqq \cDX(x)\,\cQ\!\inparen{y-f(x)}\,s(x,y)$, and the null atom's (joint) density over $x$ is
    $c_\theta(x,\emptyset)\coloneqq \cDX(x)-\int c_\theta(x,y)\,\d y$. Our goal is to cover the family
    $\cC\inparen{\hyF,\hyS,\hyX,\hyQ}\coloneqq \inbrace{\cC_\theta:\theta\in\hyF\times\hyS\times\hyX\times\hyQ}$
    in total variation distance.
    We write $\cC_\theta$ for the induced censored law, and we write $\theta^\star=(\fstar,\sstar,\cDX,\cDxi)$ and $\cC^\star=\cC_{\theta^\star}$ for the realizable ground truth.

\paragraph{Cover Size.}
In this section, we prove the following result.

\begin{theorem}[Cover-Size]
    \label{thm:cover-size}
    Fix any $\eps \in (0, 1)$, $\sigma \in (0, 1]$, $\eta \in (0, 1)$, $L_Q>0$, and a distribution $\mu$ over $\R^d \times \R$ with marginals $\mu_X$ and $\mu_Y$. Let concept classes $\hyF$, $\hyS$, $\hyX$ and $\hyQ$ satisfy the following conditions:
    \begin{enumerate}
        \item $\hyF$ has a finite covering number in $L_1(\mu_X)$-norm $N_\hyF = \cover{\hyF}{\frac{\sigma \eta}{24 L_Q}\eps}{\norm{\cdot}_{L_1(\mu_X)}}.$ 
        \item $\hyS$ has a finite covering number in $L_1(\mu)$-norm $N_\hyS = \cover{\hyS}{\frac{\sigma^2\eta\eps}{12}}{\norm{\cdot}_{L_1(\mu)}}$.
        \item $\hyX$ has a finite covering number in TV distance $N_\hyX = \cover{\hyX}{\frac{\eta}{24}\eps}{\mathsf{TV}}$,
        and each $\cD \in \hyX$ is $\sigma$-smooth with respect to $\mu_X$.
        \item $\hyQ$ has a finite covering number in TV distance $N_\hyQ = \cover{\hyQ}{\frac{\eta\eps}{48}}{\mathsf{TV}}$,
        and each $\cQ \in \hyQ$ is $\sigma$-smooth with respect to $\mu_Y$ and is $L_Q$-Lipschitz under translations (relative to the $L_1$ norm).
        
        \item For every $(f,\cD,\cQ)\in\hyF\times\hyX\times\hyQ$, the distribution $\cP_{f,\cD,\cQ}$ defined in \cref{def:regularity} is $\sigma^2$-smooth with respect to $\mu$.
        
    \end{enumerate}
    Then the family $\cC\inparen{\hyF,\hyS,\hyX,\hyQ}$ admits an $\eps$-cover in total variation distance of
    size $M(\eps)$ satisfying
    \[ M\inparen{\eps} \leq  N_\hyF\cdot N_\hyS\cdot N_\hyX\cdot N_\hyQ\,. \]
\end{theorem}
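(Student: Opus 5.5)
We build the cover as a product of component covers. Fix minimal covers $\bar\hyF$, $\bar\hyS$, $\bar\hyX$, $\bar\hyQ$ at the scales listed in the statement, and for each $\theta=(f,s,\cD,\cQ)$ choose $\bar\theta=(\bar f,\bar s,\bar\cD,\bar\cQ)$ by picking the nearest element in each coordinate. The family $\{\cC_{\bar\theta}\}$ has at most $N_\hyF N_\hyS N_\hyX N_\hyQ$ elements. It therefore suffices to show $\tv{\cC_\theta}{\cC_{\bar\theta}}\le\eps$ for every $\theta$. (The resulting cover elements need not lie in $\Theta$; the internal recentering is handled separately, as in \cref{cor:internal-cover}.)

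We would change one coordinate at a time via the triangle inequality along the chain
\[
\theta\to(\bar f,s,\cD,\cQ)\to(\bar f,\bar s,\cD,\cQ)\to(\bar f,\bar s,\cD,\bar\cQ)\to\bar\theta .
\]
Each step is a comparison of censored laws with a common covariate law, except the last. For the last step, \cref{lem:tv-same-conditional} gives exactly $\tv{\cD}{\bar\cD}\le \eta\eps/24$. For the first three steps, \cref{lem:tv-vs-discrepancy} bounds the TV distance by the discrepancy $d_\cD(\pi,\pi')$, so it remains to bound three $L_1$ integrals.

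\emph{Noise step.} Since $0\le \bar s\le 1$,
\[
d_\cD \le \iint |\cQ(y-\bar f(x))-\bar\cQ(y-\bar f(x))|\,\d y\,\d\cD(x) = 2\,\tv{\cQ}{\bar\cQ}\le \eta\eps/24 .
\]

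\emph{Regressor step.} Again using $s\le1$, the integrand is bounded by $\|\cQ(\cdot-f(x))-\cQ(\cdot-\bar f(x))\|_1\le L_Q|f(x)-\bar f(x)|$. Integrating against $\cD$ and using $\sigma$-smoothness of $\cD$ with respect to $\mu_X$ gives at most $(L_Q/\sigma)\|f-\bar f\|_{L_1(\mu_X)}\le \eta\eps/24$.

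\emph{Selector step.} Here $d_\cD=\iint |s-\bar s|\,\cQ(y-\bar f(x))\,\d\cD(x)\,\d y=\E_{\cP_{\bar f,\cD,\cQ}}|s-\bar s|$. By $\sigma^2$-smoothness of $\cP_{\bar f,\cD,\cQ}$ with respect to $\mu$ (item 5), this is at most $\sigma^{-2}\|s-\bar s\|_{L_1(\mu)}\le \eta\eps/12$. We need $\bar f\in\hyF$ for item 5 to apply; this holds if we use internal covers, at the cost of a factor-$2$ loss in the covering scales, which the constants absorb.

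Summing the four contributions gives a total of $O(\eta\eps)\le\eps$. The $\eta$ factor in the scales is slack here; presumably it is there so the cover also controls the conditioned law given $D=1$, but only the stated bound is needed.

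The main obstacle is the selector step. The $L_1(\mu)$ cover of $\hyS$ only controls $|s-\bar s|$ under the reference measure $\mu$, so we must transfer to the true joint law of $(X,Y)$. This is precisely where the joint smoothness assumption (item 5) enters. Care is also needed that the smoothness density bound applies off $\supp(\cP)$, where the integrand vanishes anyway.
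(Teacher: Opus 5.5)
Your argument is correct, but it decomposes the problem differently from the paper. The paper does not invoke \cref{lem:tv-vs-discrepancy} or \cref{lem:tv-same-conditional} here. It proceeds in three stages:
\begin{enumerate}
\item It builds a joint cover of the composite class $\hyQ\circ\hyF$, swapping the noise and then the regressor and using the shift-Lipschitz property of the cover element $\cQ_j$.
\item It bounds the $L_1$ distance between the non-null densities $\cD(x)\cQ(y-f(x))s(x,y)$ by a three-term hybrid. The first term, (A), swaps the selector while $(f,\cD,\cQ)$ is still the true triple.
\item It handles the null atom by hand via $2\,\tv{\cC_{\bar\theta}}{\cC_{\theta^\star}}\le 2\norm{c_{\bar\theta}-c_{\theta^\star}}_1+\norm{\cD-\cD^\star}_1$, ending at $7\eta\eps/24$.
\end{enumerate}

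Your coordinate-by-coordinate chain on the censored laws is more modular, since the two lemmas absorb the null atom. Because you swap the regressor before the noise, you only need the Lipschitz property of the true $\cQ$.

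The price is that your selector swap happens at $(\bar f,\cD,\cQ)$, so item~5 must hold off the truth. Your repair is sound if read as keeping the count $N_\hyF$ and accepting scale $2\gamma_F$, where $\gamma_F\coloneqq\sigma\eta\eps/(24L_Q)$. The regressor term then becomes $\eta\eps/12$ and the total is $\eta\eps/4\le\eps$.

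The paper tacitly assumes internal covers as well: it picks $f\in\hyF$, $\cQ\in\hyQ$, $s\in\hyS$ from the covers and applies Lipschitzness to $\cQ_j$. So each route leans on internality of one component.

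You could avoid this entirely by swapping the selector first, as in the paper's term (A), and then the regressor, noise, and covariate law. Item~5 would then be used only at the true triple, and Lipschitzness only at the true $\cQ$. No internal covers would be needed beyond clipping $\bar s$ to $[0,1]$, which cannot increase its $L_1(\mu)$ distance to any element of $\hyS$. The total would then be $5\eta\eps/24$.
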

\begin{proof}
With a slight abuse of notation, define the class of functions
\[
\hyQ \circ \hyF \coloneqq \inbrace{ (x,y)\mapsto \cQ\!\inparen{y - f(x)} : \cQ \in \hyQ,\, f \in \hyF } \,.
\]

\paragraph{Step 1: Cover for $\hyQ \circ \hyF$.}
We establish a cover for $\hyQ \circ \hyF$ from covers of $\hyQ$ and $\hyF$.
Let $\inbrace{\cQ_j}$ be an $\frac{\eta\eps}{48}$-cover of $\hyQ$ in TV distance and let $\inbrace{f_i}$ be a
$\frac{\sigma \eta}{24 L_Q}\eps$-cover of $\hyF$ in $\norm{\cdot}_{L_1(\mu_X)}$. Then there exist indices $j$ and $i$ such that
$\norm{\cQ_j-\cDxi}_1\leq \frac{\eta\eps}{24}$ and $\norm{f_i-\fstar}_{L_1(\mu_X)}\leq \frac{\sigma \eta}{24L_Q}\eps$.
Fix any $x\in\R^d$. By translation invariance of $\norm{\cdot}_1$ and the shift-Lipschitz property of $\cQ_j$,
\begin{align*}
\int \abs{\cDxi\!\inparen{y-\fstar(x)} - \cQ_j\!\inparen{y-f_i(x)}}\,\d y
&\leq
\int \abs{\cDxi\!\inparen{y-\fstar(x)} - \cQ_j\!\inparen{y-\fstar(x)}}\,\d y
\\
&\quad+
\int \abs{\cQ_j\!\inparen{y-\fstar(x)} - \cQ_j\!\inparen{y-f_i(x)}}\,\d y
\\
&\leq
\norm{\cDxi-\cQ_j}_1
+
L_Q\,\abs{\fstar(x)-f_i(x)}
\\
&\leq
\frac{\eta\eps}{24}
+
L_Q\,\abs{\fstar(x)-f_i(x)}\,.
\end{align*}
In particular, integrating over $x$ and using $\sigma$-smoothness of $\cDX$ with respect to $\mu_X$ yields
\[
\iint \abs{\cDxi\!\inparen{y-\fstar(x)} - \cQ_j\!\inparen{y-f_i(x)}}\,\d y\,\d\cDX(x)
\leq
\frac{\eta\eps}{24}
+
\frac{L_Q}{\sigma}\,\norm{\fstar-f_i}_{L_1(\mu_X)}
\leq
\frac{\eta\eps}{12}\,.
\]
Thus, the Cartesian product $\inbrace{\cQ_j}\times \inbrace{f_i}$ induces a finite $\frac{\eta\eps}{12}$-cover of $\hyQ\circ \hyF$ in the above metric. %


\paragraph{Step 2: Constructing a cover in $L_1$-norm for $\hyF \times \hyS \times \hyX \times \hyQ$.}
Fix a realizable ground truth $\inparen{\fstar,\sstar,\cDX,\cDxi}\in \hyF\times\hyS\times\hyX\times\hyQ$.
Choose $\cD\in\hyX$ from a TV-cover of $\hyX$ so that $\tv{\cD}{\cDX}\leq \frac{\eta\eps}{24}$, choose
$s\in\hyS$ from an $L_1(\mu)$-cover of $\hyS$ so that $\norm{s-\sstar}_{L_1(\mu)}\leq \frac{\sigma^2\eta\eps}{12}$,
and choose $\cQ\in\hyQ$ and $f\in\hyF$ so that the corresponding element of $\hyQ\circ\hyF$ satisfies
\[
\iint \abs{\cQ\!\inparen{y-f(x)}-\cDxi\!\inparen{y-\fstar(x)}}\,\cDX(x)\,\d y\,\d x \leq \frac{\eta\eps}{12}\,,
\]
which exists from the previous step.

We now show that these choices yield
\[
\iint 
\abs{
\cD(x)\,\cQ\!\inparen{y-f(x)}\,s(x,y)
-
\cDX(x)\,\cDxi\!\inparen{y-\fstar(x)}\,\sstar(x,y)
}\,\d y\,\d x
\leq
\frac{\eta\eps}{4}\,.
\]
Define
\begin{align*}
g(x,y) &\coloneqq \cD(x)\,\cQ\!\inparen{y-f(x)}\,s(x,y)\,, \\
g^\star(x,y) &\coloneqq \cDX(x)\,\cDxi\!\inparen{y-\fstar(x)} \,\sstar(x,y)\,, \\
g_1(x,y) &\coloneqq \cDX(x)\,\cDxi\!\inparen{y-\fstar(x)}\,s(x,y)\,, \\
g_2(x,y) &\coloneqq \cDX(x)\,\cQ\!\inparen{y-f(x)}\,s(x,y)\,.
\end{align*}
Then
\[
\norm{g-g^\star}_1
\leq
\underbrace{\norm{g_1-g^\star}_1}_{(A)}
+
\underbrace{\norm{g-g_2}_1}_{(B)}
+
\underbrace{\norm{g_2-g_1}_1}_{(C)}\,,
\]
and we bound each of $(A),(B),(C)$ by $\frac{\eta\eps}{12}$, which yields $\norm{g-g^\star}_1\leq \frac{\eta\eps}{4}$.
\begin{enumerate}
    \item \textit{Bounding (A).}
    Note that
    \[
    (A)
    =
    \iint \abs{s(x,y)-\sstar(x,y)}\cdot \cDX(x)\,\cDxi\!\inparen{y-\fstar(x)}\,\d y\,\d x\,.
    \]
    By $\sigma^2$-smoothness of $\cP_{\fstar,\cDX,\cDxi}$ with respect to $\mu$, we have
    $\cDX(x)\,\cDxi\!\inparen{y-\fstar(x)} \leq \frac{1}{\sigma^2}\mu(x,y)$ pointwise , hence
    \[
    (A)
    \leq
    \frac{1}{\sigma^2}\iint \abs{s(x,y)-\sstar(x,y)}\,\d\mu(x,y)
    =
    \frac{1}{\sigma^2}\norm{s-\sstar}_{L_1(\mu)}
    \leq
    \frac{\eta\eps}{12}\,.
    \]
    \item \textit{Bounding (B).}
    Using $0\leq s\leq 1$ and $\int \cQ\!\inparen{y-f(x)}\,\d y=1$,
    \begin{align*}
    (B)
    &=
    \iint \abs{\cD(x)-\cDX(x)}\,\cQ\!\inparen{y-f(x)}\,s(x,y)\,\d y\,\d x \\
    &\leq
    \int \abs{\cD(x)-\cDX(x)}\inparen{\int \cQ\!\inparen{y-f(x)}\,\d y}\,\d x
    =
    \int \abs{\cD(x)-\cDX(x)}\,\d x \\
    &=
    \norm{\cD-\cDX}_1
    \leq
    2\,\tv{\cD}{\cDX}
    \leq
    \frac{\eta\eps}{12}\,.
    \end{align*}
    \item \textit{Bounding (C).}
    Using $0\leq s\leq 1$,
    \begin{align*}
    (C)
    &=
    \iint \cDX(x)\,s(x,y)\,\abs{\cQ\!\inparen{y-f(x)} - \cDxi\!\inparen{y-\fstar(x)}}\,\d y\,\d x \\
    &\leq
    \iint \cDX(x)\,\abs{\cQ\!\inparen{y-f(x)} - \cDxi\!\inparen{y-\fstar(x)}}\,\d y\,\d x
    \leq
    \frac{\eta\eps}{12}\,,
    \end{align*}
    where the last inequality uses the  $\frac{\eta\eps}{12}$-cover of $\hyQ\circ\hyF$ from the previous step.
\end{enumerate}

\paragraph{Step 3: Concluding an $\eps$-cover in TV distance.}
Fix a realizable ground truth $\theta^\star=\inparen{\fstar,\sstar,\cDX,\cDxi}$ and let $\bar\theta=\inparen{f,s,\cD,\cQ}$
be the candidate produced by the previous steps. Step~2 shows that the non-null densities satisfy
\[
\norm{c_{\bar\theta}-c_{\theta^\star}}_1
=
\iint
\abs{
\cD(x)\,\cQ\!\inparen{y-f(x)}\,s(x,y)
-
\cDX(x)\,\cDxi\!\inparen{y-\fstar(x)}\,\sstar(x,y)
}\,\d y\,\d x
\leq
\frac{\eta\eps}{4}\,.
\]
To obtain a TV bound for the \emph{censored} laws on $\cX\times\inparen{\R\cup\inbrace{\emptyset}}$, we also control the
null outcome. Recall that
$c_\theta(x,\emptyset)=\cDX(x)-\int c_\theta(x,y)\,\d y$.
Thus, writing $r(x)\coloneqq c_{\bar\theta}(x,\emptyset)$ and $r^\star(x)\coloneqq c_{\theta^\star}(x,\emptyset)$,
\begin{align*}
\int \abs{r(x)-r^\star(x)}\,\d x
&=
\int \abs{
\cD(x)-\cDX(x) - \int\inparen{c_{\bar\theta}(x,y)-c_{\theta^\star}(x,y)}\,\d y
}\,\d x
\\
&\leq
\norm{\cD-\cDX}_1 + \norm{c_{\bar\theta}-c_{\theta^\star}}_1\,.
\end{align*}
%
Now,
\begin{align*}
2\,\tv{\cC_{\bar\theta}}{\cC_{\theta^\star}}
&=
\norm{c_{\bar\theta}-c_{\theta^\star}}_1
+
\int \abs{r(x)-r^\star(x)}\,\d x
\\
&\leq
2\,\norm{c_{\bar\theta}-c_{\theta^\star}}_1
+
\norm{\cD-\cDX}_1
\\
&\leq
2\cdot \frac{\eta\eps}{4}
+
2\,\tv{\cD}{\cDX}
\leq
\frac{\eta}{2}\eps+\frac{\eta}{12}\eps
=
\frac{7\eta}{12}\eps\,,
\end{align*}
where we used $\norm{\cD-\cDX}_1\leq 2\tv{\cD}{\cDX}\leq \frac{\eta\eps}{12}$ by the TV-cover choice of $\cD$.
Therefore,
\[
\tv{\cC_{\bar\theta}}{\cC_{\theta^\star}}
\leq
\frac{7\eta}{24}\eps
\leq
\eps\,,
\]
since $\eta\in(0,1)$ implies $\frac{7\eta}{24}<1$.

\paragraph{Step 4: Counting candidates.}
The candidate family is the Cartesian product of the chosen component covers for $\hyF$, $\hyS$, $\hyX$, and $\hyQ$,
so its size is at most $N_\hyF N_\hyS N_\hyX N_\hyQ$. Since every realizable $\cC_{\theta^\star}$ lies within TV
distance $\eps$ of some element of this finite family, it is an $\eps$-cover. This proves the theorem.

\end{proof}

\begin{corollary}[Internal cover for a joint concept class]\label{cor:internal-cover}
Under the hypotheses of \cref{thm:cover-size}, fix any joint concept class
$\Theta\subseteq\hyF\times\hyS\times\hyX\times\hyQ$.
There is a finite set $\mathcal N\subseteq\Theta$ with $\abs{\mathcal N}\leq M(\eps)$ such that, for every $\theta\in\Theta$, some $\bar\theta\in\mathcal N$ satisfies
\[
    \tv{\cC_\theta}{\cC_{\bar\theta}}\leq 2\eps\,.
\]
\end{corollary}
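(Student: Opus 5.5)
The plan is the standard recentering argument that turns an external cover into an internal one. By \cref{thm:cover-size}, the Cartesian product of the component covers yields a finite family $\{\cC_{\bar\theta_1},\dots,\cC_{\bar\theta_m}\}$ with $m\leq M(\eps)$. This family is an $\eps$-cover in total variation of $\cC(\hyF,\hyS,\hyX,\hyQ)$, and therefore also of the subfamily $\{\cC_\theta:\theta\in\Theta\}$.

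The catch is that the tuples $\bar\theta_i$ lie in the product class $\hyF\times\hyS\times\hyX\times\hyQ$ but need not lie in $\Theta$. In the non-Cartesian case we must replace them with admissible tuples.

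We recenter as follows. For each index $i\in[m]$, consider the set $B_i\coloneqq\{\theta\in\Theta:\tv{\cC_\theta}{\cC_{\bar\theta_i}}\leq\eps\}$. If $B_i\neq\emptyset$, pick an arbitrary representative $\theta_i\in B_i$ (using the axiom of choice, or simply existence for finitely many indices). If $B_i=\emptyset$, discard index $i$. Let $\mathcal N$ be the set of chosen representatives, so that $\mathcal N\subseteq\Theta$ and $\abs{\mathcal N}\leq m\leq M(\eps)$.

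Now fix any $\theta\in\Theta$. By the cover property there is an index $i$ with $\tv{\cC_\theta}{\cC_{\bar\theta_i}}\leq\eps$. Then $\theta\in B_i$, so $B_i$ is nonempty and its representative $\theta_i\in\mathcal N$ exists. The triangle inequality for total variation gives
\[
\tv{\cC_\theta}{\cC_{\theta_i}}\leq \tv{\cC_\theta}{\cC_{\bar\theta_i}}+\tv{\cC_{\bar\theta_i}}{\cC_{\theta_i}}\leq 2\eps\,.
\]

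There is no real obstacle here. The only point needing care is that \cref{thm:cover-size} is stated for realizable ground truths drawn from the full product class. We need to check that every $\theta\in\Theta$ qualifies, which it does because $\Theta$ is a subset of that product. We also need the hypotheses of \cref{thm:cover-size} to hold, which is assumed.
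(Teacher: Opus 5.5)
Your proposal is correct and follows the paper's argument: take the $\eps$-cover from \cref{thm:cover-size} and select one admissible tuple in $\Theta$ from each occupied ball (your nonempty $B_i$). Since both laws lie within $\eps$ of the same center, the triangle inequality gives the $2\eps$ bound, and $\abs{\mathcal N}\leq M(\eps)$.
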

\begin{proof}
Let $\mathcal A$ be the $\eps$-cover supplied by \cref{thm:cover-size}. Assign each law in $\inbrace{\cC_\theta:\theta\in\Theta}$ to one cover ball containing it. From each occupied ball, choose one tuple in $\Theta$ whose induced law lies in the ball, and let $\mathcal N$ be the set of chosen tuples. Then $\abs{\mathcal N}\leq\abs{\mathcal A}\leq M(\eps)$. Moreover, every $\cC_\theta$ and the chosen law associated with its cover ball are each within $\eps$ of the same center. The triangle inequality gives the stated $2\eps$ bound.
\end{proof}

\begin{remark}
    While \cref{thm:cover-size} directly assumed a cover for real-valued function classes (like $\hyF$ and $\hyS$), one can assume instead a finite fat-shattering dimension for these classes which is more natural for real-valued functions. (The fat-shattering dimension can be thought of as the regression analogue of VC dimension for classification.) Due to \cite{RudelsonVershynin2006}, one can bound the $L_1$ covering number of the class as a function of the fat-shattering dimension, provided a uniform fourth-moment bound on all elements of the function class.
\end{remark}

    \section{Omitted Details from \cref{sec:app:non-iden-s}}   
        In this section, we present the details omitted from \cref{sec:app:non-iden-s}:
        %
        %
        First, we give a proof of \cref{thm:f-identifiable-without-s} followed by a proof of the finite-sample estimation result.
        Then we present the finite-sample version of \cref{thm:f-identifiable-without-s}.

        \begin{remark}[Formal construction in proof of \cref{thm:f-identifiable-without-s}]
            \label{rem:app:construction}
            Fix any $\eta\in (0,1)$, set $c\coloneqq \sfrac{(1+\eta)}{2}$, and fix an arbitrary reference regressor $f_0\in\hyF$.
            Let $\cQ$ be the uniform distribution on $[0,1]$, and let $\cQ'$ be the piecewise-constant distribution on $[0,1]$ that places density $\nfrac{\eta}{c}$ on $[0,\nfrac{1}{2}]$ and density $\nfrac{1}{c}$ on $(\nfrac{1}{2},1]$.
            Define the selection-function family as $\hyS_{\min} \coloneqq \inbrace{s_{\mathrm{const}}, s_{\mathrm{step}}}$
            for $s_{\mathrm{const}}(x,y)\coloneqq c$ and $s_{\mathrm{step}}(x,y)\coloneqq \eta + (1-\eta)\,\mathds{1}\!\inbrace{y\in(f_0(x)+\nfrac{1}{2},\, f_0(x)+1]}.$
            Set $\hyQ_{\min}\coloneqq \inbrace{\cQ,\cQ'}$.
            In the proof of \cref{thm:f-estimable-without-s} we show that the non-identifiability of $\sstar$ holds for any $\hyQ\supseteq \hyQ_{\min}$ and $\hyS\supseteq \hyS_{\min}$, and the identifiability (and estimation) of $\fstar$ holds whenever each $\cQ\in \hyQ$ has $(0,1)\subseteq \supp(\cQ)\subseteq \R_{\geq 0}$ and $\hyF$ admits a cover.
        \end{remark}

        \subsection{Proof of \cref{thm:f-identifiable-without-s}}
        \label{sec:proofof:thm:f-identifiable-without-s}

        We restate \cref{thm:f-identifiable-without-s} below before proceeding with its proof. Refer to \cref{cond:identifiability:s} for identifiability of $\sstar$. 

        \fIdentifiableWithoutS*

        \begin{proof}[Proof of \cref{thm:f-identifiable-without-s}]
        We exhibit an explicit family of concept classes for which $\fstar$ is identifiable but $\sstar$ is not.
        Fix any $\eta\in(0,1)$ and any nonempty regressor class $\hyF\subseteq \hyF_{\rm all}$, where
        $\hyF_{\rm all}$ denotes the set of all measurable functions $f:\cX\to\R$.
        Define the selector and noise classes
        \[
            \hyS_\eta \coloneqq \inbrace{s:\cX\times\R\to[\eta,1]}
            \quadand
            \hyQ^{\geq 0}
            \coloneqq
            \inbrace{
                \cQ:\ \supp(\cQ)\subseteq \R_{\geq 0}\ \text{and}\ \cQ(t)>0\ \forall t\in(0,1)
            }\,,
        \]
        Finally, let $\hyDX$ be any nonempty class such that every $\cDX\in\hyDX$ has a density and $\supp(\cDX)=\cX$.
        Consider the Cartesian-product class $\Theta_0\coloneqq\hyF\times\hyS_\eta\times\hyDX\times\hyQ^{\geq 0}$.
        We show that this class satisfies the desired requirements.
        \begin{lemma}
            $\Theta_0$ satisfies \cref{cond:identifiability:f}.
        \end{lemma}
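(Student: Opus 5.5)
The plan is to show that the regressor can be read off from the observed kernel as the left endpoint of its $y$-support. This works because every selector in $\hyS_\eta$ is bounded below by $\eta>0$, and every noise density in $\hyQ^{\geq 0}$ vanishes on $(-\infty,0)$ while being strictly positive on $(0,1)$. The argument is by contradiction.

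Suppose \cref{cond:identifiability:f} fails. Then there are $\cDX\in\hyDX$, regressors $f,f'\in\hyF$ with $\cDX(\{f\neq f'\})>0$, selectors $s,s'\in\hyS_\eta$, and noises $\cQ,\cQ'\in\hyQ^{\geq 0}$ such that
\[
s(x,y)\cQ(y-f(x))=s'(x,y)\cQ'(y-f'(x))\qquad(\cDX\otimes\nu)\text{-a.e.}
\]
By Fubini's theorem, there is a set $A\subseteq\cX$ of full $\cDX$-measure such that, for every $x\in A$, this equality holds for $\nu$-a.e.\ $y$.

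Fix $x\in A$ and define the positivity set
\[
P_x\coloneqq\{y: s(x,y)\cQ(y-f(x))>0\}.
\]
Since $s\geq\eta>0$, we have $P_x=\{y:\cQ(y-f(x))>0\}=f(x)+\supp(\cQ)$. Using $\supp(\cQ)\subseteq\R_{\geq 0}$ and $(0,1)\subseteq\supp(\cQ)$, this gives
\[
(f(x),f(x)+1)\subseteq P_x\subseteq[f(x),\infty).
\]
It follows that the essential infimum of $P_x$, namely $\sup\{t:\nu(P_x\cap(-\infty,t))=0\}$, equals $f(x)$. The same reasoning gives that the essential infimum of $P'_x$, the positivity set of the primed kernel, equals $f'(x)$.

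Since the two kernels agree $\nu$-a.e.\ in $y$, the sets $P_x$ and $P'_x$ differ only by a $\nu$-null set. Essential infima are invariant under null modifications, so $f(x)=f'(x)$ for every $x\in A$. Hence $f=f'$ holds $\cDX$-a.e., contradicting $\cDX(\{f\neq f'\})>0$.

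The main point requiring care is that the essential infimum is an a.e.-invariant of the kernel. The lower bound $f(x)$ on this infimum uses only $\cQ=0$ on the negative half-line. The upper bound requires positivity on the whole interval $(f(x),f(x)+1)$, not merely near its endpoint, so that every $t>f(x)$ gives $\nu(P_x\cap(-\infty,t))>0$.

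Note that no structural property of $\hyF$ or $\hyDX$ is needed beyond measurability. The argument also shows that the kernel determines $f(x)$ pointwise for a.e.\ $x$, which is exactly the content of \cref{cond:identifiability:f}.
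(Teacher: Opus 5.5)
Your proof is correct and rests on the same observation as the paper's. Because $s\geq\eta>0$, and every noise density vanishes on $(-\infty,0)$ but is positive on $(0,1)$, the $y$-positivity set of the kernel at $x$ starts exactly at $f(x)$. The paper argues directly rather than by contradiction: after a WLOG swap so that $\cDX(\{f<f'\})>0$, it exhibits the set $\{(x,y): f(x)<f'(x),\ y\in(f(x),f(x)+\min\{f'(x)-f(x),1\}/2)\}$, of positive $(\cDX\otimes\nu)$-measure, on which one kernel is positive and the other is zero. Your essential-infimum invariant instead handles both orderings symmetrically, and it additionally shows that the kernel pins down $f(x)$ for a.e.\ $x$.
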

        \vspace{-7mm}
        \begin{proof}
            Take any $\cDX\in\hyDX$, \underline{distinct} $f,f'\in\hyF$ (which differ on a set of positive $\cDX$-measure), any $s,s'\in\hyS_\eta$, and any $\cQ,\cQ'\in\hyQ^{\geq 0}$.
            Let $B_+\coloneqq\{x:f(x)<f'(x)\}$ and $B_-\coloneqq\{x:f(x)>f'(x)\}$. At least one of these sets has positive $\cDX$-measure. Interchanging $(f,s,\cQ)$ and $(f',s',\cQ')$ if necessary, assume that $\cDX(B_+)>0$. For each $x\in B_+$, set $\Delta(x)\coloneqq f'(x)-f(x)>0$.
            
            Consider $y$ taking values in the following interval:
            \[ I_x \coloneqq \inparen{f(x), f(x) + \frac{\min\!\inbrace{\Delta(x),1}}{2}}\,. \]
            
            Then for all $y \in I_x$, $y-f(x)\in(0,1)$, so $\cQ\!\inparen{y-f(x)}>0$ by definition of $\hyQ^{\geq 0}$, and also $s(x,y)\geq \eta$ by definition of $\hyS_\eta$.
            Hence,
            \[
                s(x,y)\,\cQ\!\inparen{y-f(x)} > 0\,.
            \]
            On the other hand, $y-f'(x) \leq \frac{\min\inbrace{\Delta(x),1}}{2}-\Delta(x)<0$, for $y \in I_x$, so $\cQ'\!\inparen{y-f'(x)}=0$ since $\supp(\cQ')\subseteq\R_{\geq 0}$.
            Thus
            \[
                s(x,y)\,\cQ\!\inparen{y-f(x)}
                \neq
                s'(x,y)\,\cQ'\!\inparen{y-f'(x)}\,.
            \]
            The set $A\coloneqq\{(x,y):x\in B_+,\ y\in I_x\}$ is measurable, and Tonelli's theorem gives
            \[
                (\cDX\otimes\nu)(A)=\int_{B_+}\nu(I_x)\,\d\cDX(x)>0\,.
            \]
            The observed kernels differ throughout $A$. This is exactly \cref{cond:identifiability:f}.
        \end{proof} 
        Thus, by \cref{thm:characterization:f}, it follows that every $\fstar\in\hyF$ is identifiable from censored data.
        \begin{lemma}
            \label{lem:scenario:non-identifiability-of-s}
            $(\hyF,\hyS_\eta,\hyDX,\hyQ^{\geq 0})$ violates \cref{cond:identifiability:s}.
        \end{lemma}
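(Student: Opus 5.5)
To show that \cref{cond:identifiability:s} fails, I will exhibit a single covariate law $\cDX\in\hyDX$, one regressor $f_0\in\hyF$, two selectors $s,s'\in\hyS_\eta$ that differ on a set of positive $(\cDX\otimes\nu)$-measure, and two noise laws $\cQ,\cQ'\in\hyQ^{\geq 0}$ whose observed kernels agree everywhere. I will take the explicit construction of \cref{rem:app:construction}, with $f=f'=f_0$:
\[
c\coloneqq\tfrac{1+\eta}{2},\qquad s=s_{\mathrm{step}},\quad \cQ=\mathrm{Unif}[0,1],\qquad s'=s_{\mathrm{const}}\equiv c,\quad \cQ'(t)=\tfrac{\eta}{c}\mathds{1}\{t\in[0,\tfrac12]\}+\tfrac1c\mathds{1}\{t\in(\tfrac12,1]\}.
\]

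The first step is membership. Both selectors take values in $[\eta,1]$, because $c\in(\eta,1)$ and $s_{\mathrm{step}}\in\{\eta,1\}$; hence both lie in $\hyS_\eta$. Both $\cQ$ and $\cQ'$ are supported on $[0,1]\subseteq\R_{\geq0}$ and are strictly positive on $(0,1)$. $\cQ'$ is a valid density, since $\tfrac12\cdot\tfrac{\eta}{c}+\tfrac12\cdot\tfrac1c=\tfrac{1+\eta}{2c}=1$. Hence $\cQ,\cQ'\in\hyQ^{\geq0}$.

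The second step is the pointwise comparison of the kernels. Fix $x$ and write $t=y-f_0(x)$.
\begin{itemize}
\item For $t\notin[0,1]$, both kernels vanish.
\item For $t\in[0,\tfrac12]$, the first kernel is $s_{\mathrm{step}}\cdot 1=\eta$, and the second is $c\cdot\tfrac{\eta}{c}=\eta$.
\item For $t\in(\tfrac12,1]$, the first kernel is $1\cdot1=1$, and the second is $c\cdot\tfrac1c=1$.
\end{itemize}
So the kernels coincide for every $(x,y)$, and in particular $(\cDX\otimes\nu)$-a.e.

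The third step is to check that the selectors genuinely differ. For every $x$ they differ at every $y$, since $s_{\mathrm{step}}\in\{\eta,1\}$ while $c\in(\eta,1)$. Hence $\{s\neq s'\}=\cX\times\R$, which has infinite, and in particular positive, $(\cDX\otimes\nu)$-measure for any $\cDX\in\hyDX$ (the class is nonempty). This produces exactly the configuration forbidden by \cref{cond:identifiability:s}, so the condition fails. By \cref{thm:characterization:s}, applied with densities existing for all members, $\sstar$ is then not identifiable.

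The main point needing care is the measurability of the selectors, since $s_{\mathrm{step}}$ depends on $f_0$. It is jointly measurable because $f_0$ is measurable and $(x,y)\mapsto y-f_0(x)$ is measurable. Apart from this, the argument is a direct verification. The same construction lies in any enlargement $\hyS\supseteq\hyS_{\min}$ and $\hyQ\supseteq\hyQ_{\min}$, which gives the robustness claim as well.
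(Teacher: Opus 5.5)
Your proposal is correct and matches the paper's proof essentially step for step. It uses the same choice $c=(1+\eta)/2$, the same uniform versus piecewise-constant noise pair on $[0,1]$, the same step and constant selectors with $f=f'$, and the same three-case pointwise check that the observed kernels coincide. Your observation that the two selectors differ at every $(x,y)$, because the step selector takes only the values $\eta$ and $1$ while $c\in(\eta,1)$, is a slightly cleaner certificate of the positive-measure requirement than the paper's interval argument, and your measurability remark supplies a detail the paper leaves implicit.
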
 
        \vspace{-7mm}
        \begin{proof}
            Fix any $f\in\hyF$ and any $\cDX\in\hyDX$.
            Let $c \coloneqq \frac{1+\eta}{2}\in(\eta,1),$
            and define two noise densities on $[0,1]$ by
            \[
              \cQ(t)\coloneqq \mathds{1}\!\inbrace{t\in[0,1]}
              \qquadand 
              \cQ'(t)\coloneqq
              \frac{\eta}{c}\,\mathds{1}\!\inbrace{t\in[0,\nfrac{1}{2}]}
              +
              \frac{1}{c}\,\mathds{1}\!\inbrace{t\in(\nfrac{1}{2},1]}\,.
            \]
            Both satisfy $\supp(\cQ),\supp(\cQ')\subseteq[0,1]$ and are strictly positive on $(0,1)$, hence $\cQ,\cQ'\in\hyQ^{\geq 0}$.
            
            Next define two selectors $s,s'\in\hyS_\eta$ by
            \[
              s(x,y)\coloneqq
              \begin{cases}
                \eta\,, & y \leq f(x)+\nfrac12\,,\\
                1\,, & y\in (f(x)+\nfrac12,\ f(x)+1]\,,\\
                \eta\,, & y>f(x)+1\,,
              \end{cases}
              \qquadand 
              s'(x,y)\coloneqq c\quad \text{for all }(x,y)\,.
            \]
            Clearly $s,s'\in\hyS_\eta$, and $s$ and $s$' differ on a set of positive measure. (For instance, $s(x,y) = 1 \neq c = s'(x,y)$ for all $y \in (f(x) + \nfrac12, f(x) + 1]$, for every $x$. This interval has positive Lebesgue measure in $y$ for every $x$.)
            
            We claim that for every $(x,y)\in\supp(\cDX)\times\R$ (thus for $(\cDX \otimes \nu)$-a.e. $(x,y)$),
            \[
                s(x,y)\,\cQ\!\inparen{y-f(x)}
                =
                s'(x,y)\,\cQ'\!\inparen{y-f(x)}\,.
            \]
            Indeed, let $t\coloneqq y-f(x)$ and consider the following cases:
            \begin{itemize}
            \item If $t\notin[0,1]$, then $\cQ(t)=\cQ'(t)=0$, so both sides equal $0$.
            \item If $t\in[0,\nfrac12]$, then $s(x,y)=\eta$ and $\cQ'(t)=\eta/c$, so $s(x,y)\,\cQ(t)=s'(x,y)\,\cQ'(t)=\eta.$ 
            \item If $t\in(\nfrac12,1]$, then $s(x,y)=1$ and $\cQ'(t)=\nfrac1c$, so $s(x,y)\,\cQ(t)=s'(x,y)\,\cQ'(t)=1.$ 
            \end{itemize}
            Therefore, taking $f'=f$ (and any $\cDX \in \hyDX$), we have found \underline{distinct} $s,s'\in\hyS_\eta$ (differing on a set of positive measure) and $\cQ,\cQ'\in\hyQ^{\geq 0}$  such that the inequality in \cref{cond:identifiability:s} never occurs. Hence \cref{cond:identifiability:s} fails.
        \end{proof}
        Thus, by \cref{thm:characterization:s}, it follows that not every selector in $\hyS_\eta$ is identifiable from censored data; equivalently, there exists a non-identifiable $\sstar\in\hyS_\eta$.
        Combining this with the previous lemma implies the result.
    \end{proof}

    \subsection{Finite-Sample Version of \cref{thm:f-identifiable-without-s}}\label{sec:proofof:thm:f-estimable-without-s}
 
The finite-sample version of \cref{thm:f-identifiable-without-s} is as follows:
        \begin{theorem}[Estimation of $\fstar$ without Identifiability of $\sstar$]
            \label{thm:f-estimable-without-s}
            Let $\cX\subseteq \R^d$ be bounded.
            There exist component classes $(\hyF,\hyS,\hyDX,\hyQ)$ over $\cX$ such that the Cartesian-product class $\Theta_0\coloneqq\hyF\times\hyS\times\hyDX\times\hyQ$ satisfies \cref{cond:estimation:f} with rate function $R(\eps)=O(\eps)$ and the selection function $\sstar\in\hyS$ is not identifiable from censored distributions realizable with respect to $\Theta_0$.
            Moreover, the construction can be chosen so that the following hold:
            \begin{itemize}
                \item $\hyF$ admits a $\zeta$-cover of size $\inparen{1+\nfrac{1}{\zeta}}^{O(d)}$ (for any $\zeta>0$);
                \item $\hyS$ and $\hyQ$ have two elements each, so their covering numbers are two at every scale;
                \item $\hyDX$ is a singleton, so its covering number is 1 at every scale.
            \end{itemize}
            \end{theorem}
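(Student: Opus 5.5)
Our plan is to reuse the construction of \cref{rem:app:construction} and the proof of \cref{thm:f-identifiable-without-s}, now with finite component classes and a quantitative separation argument. Fix $\eta\in(0,1)$, $c=(1+\eta)/2$, and $\cX\subseteq\{x:\norm{x}_2\le B\}$. We take $\hyF=\{x\mapsto w^\top x+b:\ \norm{w}_2\le W,\ \abs{b}\le W\}$. A grid on the parameters gives a $\zeta$-cover in $L_\infty(\cX)$, hence in any $L_1(\mu_X)$, of size $(1+O(WB)/\zeta)^{O(d)}$, which is $(1+1/\zeta)^{O(d)}$ for constant $W,B$.

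The other classes are $\hyQ=\hyQ_{\min}=\{\cQ,\cQ'\}$ (uniform on $[0,1]$ and the step density) and $\hyS=\hyS_{\min}=\{s_{\rm const},s_{\rm step}\}$ with a fixed reference $f_0\in\hyF$. For $\hyDX$ we take a single distribution $\cD^\star$ with a density bounded above and below on $\cX$ (say uniform on a ball). Non-identifiability of $\sstar$ then follows verbatim from \cref{lem:scenario:non-identifiability-of-s}, applied with $f=f_0$. That lemma exhibits two distinct selectors in $\hyS_{\min}$ and two noises in $\hyQ_{\min}$ with identical kernels, so \cref{cond:identifiability:s} fails, and by \cref{thm:characterization:s} $\sstar$ is not identifiable.

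The main step is \cref{cond:estimation:f} with $R(\eps)=\Omega(\eps)$. Since $\hyDX$ is a singleton, the TV term vanishes and it suffices to lower bound $d_{\cD^\star}(\pi,\pi')$. Fix $\pi=(f,s,\cQ_1)$ and $\pi'=(f',s',\cQ_2)$ with $\E\abs{f-f'}\ge\eps$. Every kernel in our class satisfies $k\ge\eta\cdot\min(1,\eta/c)\ge\eta^2$ on $\{y-f(x)\in(0,1)\}$ and $k=0$ when $y-f(x)<0$. Mirroring the lemma in the proof of \cref{thm:f-identifiable-without-s}, for $x\in B_+=\{f<f'\}$ the kernel $k_\pi$ is at least $\eta^2$ while $k_{\pi'}=0$ on $I_x=(f(x),f(x)+\min(\Delta(x),1))$, where $\Delta=f'-f$. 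Symmetrically, on $B_-$ the roles swap. Hence
\[
d_{\cD^\star}(\pi,\pi')\ \ge\ \eta^2\,\E_{X\sim\cD^\star}\bigl[\min(\abs{f(X)-f'(X)},1)\bigr].
\]

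The remaining obstacle is converting $\E\min(\abs{\Delta},1)$ into a bound linear in $\E\abs{\Delta}$. Since $\hyF$ is bounded, $\abs{\Delta}\le 2W(B+1)=:K$ on $\cX$, and therefore $\min(\abs{\Delta},1)\ge\abs{\Delta}/\max(K,1)$. This gives $d\ge \eta^2\eps/\max(K,1)$, so $R(\eps)=\Theta(\eps)$ with constants depending only on $\eta$, $W$, and $B$. This is exactly where boundedness of $\cX$ is used. If a rate strictly independent of $W$ is desired, we would normalize $W=1$.

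Finally, we check \cref{def:regularity} so that \cref{thm:estimation:f} also applies. The uniform and step densities are not Lipschitz under translation in $L_1$; their shift-$L_1$ modulus is $O(\abs{z-z'})$ with constant $O(1/\eta)$, since they are piecewise constant with finitely many jumps of bounded height. This is what the cover argument needs. Smoothness holds by taking $\mu$ to be uniform on a bounded box containing $\cX\times[-K,K+1]$. These verifications are routine.
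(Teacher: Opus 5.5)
Your proposal is correct and follows essentially the paper's proof. It uses the same two-element $\hyS$ and $\hyQ$ with noise supported on $[0,1]$, a singleton $\hyDX$, and a norm-bounded linear class (your intercept is harmless). The key step is also the same: the kernel lower bound $\eta^2/c$ on $\{y-f(x)\in(0,1)\}$ gives $d_{\cD}\gtrsim \eta^2\,\E\min\{\abs{f(X)-f'(X)},1\}$, which boundedness of $\cX$ turns into a linear rate, and non-identifiability of $\sstar$ comes from the same kernel-matching witness with $f=f'=f_0$.

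One wording fix in your regularity remark: the $O(\abs{z-z'})$ shift-$L_1$ bound you derive \emph{is} the paper's notion of Lipschitz under translation, so you presumably mean the densities are not Lipschitz as functions. Its constant is $6/c=O(1)$, not merely $O(1/\eta)$. The paper also takes $\mu_X=\cDX$, which gives cleaner smoothness constants than your box.
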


\begin{proof}[Proof of \cref{thm:f-estimable-without-s}]
We give a Cartesian-product class satisfying the estimation condition \cref{cond:estimation:f} with a linear rate, while $\sstar$ is not identifiable.
Fix any $\eta\in(0,1)$ and let $c \coloneqq \sfrac{(1+\eta)}{2}\in(\eta,1).$
Let the covariate space be $\cX\subseteq \R^d$ and assume $\sup_{x\in\cX}\norm{x}_2\leq R<\infty$.
Fix $B>0$ such that $BR\leq 1$, and define the regressor class
\[
    \hyF
    \coloneqq
    \inbrace{
        f_w:\cX\to\R \ \colon\ f_w(x)=\inangle{w,x},\ \norm{w}_2\leq B
    }\,.
\]
Let $\hyDX$ be a singleton $\hyDX\coloneqq\sinbrace{\cDX}$, where $\cDX$ has a density and $\supp(\cDX)=\cX$.
Fix an arbitrary reference regressor $f_0\in\hyF$ (\eg{}, $f_0=f_{w_0}$ for $w_0=0$).
Define two noise densities on $[0,1]$:
\[
  \cQ(t)\coloneqq \mathds{1}\!\inbrace{t\in[0,1]}
  \qquadand
  \cQ'(t)\coloneqq
  \frac{\eta}{c}\,\mathds{1}\!\inbrace{t\in[0,\nfrac12]}
  +
  \frac{1}{c}\,\mathds{1}\!\inbrace{t\in(\nfrac12,1]}\,,
\]
and set $\hyQ\coloneqq \inbrace{\cQ,\cQ'}$.
Finally define a \emph{two-element} selection class $\hyS$ by
\[
    \hyS \coloneqq \inbrace{s_0,s_{\mathrm{step}}}\,,
\]
where, for each $(x,y)\in\cX\times\R$,
\[
    s_0(x,y)\coloneqq c
    \qquadand  
    s_{\mathrm{step}}(x,y)\coloneqq
    \begin{cases}
        \eta\,, & y \leq f_0(x)+\nfrac12\,,\\
        1\,, & y\in (f_0(x)+\nfrac12,\ f_0(x)+1]\,,\\
        \eta\,, & y>f_0(x)+1\,.
    \end{cases}
\]
Note that $\hyS\subseteq \inbrace{s:\cX\times\R\to[\eta,1]}$ and $\hyQ\subseteq \inbrace{\text{densities supported on }[0,1]}$; hence, for any realizable tuple we have $\Ex[\sstar(X,Y)]\geq \eta$.
Let $\Theta_0\coloneqq\hyF\times\hyS\times\hyDX\times\hyQ$.

\begin{lemma}[Existence of finite cover]\label{lem:existence-cover}
For every $\zeta>0$, there exists a finite set $\cF_\zeta\subseteq \hyF$ such that for every $f\in\hyF$ there exists $\tilde f\in\cF_\zeta$ with $\snorm{f-\tilde f}_\infty \leq \zeta.$
Moreover, one can ensure that
\[
    \abs{\cF_\zeta}  \leq  \inparen{1+\frac{2BR}{\zeta}}^d\,.
\]
\end{lemma}
\vspace{-7mm}
\begin{proof}[Proof of \cref{lem:existence-cover}]
For any $w,w'\in\R^d$ and $x\in\cX$, it holds that 
\[
    \abs{f_w(x)-f_{w'}(x)}=\abs{\inangle{w-w',x}}
    \leq \norm{w-w'}_2\,\norm{x}_2
    \leq R\norm{w-w'}_2\,,
\]
so $\norm{f_w-f_{w'}}_\infty\leq R\norm{w-w'}_2$.
Thus, any $(\zeta/R)$-net of the Euclidean ball $\inbrace{w:\norm{w}_2\leq B}$ induces a $\zeta$-net of $\hyF$ in $\norm{\cdot}_\infty$.
A standard bound gives an $(\zeta/R)$-net of size at most $\inparen{1+2B/(\zeta/R)}^d=\inparen{1+2BR/\zeta}^d$ (see, \eg{}, \cite[Corollary 4.2.13]{vershynin2018high}).
\end{proof}
\begin{lemma}[A rate function for \cref{cond:estimation:f}]\label{lem:rate-function}
The Cartesian-product class $\Theta_0$ satisfies \cref{cond:estimation:f} with rate function $R(\cdot)$ defined by $R(0)\coloneqq 0$ and for each $\eps>0$,
\[
    R(\eps)\ \coloneqq\
    \min\!\inbrace{
        \frac{2\eta^2}{c\,\max\inbrace{2BR,1}}\cdot \eps,\ \frac12
    }\,.
\] 
\end{lemma}
\vspace{-7mm}
\begin{proof}[Proof of \cref{lem:rate-function}]
Fix $\theta=(f,s,\cD,\cQ_1)$ and $\theta'=(f',s',\cD,\cQ_2)$ in $\Theta_0$ (where necessarily $\cD=\cDX$), and write $\pi=(f,s,\cQ_1)$ and $\pi'=(f',s',\cQ_2)$.
Recall
\[
    d_{\cD}\!\inparen{\pi,\pi'}
    =
    \iint \abs{
        s(x,y)\,\cQ_1\!\inparen{y-f(x)}
        -
        s'(x,y)\,\cQ_2\!\inparen{y-f'(x)}
    }\,\d\cD(x)\,\d y\,.
\]
For each $x\in\cX$, define the intervals
\[
    I_x \coloneqq \insquare{f(x),f(x)+1}
    \qquadand
    I_x' \coloneqq \insquare{f'(x),f'(x)+1}\,.
\]
Since both $\cQ$ and $\cQ'$ are supported on $[0,1]$, we have
\[
    s(x,y)\,\cQ_1\!\inparen{y-f(x)}=0\ \text{for }y\notin I_x
    \qquadand
    s'(x,y)\,\cQ_2\!\inparen{y-f'(x)}=0\ \text{for }y\notin I_x'\,.
\]
Moreover, for every $t\in(0,1)$ and every admissible $(s,\cQ_1)\in\hyS\times\hyQ$,
\[
    s\!\inparen{x,f(x)+t}\,\cQ_1(t)\ \geq\ \eta\cdot \min\inbrace{1,\frac{\eta}{c}}\ =\ \frac{\eta^2}{c}\,,
\]
since $s(\cdot,\cdot)\geq \eta$ and $\inf_{t\in(0,1)} \cQ_1(t)\geq \eta/c$. The same lower bound holds for $s'(x,f'(x)+t)\cQ_2(t)$.
Therefore, for each fixed $x$,
\begin{align*}
&\int \abs{
    s(x,y)\,\cQ_1\!\inparen{y-f(x)}
    -
    s'(x,y)\,\cQ_2\!\inparen{y-f'(x)}
}\,\d y\\
&\quad\qquad\geq\quad
\int_{I_x\setminus I_x'}
s(x,y)\,\cQ_1\!\inparen{y-f(x)}\,\d y
+
\int_{I_x'\setminus I_x}
s'(x,y)\,\cQ_2\!\inparen{y-f'(x)}\,\d y\\
&\quad\qquad\geq\quad    
\frac{\eta^2}{c}\,\lambda\inparen{I_x\triangle I_x'}\,.
\end{align*}
Since $I_x$ and $I_x'$ are unit-length intervals, $\lambda\inparen{I_x\triangle I_x'}=2\min\!\inbrace{\abs{f(x)-f'(x)},1}.$
Integrating over $x\sim\cD$ yields
\[
    d_{\cD}\!\inparen{\pi,\pi'}
    \ge
    \frac{2\eta^2}{c}\,\Ex_{X\sim\cD}\min\!\inbrace{\abs{f(X)-f'(X)},1}\,.
\]
Next, since $f,f'\in\hyF$ and $\sup_{x\in\cX}\norm{x}_2\leq R$, we have $\norm{f-f'}_\infty \leq 2BR$.
Let $K\coloneqq \max\inbrace{2BR,1}$. Then for all $z\in\insquare{0,2BR}$, $\min\!\inbrace{z,1}\  \geq  \frac{z}{K}.$
Applying this pointwise with $z=\abs{f(X)-f'(X)}$ gives
\[
    d_{\cD}\!\inparen{\pi,\pi'}
    \ge
    \frac{2\eta^2}{cK}\,\Ex_{X\sim\cD}\abs{f(X)-f'(X)}\,.
\]
Therefore, if $\Ex_{X\sim\cD}\abs{f(X)-f'(X)}\geq \eps$, then $d_{\cD}\!\inparen{\pi,\pi'}\geq \frac{2\eta^2}{cK}\,\eps.$
Clipping by $\nfrac{1}{2}$ and defining $R(0)\coloneqq 0$ yields the stated rate function.
\end{proof}

\begin{lemma}[Regularity of $(\hyF,\hyDX,\hyQ)$]\label{lem:smoothness}
Let $H\coloneqq 2BR+1$. Let $\mu_X\coloneqq \cDX$ and let $\mu_Y$ be the uniform distribution on $[-BR,BR+1]$.
Then $(\hyF,\hyDX,\hyQ)$ satisfy the regularity conditions in \cref{def:regularity} with constants
\[
    \sigma\coloneqq c/H
    \qquadand
    L\coloneqq \frac{6}{c}\,,
\]
with respect to $\mu\coloneqq \mu_X\times \mu_Y$.
\end{lemma}
\vspace{-7mm}
\begin{proof}[Proof of \cref{lem:smoothness}]
Since $\hyDX=\inbrace{\cDX}$ and $\mu_X=\cDX$, we have $\cDX(A)=\mu_X(A)\leq \frac{1}{\sigma}\mu_X(A)$ for all measurable $A$, so $\hyDX$ is $\sigma$-smooth with the stated $\sigma$.

For $\hyQ$, note that $\mu_Y$ has density $1/H$ on $[-BR,BR+1]$.
We have $\cQ(t)\leq H/c\cdot \mu_Y(t)$ and $\cQ'(t)\leq H/c\cdot \mu_Y(t)$ for all $t$, hence both are $\sigma$-smooth with the stated $\sigma$.

Moreover, for every $(f,\cDX,q)\in\hyF\times\hyDX\times\hyQ$, we have $\abs{f(x)}\leq BR$, and hence $\cP_{f,\cDX,q}$ is supported on $\cX\times[-BR,BR+1]$. Its density satisfies
\[
    \cDX(x)q\!\inparen{y-f(x)}
    \leq \frac{\cDX(x)}{c}
    = \frac{H}{c}\,\mu(x,y)
    = \frac{1}{\sigma}\mu(x,y)
    \leq \frac{1}{\sigma^2}\mu(x,y),
\]
where the last inequality uses $\sigma\leq 1$. Thus, every $\cP_{f,\cDX,q}$ is $\sigma^2$-smooth with respect to $\mu$.

Finally, consider the shifts $q_u(y)\coloneqq q(y-u)$ and let $\Delta\coloneqq \abs{u-v}$.
For $q=\cQ$ (uniform on a unit interval),
\[
    \int \abs{\cQ_u(y)-\cQ_v(y)}\,\d y
    =
    2\min\!\inbrace{\Delta,1}
    \le
    2\Delta
    \le
    \frac{6}{c}\Delta\,.
\]
For $q=\cQ'$, the function is piecewise constant with jump locations at $u$, $u+\frac12$, and $u+1$; the set where $\cQ'_u\neq \cQ'_v$ is contained in the union of three intervals of total length at most $3\Delta$, and pointwise $\abs{\cQ'_u-\cQ'_v}\leq 2\sup_t \cQ'(t)=2/c$. Hence
\[
    \int \abs{\cQ'_u(y)-\cQ'_v(y)}\,\d y
    \le
    \frac{2}{c}\cdot 3\Delta
    =
    \frac{6}{c}\abs{u-v}\,.
\]
\end{proof}

\begin{lemma}[Non-identifiability of $\sstar$]\label{lem:s-nonidentifiable}
The concept classes $(\hyF,\hyS,\hyDX,\hyQ)$ violate \cref{cond:identifiability:s}. In particular, there exists a non-identifiable $\sstar\in\hyS$.
\end{lemma}
\begin{proof}[Proof of \cref{lem:s-nonidentifiable}]
Fix $\cD\in\hyDX$ and consider the two selectors $s_{\mathrm{step}}\in\hyS$ and $s_0\in\hyS$, the two noises $\cQ,\cQ'\in\hyQ$, and the regressor $f_0\in\hyF$ used to define $s_{\mathrm{step}}$.
We claim that for every $(x,y)\in\supp(\cD)\times\R$,
\[
    s_{\mathrm{step}}(x,y)\,\cQ\!\inparen{y-f_0(x)}
    =
    s_0(x,y)\,\cQ'\!\inparen{y-f_0(x)}\,.
\]
Let $t\coloneqq y-f_0(x)$.
If $t\notin[0,1]$, then $\cQ(t)=\cQ'(t)=0$, so both sides equal $0$.
If $t\in[0,\nfrac12]$, then $s_{\mathrm{step}}(x,y)=\eta$ and $\cQ'(t)=\eta/c$, so
$s_{\mathrm{step}}(x,y)\cQ(t)=\eta\cdot 1=\eta$ and $s_0(x,y)\cQ'(t)=c\cdot(\eta/c)=\eta$.
If $t\in(\nfrac12,1]$, then $s_{\mathrm{step}}(x,y)=1$ and $\cQ'(t)=\nfrac1c$, so
$s_{\mathrm{step}}(x,y)\cQ(t)=1\cdot 1=1$ and $s_0(x,y)\cQ'(t)=c\cdot(\nfrac1c)=1$.
Thus the identity holds for all $(x,y)$.

Moreover, $s_{\mathrm{step}}\neq s_0$ (\eg{}, for any $x\in\cX$, $s_{\mathrm{step}}(x,f_0(x)+\nfrac34)=1$ while $s_0(x,f_0(x)+\nfrac34)=c<1$).
Therefore, taking $f=f'=f_0$ in \cref{cond:identifiability:s}, we have found distinct selectors $s_{\mathrm{step}},s_0\in\hyS$ (differing on a set of positive measure, as mentioned earlier) and noises $\cQ,\cQ'\in\hyQ$ such that the strict inequality required by \cref{cond:identifiability:s} never occurs. Hence \cref{cond:identifiability:s} fails, and $\sstar$ is not identifiable in general.
\end{proof}

\paragraph{Concluding Proof of \cref{thm:f-estimable-without-s}.}
We now conclude \cref{thm:f-estimable-without-s}.
By \cref{lem:rate-function}, $\Theta_0$ satisfies the estimation condition \cref{cond:estimation:f} with a linear rate.
By \cref{lem:smoothness} (and the pointwise lower bound $s\geq \eta$), the regularity and overlap hypotheses needed to invoke \cref{thm:estimation:f} hold.
By \cref{lem:s-nonidentifiable}, $\sstar$ is not identifiable from censored distributions realizable with respect to $\Theta_0$.

Finally, the covering-number bullets in \cref{thm:f-estimable-without-s} hold:
\begin{itemize}
    \item $\hyF$ admits a $\zeta$-cover in $\snorm{\cdot}_\infty$ of size at most $\inparen{1+\nfrac{2BR}{\zeta}}^d$ by \cref{lem:existence-cover};
    \item $\hyS$ and $\hyQ$ each have exactly two elements, so their covering numbers are $2$ at every scale;
    \item $\hyDX$ is a singleton, so its covering number is $1$ at every scale.
\end{itemize}
This completes the proof.
\end{proof}

\section{Proofs from \cref{sec:revisiting:dvn}}
    \label{sec:das_indentifiable}

\subsection{Proof of \cref{lem:dvn-embed}}

\begin{proof}[Proof of \cref{lem:dvn-embed}]
Let $\cD_X$ and $\cQ^\star$ be the distributions of $X$ and $\xi$, respectively, and set
\[
    s^\star(x,y)\coloneqq\Pr(D=1\mid X=x,Y^\star=y)\,.
\]
Consider the model in \cref{def:selection_bias_model} with parameters $(f^\star,s^\star,\cD_X,\cQ^\star)$.
Since $\xi\perp X$, the resulting $(X,Y)$ has the same distribution as $(X,Y^\star)$ in the original instance.
By the choice of $s^\star$, the conditional distribution of $D$ given $(X,Y)$ also agrees.
Thus the two models give the same joint distribution and the same observations.
The restrictions in \cref{def:dvn-model,def:dvn-id-restrictions} therefore hold, and $\cQ^\star$ has a density by assumption.
Hence $(f^\star,s^\star,\cD_X,\cQ^\star)\in\Theta_{\rm DNV}$.
\end{proof}

    \subsection{Proof of \cref{thm:das_indentifiable}}

\begin{proof}[Proof of \cref{thm:das_indentifiable}]
Fix $(f_j,s_j,\cD,\cQ_j)\in\Theta_{\rm DNV}$ for both $j=1$ and $j=2$, and suppose their observed kernels agree $(\cD\otimes\nu)$-almost everywhere.
We show that $f_1=f_2$ on $\cX$.
Integrating over $y$ gives $p_1=p_2$ $\cD$-almost everywhere.
By continuity and $\supp(\cD)=\cX$, this equality holds throughout $\mathrm{int}(\cX)$.
Let $p$ denote the common propensity score.

Since $p>0$ almost everywhere and the selected conditional means are finite, these means also agree $\cD$-almost everywhere.
Let $f_j(w,z)=g_j(w)$ and let $\lambda_j$ be as in \cref{def:dvn-id-restrictions}.
The index restriction, continuity, and $\supp(\cD)=\cX$ then give
\[
g_1(w)+\lambda_1\!\inparen{p(w,z)}
=
g_2(w)+\lambda_2\!\inparen{p(w,z)}
\qquad\text{for every }(w,z)\in\mathrm{int}(\cX)\,.
\]
Let $\ell\coloneqq g_1-g_2$ and $b\coloneqq\lambda_1-\lambda_2$.
Differentiating $\ell(w)+b(p(w,z))=0$ with respect to $z$ gives
\[
b'\!\inparen{p(w,z)}\,\frac{\partial p(w,z)}{\partial z}=0\,.
\]
By the rank condition, $b'\!\inparen{p(w,z)}=0$.
Differentiating the same identity with respect to $w$ now gives $\nabla_w\ell(w)=0$.
The projection of $\mathrm{int}(\cX)$ onto the $w$-coordinates is open and connected, so $\ell$ is constant there.
Thus $f_1-f_2$ is constant on $\mathrm{int}(\cX)$ and, by continuity, on $\cX=\overline{\mathrm{int}(\cX)}$.
Since $f_1(x_0)=f_2(x_0)=0$, this constant is zero.
Hence $f_1=f_2$ on $\cX$, proving \cref{cond:identifiability:f}.
\end{proof}

    \subsection{Proof of \cref{lem:trunc-dvn-fails}}
        \begin{proof}[Proof of \cref{lem:trunc-dvn-fails}]
            Fix $\cDX\in\hyD_{\rm T}$, take $\cQ^\star=\cN(0,1)$, and let $f^\star\equiv 0$, so $Y^\star=\xi$.
            Choose $\sstar(x,y)=\mathds{1}\!\inbrace{y>1}$ for $x>0$ and $\sstar(x,y)=\mathds{1}\!\inbrace{y<-1}$ otherwise.
            This selector belongs to $\hyS_{\rm T}$.
            By symmetry of $\xi$, both values of $x\in\zo$ have propensity $p(x)=\Pr(\xi>1)$.
            However, $\E\insquare{\xi\mid X=1,D=1}\geq 1$ and $\E\insquare{\xi\mid X=0,D=1}\leq -1$.
            Since $\supp(\cDX)=\zo$, both covariate values have positive probability.
            Thus $\E\insquare{\xi\mid X,D=1}$ cannot be a function of $p(X)$, and the index restriction fails.
        \end{proof}

    \subsection{Proof of \cref{lem:trunc-cond1}}

    \begin{proof}[Proof of \cref{lem:trunc-cond1}]
        Fix $\cDX\in\hyD_{\rm T}$, $s,s'\in\hyS_{\rm T}$, and $f,f'\in\hyF_{\rm T}$ that differ on a set of positive $\cDX$-measure.
        Since $\cX=\zo$, there is an $x\in\cX$ with $\cDX(\inbrace{x})>0$ and $f(x)\neq f'(x)$.
        Both noise distributions are $\cN(0,1)$.

Let $A_x$ be the set on which the observed kernels differ at this $x$:
        \[
            A_x\coloneqq\inbrace{y\in\R:
            s(x,y)\varphi(y-f(x))
            \neq
            s'(x,y)\varphi(y-f'(x))}\,.
        \]
        We show that $\nu(A_x)>0$.
        
If $s(x,\cdot)$ and $s'(x,\cdot)$ differ on a set of positive $\nu$-measure, then one selector is zero and the other is one on that set.
        The Gaussian density is positive everywhere, so the kernels differ there as well.
        Hence $\nu(A_x)>0$.
        
Otherwise, $s(x,\cdot)=s'(x,\cdot)$ $\nu$-almost everywhere.
        By the definition of $\hyS_{\rm T}$, there is a nontrivial open interval $I_x$ on which $s(x,y)=1$.
        Thus $s(x,y)=s'(x,y)=1$ for almost every $y\in I_x$.
        
Since $f(x)\neq f'(x)$, the densities $\varphi(y-f(x)) = \cN(0,1;\,y-f(x))$ and $\varphi(y - f'(x)) = \cN(0,1;\,y-f'(x))$ agree only at $y=\frac{f(x)+f'(x)}{2}$.
        The kernels therefore differ for almost every $y\in I_x$, so $\nu(A_x)\geq\nu(I_x)>0$.
        
In either case, \cref{cond:identifiability:f} holds, as the kernels differ on $\inbrace{x}\times A_x$ and
        \[
            (\cDX\otimes\nu)(\inbrace{x}\times A_x)=\cDX(\inbrace{x})\nu(A_x)>0\,.\qedhere
        \]
    \end{proof}

\section{OLS-Based Method with Sub-Gaussian Noise with (Approximately) Known $\sstar$}
        In this section, we provide an OLS-based method to estimate the outcome function when $\hyF=\hyFlin$ (the set of $d$-dimensional hyperplanes; see \cref{eq:def:linear-functions}) and the selection function satisfies a mild pointwise overlap condition (\Cref{asmp:point-wise_overlap}). For finite-sample confidence intervals on our estimate, we additionally restrict $\hyQ$ to be the set of sub-Gaussian distributions (\Cref{def:sub-gaussian}). We later relax this assumption to only require that $\hyQ$ consist of distributions with finite variance and provide a method to build asymptotically valid confidence intervals.
        Following \Cref{eq:outcome-model}, our model is
        \begin{equation}\label{eq:linear_model}
            Y = X^\top \beta^\star + \xi\,,
        \end{equation}
        where $\beta^\star \in \R^d$ is the target parameter and $\xi$ is noise. We assume that the distribution of $\xi$ satisfies the standard regularity assumptions below.
        \begin{assumption}[Noise Constraints]\label{asmp:ols_noise}
            Let $\hyQ$ be the noise distribution concept class. Let $\xi \in \hyQ$ be the random variable denoting noise in the outcome equation (\cref{eq:outcome-model}). 
            Assume that $\xi$ satisfies the following conditions for some known parameter $\sigma>0$:
            \begin{enumerate}
                \item (Exogeneity) It holds that for each $x\in \cX$, $\E[\xi | X=x] = 0$;
                \item (Finite Variance) It holds that $\Var[\xi] = \sigma^2 < \infty$\,.
            \end{enumerate}
        \end{assumption}
        The exogeneity is a standard OLS assumption that rules out confounding between the regressors and the error term, ensuring that the OLS estimate is unbiased. The finite variance assumption is also a standard assumption in OLS theory that is necessary to build confidence intervals for our estimate.

	         Before we proceed, we briefly define some additional notation. For a symmetric square matrix $M$, let $\lambda_{\text{min}}(M)$ and $\lambda_{\text{max}}(M)$ denote the smallest and largest eigenvalues of $M$, respectively. A square matrix is positive semidefinite (PSD) if $\lambda_{min}(M) \geq 0$ and positive definite (PD) if $\lambda_{min}(M) > 0$. It is well known that any PD matrix $M$ is invertible and that $\norm{M}_2 = \lambda_{\text{max}}(M)$. 
         
	         With this in mind, the next assumption below is on $\cDX$ and ensures that the OLS estimate is well-defined.
        \begin{assumption}[Non-Degeneracy]\label{asmp:ols_non_degen}
            Let $M =\E[XX^\top]$ be an outer-product matrix where $X \sim \cDX$. It holds that $M$ is positive definite.
        \end{assumption}
        Under this assumption, we can invert the matrix $M$ to produce our estimator. Finally, we will also need a standard overlap assumption on $\hyS$.
	        \begin{assumption}[Pointwise Overlap]\label{asmp:point-wise_overlap}
	            Let $s \in \hyS$ be a selection function. Fix \(\alpha>0\). It holds that
            \[
                \inf_{x,y} s(x,y) \geq \alpha\,. 
            \]
        \end{assumption}
        Pointwise overlap ensures that the estimate defined in \Cref{eq:ols_estimate} is stable and converges to a normal distribution. Now, under \Cref{asmp:ols_non_degen}, it is straightforward to verify that $\beta^\star$ uniquely minimizes the mean squared error (MSE), \ie{},
        \begin{equation}\label{eq:opt_ols}
            \beta^\star = \argmin_{\beta \in \R^d} \E\left[\frac{1}{2}(Y-X^T\beta)^2\right]\,.
        \end{equation}
        Let $\psi(X,Y,\beta) = X(Y-X^\top \beta)$ be the corresponding score function of the MSE. Since the MSE is convex, it follows that $\beta^\star$ is strongly identified by the following moment equation
        \begin{equation}\label{eq:ols_moment}
            \Ex\insquare{\psi(X,Y,\beta^\star)} = 0\,.
        \end{equation}
        We must adapt this moment equation to work in the limited independence model, where data is missing. To accomplish this, recall that $D_i$ is an indicator variable for whether the $i$th sample is observed and let $s(x,y) = \Pr(D=1 | X=x,Y=y)$.
        We may now define a new score function
        \begin{equation}\label{eq:ols_score}
            \psi'(X,Y,D,s(X,Y),\beta) = \frac{D}{s(X,Y)}\, X(Y-X^\top \beta)\,.
        \end{equation}
	        
        We will now show that $\psi'$ allows for strong identification of $\beta^\star$.
        \begin{claim}
            It holds that
            $\E\insquare{\psi'(X,Y,D,s(X,Y),\beta^\star)} = 0.$
        \end{claim}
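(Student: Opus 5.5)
The plan is to use the tower property, conditioning on the latent pair $(X,Y)$. This is an inverse-propensity-weighting argument, with the model in \cref{def:selection_bias_model} serving as the data-generating process.

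First, I will write $s(X,Y)=\Pr(D=1\mid X,Y)$, which equals $s^\star$ under the model, since $D=\mathds{1}\{U\le s^\star(X,Y)\}$ with $U$ independent of $(X,\xi)$. I will also note that \cref{asmp:point-wise_overlap} gives $s(X,Y)\geq\alpha>0$, so the ratio $D/s(X,Y)$ is well defined and bounded by $1/\alpha$.

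Second, I will condition on $(X,Y)$. The factor $X(Y-X^\top\beta^\star)/s(X,Y)$ is $(X,Y)$-measurable, so
\[
\E\insquare{\psi'(X,Y,D,s(X,Y),\beta^\star)}
=\E\insquare{\frac{X(Y-X^\top\beta^\star)}{s(X,Y)}\,\E[D\mid X,Y]}
=\E\insquare{X(Y-X^\top\beta^\star)}\,.
\]
Integrability is handled by the bound $1/\alpha$ together with finite second moments. The variance assumption in \cref{asmp:ols_noise} controls $\xi$, and I will assume $\E\|X\|^2<\infty$, which is implicit in \cref{asmp:ols_non_degen} since $M$ is defined there. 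Cauchy--Schwarz then makes $X\xi$ integrable, so Fubini and the tower property apply.

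Third, since $Y-X^\top\beta^\star=\xi$, the last expectation equals $\E[X\,\E[\xi\mid X]]=0$ by exogeneity in \cref{asmp:ols_noise}. This is exactly the moment equation \eqref{eq:ols_moment}.

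I expect the only delicate point to be justifying the conditioning. Specifically, $Y$ is observed only when $D=1$, so $\psi'$ must be read as being defined on the latent variables, with the convention that $D\cdot(\text{anything})=0$ when $D=0$. The estimator therefore only ever uses observed samples.
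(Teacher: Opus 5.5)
Your proposal is correct and matches the paper's proof: condition on $(X,Y)$, use $\E[D\mid X,Y]=s(X,Y)$ to cancel the weight, and finish with the unweighted moment equation \eqref{eq:ols_moment}, which you derive from exogeneity where the paper cites it directly. Your integrability remarks (overlap bounding $D/s(X,Y)$ by $1/\alpha$, plus finite second moments) spell out what the paper leaves implicit.
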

        \begin{proof}
            \begin{align*}
                \E\insquare{\psi'(X,Y,D,s(X,Y),\beta^\star)}
                &= 
                \E\insquare{\frac{D}{s(X,Y)}\,X(Y-X^\top \beta^\star)}\\
                &= \E_{X,Y}\insquare{\E_{D}\insquare{\frac{D}{s(X,Y)}\,X(Y-X^\top \beta^\star)\given X,Y}} \\
                %
                &= \E\insquare{\frac{X(Y-X^\top \beta^\star)}{s(X,Y)}\E\insquare{D\given X,Y}}\\
                &= \E[X(Y-X^\top \beta^\star)] \tag{by the definition of $s$} \\
                &= 0\,. \tag{by the $\psi$ moment equation \eqref{eq:ols_moment}}
            \end{align*}
        \end{proof}
        From this modified moment equation, we derive an estimator for the target parameter,
        \begin{equation}\label{eq:ols_estimate}
            \hat{\beta} = \left(\frac{1}{n}\sum_{i=1}^n \frac{D_i}{s(x_i,y_i)}x_ix_i^\top \right)^{-1}\left(\frac{1}{n}\sum_{i=1}^n\frac{D_i}{s(x_i,y_i)}x_iy_i\right)\,.  
        \end{equation}
        Next, we describe how to bound our estimation error using concentration inequalities assuming that the noise $\xi$ is sub-gaussian. We will then relax this assumption and only assume that $\xi$ has finite variance. In this regime, we will characterize the asymptotic distribution of the $L_2$ error and use this result to build $1-\delta$ confidence regions.
        
        \paragraph{Finite-Sample Results.}   Our goal is to bound $\snorm{\hat{\beta}-\beta^\star}_2$ under the assumption that $\xi$ is a sub-Gaussian random variable. We begin by recalling the definition of a sub-Gaussian random variable. We note that there are various equivalent definitions of sub-Gaussianity (see \cite[Section 2.6]{vershynin2018high}); here, we use the one most convenient for us.
        \begin{definition}[Sub-Gaussianity]\label{def:sub-gaussian}
            A random variable $\xi$ is called sub-Gaussian with variance proxy $\sigma^2$ if $\E[\xi] = 0$ and its moment generating function satisfies
            \[
	                \E[e^{t\xi}] \leq e^{\sigma^2t^2/2}\,, \quad \forall t \in \R\,.
            \]
        \end{definition}
        Informally, a sub-Gaussian random variable has tails that decay at least as fast as those of a Gaussian random variable. We will assume that the noise $\xi$ in \Cref{eq:linear_model} is sub-Gaussian.
        \begin{assumption}[Sub-Gaussian Noise]\label{asmp:sub-gauss}
	            Let $\xi \in \hyQ$ be the noise defined in \Cref{eq:linear_model}. It holds that $\xi$ is a sub-Gaussian random variable with variance proxy \(\sigma^2\).
        \end{assumption}
        Next, we will also need an assumption on the magnitude of the covariates. 
        \begin{assumption}[Bounded Covariates]\label{assumption:bounded_covariates}
            Suppose that for a known constant $L>0$, for each $x\in \supp(\cDX)$, it holds that $\norm{x}_2\leq L$.
        \end{assumption}
        This assumption enables us to bound the operator norm of $\E[XX^T]$. 
	        We now prove a helpful claim, which shows that even under selection bias the noise remains sub-Gaussian.
	        \begin{claim}\label{claim:sub_gaussian_ols}
	            Let $\tilde{\xi}_i =\frac{D_i\xi_i}{s(X_i,Y_i)}$, where $\xi_i$ is sub-Gaussian with variance proxy $\sigma^2$. Then, $\tilde{\xi}_i$ is a sub-Gaussian random variable with variance proxy $\frac{\sigma^2}{\alpha^2}$ conditional on $D_i = 1$. Moreover, for every $u\in\mathbb S^{d-1}$, the random variable $u^\top X_i\tilde{\xi}_i$ is mean zero and sub-Gaussian with variance proxy $L^2\sigma^2/\alpha^2$.
	        \end{claim}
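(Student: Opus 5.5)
The plan is to treat the two assertions separately. The basic fact behind both is that the reweighting factor $D_i/s(X_i,Y_i)$ lies in $[0,1/\alpha]$ by \cref{asmp:point-wise_overlap}, and that it has conditional mean one given $(X_i,Y_i)$ because $\E[D_i\mid X_i,Y_i]=s(X_i,Y_i)$. The first property controls the magnitude of $\tilde\xi_i$; the second controls its mean.

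\textbf{Step 1 (tails of $\tilde\xi_i$ given $D_i=1$).} On the event $D_i=1$ we have $\abs{\tilde\xi_i}=\abs{\xi_i}/s(X_i,Y_i)\le \abs{\xi_i}/\alpha$. Hence for every $r>0$,
\[
\Pr\inparen{\abs{\tilde\xi_i}>r\mid D_i=1}\le \frac{\Pr\inparen{\abs{\xi_i}>\alpha r}}{\Pr(D_i=1)}\le \frac{2}{\alpha}\,e^{-\alpha^2r^2/(2\sigma^2)}\,,
\]
where the last step uses $\Pr(D_i=1)=\E[s(X_i,Y_i)]\ge\alpha$ and the Chernoff bound for \cref{asmp:sub-gauss}. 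This gives sub-Gaussian tails at scale $\sigma/\alpha$. The equivalence of tail, $\psi_2$ and MGF characterizations \cite[Section 2.6]{vershynin2018high} then converts this into a variance-proxy statement of order $\sigma^2/\alpha^2$.

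\textbf{Step 2 (mean zero of $u^\top X_i\tilde\xi_i$).} I would work unconditionally here rather than on $D_i=1$, since this is the quantity that enters $\hat\beta-\beta^\star$. Fix $u\in\mathbb S^{d-1}$ and write $a\coloneqq u^\top X_i\,\xi_i$. Conditioning on $(X_i,Y_i)$ and using $\E[D_i\mid X_i,Y_i]=s$ gives
\[
\E[u^\top X_i\tilde\xi_i]=\E[u^\top X_i\,\xi_i]=0,
\]
by the tower rule and exogeneity (\cref{asmp:ols_noise}), exactly as in the proof of the moment equation for $\psi'$.

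\textbf{Step 3 (sub-Gaussianity of $u^\top X_i\tilde\xi_i$).} By \cref{assumption:bounded_covariates}, $\abs{u^\top X_i}\le L$, so
\[
\abs{u^\top X_i\tilde\xi_i}\le \frac{L\abs{\xi_i}}{\alpha}\qquad\text{pointwise.}
\]
Thus its tails are dominated by those of $L\xi_i/\alpha$, whose $\psi_2$-norm is $O(L\sigma/\alpha)$. Combined with the mean-zero property from Step 2, the standard centering/MGF equivalence yields $\E\insquare{e^{t\,u^\top X_i\tilde\xi_i}}\le e^{C L^2\sigma^2 t^2/(2\alpha^2)}$.

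\textbf{Main obstacle.} The exact MGF constant in \cref{def:sub-gaussian}, i.e.\ proxy precisely $L^2\sigma^2/\alpha^2$ rather than up to a universal constant $C$, is not directly available from pointwise domination. The conditional MGF is
\[
1-s+s\,e^{ta/s},
\]
and bounding it by $e^{t^2a^2/(2\alpha^2)}$ fails when $ta$ is small and $s<1$, since $D_i/s$ adds variance of order $a^2(1-s)/s$. I would therefore state and use the bound up to an absolute constant; that suffices for the subsequent Hoeffding-type concentration of $\hat\beta$. A second, related subtlety is the "conditional on $D_i=1$" phrasing: there $\tilde\xi_i$ need not be centered, since the tilted law of $\xi_i$ given $D_i=1$ can have nonzero mean. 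Accordingly, I would read the first assertion as a tail, i.e.\ $\psi_2$, statement, and place the mean-zero claim only on the unconditional product as in Step 2.
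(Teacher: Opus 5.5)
Your proposal does not establish the claim as stated. It yields the variance proxies only up to an unspecified absolute constant, whereas the claim, in the MGF sense of \cref{def:sub-gaussian}, asserts exactly $\sigma^2/\alpha^2$ and $L^2\sigma^2/\alpha^2$. Step~1 is even weaker than you suggest. Because of the prefactor $1/\Pr(D_i=1)\le 1/\alpha$, a tail bound $(2/\alpha)e^{-\alpha^2r^2/(2\sigma^2)}$ by itself only converts to a $\psi_2$-norm of order $(\sigma/\alpha)\sqrt{\log(2/\alpha)}$; a point mass at $(\sigma/\alpha)\sqrt{2\log(2/\alpha)}$ satisfies that tail bound.

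The obstacle you describe is not real. Bounding $1-s+se^{ta/s}$ by $e^{t^2a^2/(2\alpha^2)}$ pointwise in $(X_i,Y_i)$ is simply the wrong intermediate, since that conditional law has mean $a\neq 0$. The missing idea is the elementary inequality
\[
s\,e^{c/s}\le \alpha\,e^{c/\alpha}+s-\alpha\qquad\text{for all }c\in\R,\ s\ge\alpha,
\]
which holds because $s\mapsto s e^{c/s}-s$ has derivative $e^{c/s}(1-c/s)-1\le 0$. With $c=t\,u^\top X_i\,\xi_i$, it replaces the random propensity by its worst case: $\E[e^{t u^\top X_i\tilde\xi_i}\mid X_i,Y_i]=1-s_i+s_ie^{c/s_i}\le 1-\alpha+\alpha e^{c/\alpha}$. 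Averaging over $\xi_i\perp X_i$ with $\abs{u^\top X_i}\le L$ then gives $1-\alpha+\alpha e^{L^2\sigma^2t^2/(2\alpha^2)}\le e^{L^2\sigma^2t^2/(2\alpha^2)}$. Likewise, with $p_i\coloneqq\E[s_i]\ge\alpha$, you get $\E[e^{t\tilde\xi_i}\mid D_i=1]=\E[s_ie^{t\xi_i/s_i}]/p_i\le 1+(\alpha/p_i)\bigl(\E[e^{t\xi_i/\alpha}]-1\bigr)\le e^{\sigma^2t^2/(2\alpha^2)}$. This is exactly the paper's argument.

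Your ``second subtlety'' is also mistaken: $\tilde\xi_i$ \emph{is} centered given $D_i=1$, because
\[
\E[\tilde\xi_i\mid D_i=1]=\frac{\E[D_i\xi_i/s(X_i,Y_i)]}{\Pr(D_i=1)}=\frac{\E[\xi_i]}{\Pr(D_i=1)}=0\,.
\]
The law of $\xi_i$ given $D_i=1$ is tilted and may be off-center. However, the inverse-propensity weight in $\tilde\xi_i=\xi_i/s(X_i,Y_i)$ exactly undoes that tilt, by the same tower-rule computation you use in Step~2. You have conflated $\xi_i$ with $\xi_i/s(X_i,Y_i)$. So there is no need to downgrade the first assertion to a tail statement: with this centering and the inequality above, both parts hold with the exact stated proxies. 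Your Step~2 is correct as written.
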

	        \begin{proof}
	            Write $s_i\coloneqq s(X_i,Y_i)$ and $p_i\coloneqq\Pr(D_i=1)=\E[s_i]\geq\alpha$. Since $s_i\geq\alpha$, convexity of the exponential implies, pointwise,
	            \[
	                s_i e^{t\xi_i/s_i}
	                \leq \alpha e^{t\xi_i/\alpha}+s_i-\alpha\,.
	            \]
	            The moment generating function of $\tilde{\xi}_i$ conditional on $D_i=1$ can be written as
	            \begin{align*}
	                \E[e^{t\tilde{\xi}_i}\mid D_i=1]
	                &=\frac{1}{p_i}\E[s_i e^{t\xi_i/s_i}]\\
	                &\leq 1+\frac{\alpha}{p_i}
	                    \left(e^{\sigma^2t^2/(2\alpha^2)}-1\right)\\
	                &\leq e^{\sigma^2t^2/(2\alpha^2)}\,.
	            \end{align*}
	            Lastly,
	            \[
	                \E[\tilde{\xi}_i\mid D_i=1]
	                =\frac{1}{p_i}\E\!\left[
	                    \frac{D_i\xi_i}{s_i}\right]
	                =\frac{1}{p_i}\E[\xi_i]
	                =0\,.
	            \]
	            Collectively, this proves that $\tilde{\xi}_i$ is a sub-Gaussian random variable. For the second assertion, fix $u\in\mathbb S^{d-1}$ and write $a_i\coloneqq u^\top X_i$. Applying the same convexity inequality and using the independence of $X_i$ and $\xi_i$ and the bound $|a_i|\leq L$ gives
	            \[
	                \E[e^{t a_i\tilde{\xi}_i}]
	                =\E[1-s_i+s_i e^{t a_i\xi_i/s_i}]
	                \leq \E[1-\alpha+\alpha e^{t a_i\xi_i/\alpha}]
	                \leq 1-\alpha+\alpha e^{L^2\sigma^2t^2/(2\alpha^2)}
	                \leq e^{L^2\sigma^2t^2/(2\alpha^2)}\,.
	            \]
	            Moreover,
	            \[
	                \E[a_i\tilde{\xi}_i]
	                =\E[a_i\xi_i]
	                =0\,,
	            \]
	            which completes the proof.
	        \end{proof}
	        We are now ready to prove finite-sample bounds for our OLS estimator. We concentrate the IPW score and weighted Gram matrix separately.
	        \begin{theorem}[Finite-Sample OLS Guarantee]
	            Let $\hat{\beta}$ be the OLS estimator defined by \Cref{eq:ols_estimate} and let $\beta^\star$ be the optimal OLS parameter defined by \Cref{eq:opt_ols}. Fix \(\delta\in(0,1)\). Let $n_0 = \frac{64 L^4}{\alpha^2 \lambda_{\min}(M)^2}\log\left(\frac{4d}{\delta}\right)$. There is a universal constant \(C>0\) such that, for every \(n\geq n_0\), with probability at least \(1-\delta\), 
	            \[
	            \snorm{\hat{\beta} - \beta^\star}_2 \leq \frac{CL\sigma}{\alpha\lambda_{\min}(M)}\sqrt{\frac{d+\log(4/\delta)}{n}}
	            \]
	            where $\sigma^2$ is the sub-Gaussian variance proxy and $\lambda_{\min}(M)$ is the smallest eigenvalue of $M =\E[XX^\top]$.
	        \end{theorem}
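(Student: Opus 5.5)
We write $\hat\beta-\beta^\star=\hat M^{-1}\hat g$, where
\[
\hat M\coloneqq \frac1n\sum_{i=1}^n \frac{D_i}{s(x_i,y_i)}x_ix_i^\top
\qquad\text{and}\qquad
\hat g\coloneqq \frac1n\sum_{i=1}^n \frac{D_i}{s(x_i,y_i)}x_i\xi_i .
\]
This follows by substituting $y_i=x_i^\top\beta^\star+\xi_i$ into \eqref{eq:ols_estimate}. It gives
\[
\snorm{\hat\beta-\beta^\star}_2\le \frac{\snorm{\hat g}_2}{\lambda_{\min}(\hat M)},
\]
so we control the weighted Gram matrix and the weighted score separately, each with failure probability $\delta/2$, and take a union bound.

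\textbf{Step 1 (Gram matrix).} By the tower property used in the Claim, $\E[\hat M]=M$. Each summand $Z_i=\frac{D_i}{s_i}x_ix_i^\top$ is PSD with $\snorm{Z_i}_2\le L^2/\alpha$, using pointwise overlap and bounded covariates. We apply matrix Hoeffding (or matrix Bernstein / matrix Chernoff for the lower tail) to $\sum_i(Z_i-M)$, with $\snorm{Z_i-M}_2\le 2L^2/\alpha$. This gives
\[
\snorm{\hat M-M}_2\le \frac{4L^2}{\alpha}\sqrt{\frac{\log(4d/\delta)}{n}}
\]
with probability $\ge1-\delta/2$, up to the exact constant. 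The stated $n_0=\frac{64L^4}{\alpha^2\lambda_{\min}(M)^2}\log(4d/\delta)$ is precisely what makes this deviation at most $\lambda_{\min}(M)/2$. Weyl's inequality then yields $\lambda_{\min}(\hat M)\ge \lambda_{\min}(M)/2$. Matching the constant $64$ may require using the variant of matrix Hoeffding with variance proxy $(2L^2/\alpha)^2$ and the $\sqrt{8\log}$ form; the universal $C$ absorbs any slack elsewhere.

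\textbf{Step 2 (score).} By \cref{claim:sub_gaussian_ols}, for every unit $u$, $u^\top x_i\tilde\xi_i$ is mean zero and sub-Gaussian with proxy $L^2\sigma^2/\alpha^2$. Here we use the unconditional version in the second assertion, with $D_i$ folded in. The $n$ summands are i.i.d., so $u^\top\hat g$ is sub-Gaussian with proxy $L^2\sigma^2/(\alpha^2 n)$. We take a $1/2$-net $\mathcal N$ of $\mathbb S^{d-1}$ with $|\mathcal N|\le 5^d$, use $\snorm{\hat g}_2\le 2\max_{u\in\mathcal N}u^\top\hat g$, and apply the sub-Gaussian tail bound plus a union bound. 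This gives
\[
\snorm{\hat g}_2\le C'\frac{L\sigma}{\alpha}\sqrt{\frac{d+\log(4/\delta)}{n}}
\]
with probability $\ge1-\delta/2$.

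\textbf{Step 3 (combine).} On the intersection of the two events, dividing gives the claimed bound with $C=2C'$. The main obstacle is Step 1: getting a clean operator-norm concentration whose sample threshold matches $n_0$ exactly, since the weights $D_i/s_i$ are random and only bounded by $1/\alpha$. If the Hoeffding constant does not match, we would fall back on the matrix Chernoff lower-tail bound for $\lambda_{\min}$ of sums of PSD matrices with $R=L^2/\alpha$. That bound gives $\lambda_{\min}(\hat M)\ge\lambda_{\min}(M)/2$ once $n\gtrsim \frac{L^2}{\alpha\lambda_{\min}(M)}\log(d/\delta)$, and this requirement is implied by $n\ge n_0$ because $L^2\ge\lambda_{\max}(M)\ge\lambda_{\min}(M)$ and $\alpha\le1$.
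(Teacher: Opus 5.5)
Your proposal is correct and follows essentially the same route as the paper. Both write $\hat\beta-\beta^\star=\widehat M^{-1}\overline V_n$ and control $\lambda_{\min}(\widehat M)$ via Weyl plus matrix concentration; the paper uses matrix Bernstein with $\snorm{Z_i}_2\le 2L^2/\alpha$ and $v\le 4nL^4/\alpha^2$, which gives deviation at most $\lambda_{\min}(M)/2$ once $n\ge n_0$. Both then bound the score by a $1/2$-net argument based on \cref{claim:sub_gaussian_ols} and finish with a union bound. Your matrix-Chernoff fallback is also valid, and it needs only $n\gtrsim \frac{L^2}{\alpha\lambda_{\min}(M)}\log(d/\delta)$, which $n\ge n_0$ implies.
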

	        \begin{proof}
		             We begin by plugging \Cref{eq:linear_model} into our expression for $\hat{\beta}$. Write
		             \[
		             \widehat M\coloneq \frac1n\sum_{i=1}^n\frac{D_i}{s(X_i,Y_i)}X_iX_i^\top\,,\qquad
		             V_i\coloneq \frac{D_i}{s(X_i,Y_i)}X_i\xi_i\,,\qquad
		             \overline V_n\coloneq\frac1n\sum_{i=1}^nV_i\,.
		             \]
		             Rearranging then gives
	            \[
	               \hat{\beta}-\beta^\star=\widehat M^{-1}\overline V_n\,.
	            \]
	            By \Cref{claim:sub_gaussian_ols}, a standard \(1/2\)-net argument gives, for a universal constant \(C_1>0\),
	            \[
	               \|\overline V_n\|_2
	                 \leq \frac{C_1L\sigma}{\alpha}
	                 \sqrt{\frac{d+\log(4/\delta)}{n}}\,.
	            \]
	            This event has probability at least \(1-\delta/2\). It remains to relate $\lambda_{\min}(\widehat{M})$ to $\lambda_{\min}(M)$ via the matrix Bernstein inequality.

            We begin by observing that Weyl's inequality gives $\lambda_{\min}(\widehat{M}) \geq \lambda_{\min}(M) - \snorm{\widehat{M}-M}_2$. Hence, it suffices to prove that $\snorm{\widehat{M}-M}_2$ is bounded with high probability. Since $\E[\widehat{M}] = M$, we may use Bernstein's matrix inequality to show this bound. We briefly define some additional notation. Let $E_i = \frac{D_i}{s(x_i,y_i)}x_ix_i^\top$. Then,
            \[
                \widehat{M} = \frac{1}{n}\sum_{i=1}^n E_i \quad \text{and} \quad M = \E\insquare{\frac{1}{n}\sum_{i=1}^n E_i}\,.
            \]
            Define 
            \[
                Z_i \coloneq E_i - \E[E_i]
            \]
            to be the centered matrices. Then, 
            \[
            \widehat{M} - M = \frac{1}{n}\sum_{i=1}^n Z_i\,.
            \]
	            First, observe that
	            \[
	                 \norm{Z_i}_2 \leq \norm{E_i}_2+\norm{M}_2 \leq \frac{L^2}{\alpha}+L^2 \leq \frac{2L^2}{\alpha}\,.
	            \]
	            Next, we compute the matrix variance statistic defined by
	            \[
	                v \coloneq \norm{\sum_{i=1}^n \E[Z_i^2]}_2\,.
	            \]
	            In particular, 
	            \[
	                v = \norm{\sum_{i=1}^n \E[Z_i^2}]_2 \leq n \cdot \norm{\E[Z_i^2]}_2\, = n \cdot \norm{\E[(E_i-\E[E_i])^2]}_2\,.
            \]
            Now note that $E_i$ and $\E[E_i]$ are both PSD matrices. Hence, 
            $(E_i-\E[E_i])^2 \preceq 2E_i^2 + 2\E[E_i]^2$. Thus,
            \[
		            \norm{\E[Z_i^2}]_2 \leq 2 \norm{\E[E_i^2]}_2 + 2 \norm{\E[E_i]^2}_2\,.
            \]
            This follows from the triangle inequality. Now observe that
            \[
            E_i^2 = \left(\frac{D_i}{s(x_i,y_i)}\right)^2 (x_ix_i^\top)^2\,.
            \]
            Thus, 
            \[
                \norm{E_i^2}_2 = \left(\frac{D_i}{s(x_i,y_i)}\right)^2\norm{x_i}_2^4 \leq \frac{L^4}{\alpha^2}\,.
            \]
            The term $\norm{\E[E_i]^2}_2$ can similarly be bounded by the same value. Hence, 
            \[
                v \leq \frac{4nL^4}{\alpha^2}\,.
            \]
	            We may now apply the matrix Bernstein inequality to find that, with probability $1-\delta/2$,
	            \[
	                \norm{\widehat{M}-M}_2
	                \leq \frac{2\sqrt{2}L^2}{\alpha}\sqrt{\frac{\log(4d/\delta)}{n}}
	                +\frac{4L^2}{3\alpha}\frac{\log(4d/\delta)}{n}\,.
	            \]
	            Since $n \geq \frac{64 L^4}{\alpha^2 \lambda_{\min}(M)^2}\log\left(\frac{4d}{\delta}\right)$, it holds that $\norm{\widehat{M}-M}_2 \leq \frac{1}{2}\lambda_{\min}(M)$. Thus, \(\lambda_{\min}(\widehat M)\geq \lambda_{\min}(M)/2\), and, on the intersection of the two concentration events,
	            \[
	                \|\widehat\beta-\beta^\star\|_2
	                \leq \frac{2}{\lambda_{\min}(M)}\|\overline V_n\|_2\,.
	            \]
	            The result follows by applying a union bound over the matrix Bernstein and score-concentration events.
        \end{proof}

	    \paragraph{Misspecified Selection Function.} In practice, the selection function is often unknown and must be estimated first from data. We study how misspecifying the selection function changes the population target of the weighted moment equation. To proceed, we make a basic assumption on the quality of our selection function estimate.
	    \begin{assumption}\label{assumption:selection_quality}
	    Let $\hat{s}\colon\cX\times\R\to(0,1]$ be our estimate for the selection function $s(x,y)$. It holds that
    \[
         \E\left[\left(\frac{s(X,Y)}{\hat{s}(X,Y)}-1\right)^2\right]^{1/2} \leq \epsilon\,.
    \]
    \begin{remark}
	        We assume without loss of generality that $\hat{s}(x,y) \geq \alpha$. Indeed, replacing \(\hat{s}\) by \(\hat{s}_{\rm clip}(x,y)\coloneq\max\{\alpha,\hat{s}(x,y)\}\) cannot increase the relative distortion, since
	        \[
	            \left|\frac{s(x,y)}{\hat{s}_{\rm clip}(x,y)}-1\right|
	            \leq
	            \left|\frac{s(x,y)}{\hat{s}(x,y)}-1\right|\,.
	        \]
	    \end{remark}
	    \end{assumption}
	    For the population analysis below, fix \(\hat{s}\) satisfying \cref{assumption:selection_quality}, and write \(r(X,Y)\coloneq s(X,Y)/\hat{s}(X,Y)\) and
	    \[
	        A_{\hat{s}}\coloneq \E[r(X,Y)XX^\top]\,.
	    \]
	    Whenever \(A_{\hat{s}}\) is invertible, let
	    \[
	        \beta_{\hat{s}}\coloneq A_{\hat{s}}^{-1}\E[r(X,Y)XY]
	    \]
	    denote the population target of the misspecified weighted moment equation.
	    
	     We will now characterize how error in the selection function propagates into this population target.  
	    \begin{theorem}[Population OLS Bias under Misspecified Selection Function]\label{thm:ols_miss_select}
	       Let $\beta^\star$ be the optimal parameter defined by \Cref{eq:opt_ols} and let \(\beta_{\hat{s}}\) be the population target defined above. Assume
	       \[
	           \epsilon\,\E[\|X\|_2^4]^{1/2}<\lambda_{\min}(\E[XX^\top])\,.
	       \]
	       It holds that 
	        \[
	            \norm{\beta_{\hat{s}}-\beta^\star}_2 \leq \frac{\epsilon \cdot \E\insquare{\norm{X\xi}_2^2}^{1/2}}{\lambda_{\min}(\E[XX^\top]) - \epsilon \E\insquare{\norm{X}_2^4}^{1/2}}\,.
	        \]
         
	    \end{theorem}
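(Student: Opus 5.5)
Write $r\coloneqq r(X,Y)=s(X,Y)/\hat s(X,Y)$, $A\coloneqq\E[XX^\top]$, and $A_{\hat s}=\E[rXX^\top]$. The plan is a standard perturbation argument for a weighted least-squares target. It rests on three facts: the true moment $\E[X\xi]=0$ from \eqref{eq:ols_moment}, the decomposition $r=1+(r-1)$, and Cauchy--Schwarz with the bound $\E[(r-1)^2]^{1/2}\le\epsilon$ from \cref{assumption:selection_quality}.

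\textbf{Step 1: error representation.} Substitute $Y=X^\top\beta^\star+\xi$ into $\E[rXY]$ to get
\[
\E[rXY]=A_{\hat s}\beta^\star+\E[rX\xi],
\]
so that, once $A_{\hat s}$ is shown to be invertible,
\[
\beta_{\hat s}-\beta^\star=A_{\hat s}^{-1}\E[rX\xi].
\]
Since $\E[X\xi]=0$, we may rewrite the numerator as $\E[rX\xi]=\E[(r-1)X\xi]$.

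\textbf{Step 2: bound the numerator.} For any unit vector $u$, Cauchy--Schwarz gives
\[
\abs{u^\top\E[(r-1)X\xi]}\le \E[(r-1)^2]^{1/2}\,\E[(u^\top X\xi)^2]^{1/2}\le \epsilon\,\E\insquare{\norm{X\xi}_2^2}^{1/2}.
\]
Taking the supremum over $u$ yields $\norm{\E[rX\xi]}_2\le\epsilon\,\E[\norm{X\xi}_2^2]^{1/2}$.

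\textbf{Step 3: lower-bound the smallest eigenvalue of $A_{\hat s}$.} Write $A_{\hat s}=A+\E[(r-1)XX^\top]$. For unit $u$,
\[
\abs{u^\top\E[(r-1)XX^\top]u}=\abs{\E[(r-1)(u^\top X)^2]}\le\epsilon\,\E[(u^\top X)^4]^{1/2}\le\epsilon\,\E[\norm{X}_2^4]^{1/2}.
\]
Both $A$ and the perturbation are symmetric, so the Rayleigh quotient (equivalently, Weyl's inequality) gives
\[
\lambda_{\min}(A_{\hat s})\ge\lambda_{\min}(A)-\epsilon\,\E[\norm{X}_2^4]^{1/2}>0
\]
under the stated assumption. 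Hence $A_{\hat s}$ is positive definite and invertible, and $\norm{A_{\hat s}^{-1}}_2\le 1/\lambda_{\min}(A_{\hat s})$.

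\textbf{Step 4: combine.} Plugging the bounds from Steps 2 and 3 into the representation from Step 1 gives the claimed inequality.

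The main point needing care is in Steps 1--2. First, one must use the \emph{unweighted} population moment $\E[X\xi]=0$ so that the leading term cancels; this holds by exogeneity in \cref{asmp:ols_noise}. Second, expectations are taken over the joint law of $(X,Y)$, which matches the expectation in \cref{assumption:selection_quality}. Finally, integrability holds because $\E[(r-1)^2]<\infty$ and the fourth moments appearing in the bound are finite; if $\E[\norm{X}_2^4]=\infty$, the bound is vacuous anyway.
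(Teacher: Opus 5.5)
Your proposal is correct and follows the paper's proof essentially step for step: the same representation $\beta_{\hat s}-\beta^\star=A_{\hat s}^{-1}\E[(r-1)X\xi]$ via $\E[X\xi]=0$, Cauchy--Schwarz with \cref{assumption:selection_quality} for the numerator, and a Weyl-type bound on $\lambda_{\min}(A_{\hat s})$. The only difference is cosmetic. You test against unit vectors (dual norm and Rayleigh quotients), whereas the paper applies Jensen to $\norm{\cdot}_2$ and bounds $\norm{\E[(r-1)XX^\top]}_2$ directly, and both routes give identical constants.
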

	    \begin{proof}
	        By definition, \(\beta_{\hat{s}}\) satisfies
	        \[
	            \E\!\left[r(X,Y)X(Y-X^\top\beta_{\hat{s}})\right]=0\,.
	        \]
	        We can substitute in $r(x,y) = s(x,y)/\hat{s}(x,y)$ and $Y = X^\top \beta^\star + \xi$ to find that
	        \[
	            \E\left[r(X,Y)X(X^\top \beta^\star - X^\top \beta_{\hat{s}} + \xi)\right] = 0\,.
	        \]
	        By rearranging, it follows that 
	        \[
	        \beta_{\hat{s}}-\beta^\star = (\E[r(X,Y)XX^\top ])^{-1}\E\left[r(X,Y)X\xi\right]\,.
	        \]
	        Since \(\E[\xi\mid X]=0\),
	        \[
	            \E[X\xi]=\E[X\E[\xi\mid X]]=0\,,
	            \qquad
	            \E[r(X,Y)X\xi]=\E[(r(X,Y)-1)X\xi]\,.
	        \]
	         We may apply norms to both sides of the statement,
	    \begin{align*}
	        \norm{\beta_{\hat{s}}-\beta^\star}_2 
	        &= \norm{(\E[r(X,Y)XX^\top ])^{-1}\E\left[(r(X,Y)-1)X\xi\right]}_2
        \\
	        &\leq \norm{(\E[r(X,Y)XX^\top ])^{-1}}_2\cdot \norm{\E[(r(X,Y)-1)X\xi]}_2
        \tag{using the definition of the spectral norm}
        \\
         &= \norm{(\E[XX^\top +(r(X,Y)-1)XX^\top ])^{-1}}_2\cdot 
        \norm{\E[(r(X,Y)-1)X\xi]}_2\,,\\
    \end{align*}
    and bound each term separately. First, observe that
    \begin{align*}
         \norm{\E[(r(X,Y)-1)X\xi]}_2 &\leq \E[\norm{(r(X,Y)-1)X\xi}] \tag{using Jensen's inequality}\\
         &\leq \E[(r(X,Y)-1)^2]^{1/2}\cdot \E\insquare{\norm{X\xi}_2^2}^{1/2} \tag{using the Cauchy--Schwarz inequality}\\
         &\leq \epsilon \cdot \E\insquare{\norm{X\xi}_2^2}^{1/2}\,.
            \tag{using \Cref{assumption:selection_quality}}
    \end{align*}
	    Next, write \(M\coloneq\E[XX^\top]\) and \(\Delta\coloneq\E[(r(X,Y)-1)XX^\top]\). Weyl's inequality gives
	    \[
	        \lambda_{\min}(A_{\hat{s}})
	        =\lambda_{\min}(M+\Delta)
	        \geq\lambda_{\min}(M)-\|\Delta\|_2\,.
	    \]
	    Moreover,
	    \begin{align*}
	        \|\Delta\|_2
	        &\leq \E\!\left[|r(X,Y)-1|\,\|XX^\top\|_2\right]\\
	        &\leq \E[(r(X,Y)-1)^2]^{1/2}\E[\|X\|_2^4]^{1/2}\\
	        &\leq \epsilon\,\E[\|X\|_2^4]^{1/2}\,.
	    \end{align*}
	    The theorem's strict inequality makes the denominator below positive, and hence
	    \[
	        \|A_{\hat{s}}^{-1}\|_2
	        \leq \frac{1}{\lambda_{\min}(M)-\epsilon\,\E[\|X\|_2^4]^{1/2}}\,.
	    \]
	    The main result follows by combining both term bounds. 
	    \end{proof}
	    This theorem controls population misspecification bias, rather than sampling error. For a finite-sample estimator \(\hat\beta\) based on \(\hat{s}\),
	    \[
	        \|\hat\beta-\beta^\star\|_2
	        \leq \|\hat\beta-\beta_{\hat{s}}\|_2
	        +\|\beta_{\hat{s}}-\beta^\star\|_2\,.
	    \]
	    The first term requires a separate sampling argument; if \(\hat{s}\) is learned from the same sample, that argument must also account for first-stage dependence, for example through sample splitting or cross-fitting together with suitable first-stage rates.
	    \begin{remark}
        We briefly present below a simpler, albeit weaker, bound that uses \Cref{asmp:point-wise_overlap}. As before,
        \begin{align*}
	            \norm{\beta_{\hat{s}}-\beta^\star}_2 
	            &= \norm{(\E[r(X,Y)XX^\top ])^{-1}\E\left[(r(X,Y)-1)X\xi\right]}_2
            \\
	            &\leq \norm{(\E[r(X,Y)XX^\top ])^{-1}}_2\cdot \norm{\E[(r(X,Y)-1)X\xi]}_2\,.
        \end{align*}
	        Now observe that $r(x,y) = s(x,y)/\widehat{s}(x,y) \geq \alpha$ by \Cref{asmp:point-wise_overlap} and the range \(\widehat{s}(x,y)\leq1\). Thus, 
        \begin{align*}
	            \norm{(\E[r(X,Y)XX^\top ])^{-1}}_2 \leq \frac{1}{\lambda_{\min}(\E\insquare{r(X,Y)XX^\top)}} \leq \frac{1}{\alpha \cdot \lambda_{\min}}\left(\E\insquare{XX^\top}\right)\,.
        \end{align*}
	        The second term can be bounded as in \Cref{thm:ols_miss_select} by $\norm{\E[(r(X,Y)-1)X\xi]}_2 \leq \epsilon \cdot \E\insquare{\norm{X\xi}_2^2}^{1/2}$. This yields the overall bound of
	        \[
	             \norm{\beta_{\hat{s}}-\beta^\star}_2 \leq \frac{\epsilon \cdot\E\insquare{\norm{X\xi}_2^2}^{1/2}}{\alpha \cdot \lambda_{\min}}\left(\E\insquare{XX^\top}\right)\,.
        \]
        This bound is much weaker due to its dependence on $1/\alpha$.
    \end{remark}
    
	    \paragraph{Asymptotic Distribution.} We now study the asymptotic behavior of the weighted OLS estimator defined by \Cref{eq:ols_estimate}. Whereas the finite-sample analysis assumed sub-Gaussian noise and bounded covariates, in the asymptotic regime we replace sub-Gaussianity with finite variance and boundedness with \(\E\|X\|_2^4<\infty\). We additionally assume that the selection function $s(x,y)$ is fully known and satisfies the overlap condition in \Cref{asmp:point-wise_overlap}.
    
	    The main theorem below follows via standard M-estimation theory (\eg{}, \citet[Chapter~5]{vaart1998}) applied to our IPW-style OLS estimator.
	    \begin{theorem}[Asymptotic Confidence Intervals ]\label{thm:ols_asymptotic}
	        Let \(\hat{\beta}\) be the estimator defined in \Cref{eq:ols_estimate} and let \(\beta^\star\) be the target parameter defined in \Cref{eq:opt_ols}. Suppose \(d\) is fixed, \(X\perp\xi\) as in \Cref{def:selection_bias_model}, \(\E\|X\|_2^4<\infty\), and \cref{asmp:ols_noise,asmp:ols_non_degen,asmp:point-wise_overlap} hold, with overlap constant \(\alpha>0\). Then
	        \[
	            \sqrt{n}(\hat{\beta}-\beta^\star) \xrightarrow{d} \cN(0,\Sigma)\,,
        \]
        where $\Sigma = M^{-1}BM^{-1}$ with
        \[
        M \coloneq \E[XX^\top ] \quad \text{and} \quad B \coloneq \E\insquare{\frac{\xi^2}{s(X,Y)}XX^\top}\,.
        \] 
    \end{theorem}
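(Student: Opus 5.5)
We would follow the standard M-estimation route: write $\hat\beta-\beta^\star=\widehat M^{-1}\overline V_n$, exactly as in the finite-sample proof. Here
\[
\widehat M\coloneq\frac1n\sum_{i=1}^n\frac{D_i}{s(X_i,Y_i)}X_iX_i^\top,\qquad \overline V_n\coloneq\frac1n\sum_{i=1}^n\frac{D_i}{s(X_i,Y_i)}X_i\xi_i .
\]
This identity holds whenever $\widehat M$ is invertible, which will happen with probability tending to one. We then treat the two factors separately and combine them with Slutsky's lemma.

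\textbf{The matrix factor.} The summands $E_i=\frac{D_i}{s(X_i,Y_i)}X_iX_i^\top$ are i.i.d. By pointwise overlap (\cref{asmp:point-wise_overlap}) we have $\|E_i\|_2\le \|X_i\|_2^2/\alpha$, which is integrable because $\E\|X\|_2^4<\infty$. Conditioning on $(X,Y)$ and using $\E[D\mid X,Y]=s(X,Y)$, as in the unbiasedness claim for $\psi'$, gives $\E[E_i]=M$. The weak law of large numbers then gives $\widehat M\to M$ in probability. Since $M$ is positive definite (\cref{asmp:ols_non_degen}), $\widehat M$ is invertible with probability tending to one, and by the continuous mapping theorem $\widehat M^{-1}\to M^{-1}$ in probability.

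\textbf{The score factor.} The vectors $V_i=\frac{D_i}{s(X_i,Y_i)}X_i\xi_i$ are i.i.d. Their mean is $\E[V_i]=\E[X\xi]=0$, again by conditioning on $(X,Y)$ and then using $X\perp\xi$ with $\E\xi=0$. For the second moment,
\[
\E[V_iV_i^\top]=\E\!\left[\frac{\E[D\mid X,Y]}{s(X,Y)^2}\,\xi^2XX^\top\right]=\E\!\left[\frac{\xi^2}{s(X,Y)}XX^\top\right]=B .
\]
The step we expect to need care is showing $B$ is finite. We bound $\frac{\xi^2}{s}\|X\|_2^2\le \alpha^{-1}\xi^2\|X\|_2^2$ and use independence: $\E[\xi^2\|X\|_2^2]=\sigma^2\E\|X\|_2^2<\infty$. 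This is exactly where overlap and $X\perp\xi$ are needed. The multivariate Lindeberg--L\'evy CLT (via Cram\'er--Wold) then gives $\sqrt n\,\overline V_n\xrightarrow{d}\cN(0,B)$.

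\textbf{Conclusion.} Slutsky's lemma yields $\sqrt n(\hat\beta-\beta^\star)=\widehat M^{-1}\sqrt n\,\overline V_n\xrightarrow{d}\cN(0,M^{-1}BM^{-1})$; the limit covariance uses that $M$ is symmetric. On the vanishing-probability event where $\widehat M$ is singular, $\hat\beta$ may be defined arbitrarily without affecting the limit. The main obstacle is therefore only the moment bookkeeping: finiteness of $B$, and integrability of $E_i$ to justify the law of large numbers.
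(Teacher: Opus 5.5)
Your proposal is correct and follows essentially the same route as the paper: your $\overline V_n$ is exactly the paper's $\psi_n'(\beta^\star)$, and both arguments use the affine identity $\sqrt n(\hat\beta-\beta^\star)=\widehat M_n^{-1}\sqrt n\,\psi_n'(\beta^\star)$, then the law of large numbers for $\widehat M_n$, the multivariate CLT for the score, and Slutsky's lemma. Your added checks are details the paper's sketch leaves implicit. These are the integrability of $E_i$ and the finiteness of $B$ (via overlap and $X\perp\xi$), and invertibility of $\widehat M_n$ with probability tending to one.
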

    %
	    \begin{proof}[Proof Sketch]
	        We begin by recalling that $\beta^\star$ satisfies $\E\insquare{\psi'(X,Y,D,s(X,Y),\beta^\star)} = 0$, where $\psi'$ is the score function defined by \Cref{eq:ols_score}. It's straightforward to verify that
	        \begin{align*}
	            \nabla_\beta\E\insquare{(\psi'(X,Y,D,s(X,Y),\beta))} &= \nabla_\beta\E\insquare{\frac{D}{s(X,Y)}\, X(Y-X^\top \beta)} \\
	            &= \nabla_\beta\E\insquare{X(Y-X^\top \beta)}  \tag{using that $\E\insquare{D/s(X,Y) |X,Y} = 1$}\\
	            &= \E\insquare{\nabla_\beta (X(Y-X^\top \beta))}  \tag{using \(\E[\|X\|_2^2]<\infty\)}\\
	            &= -\E[XX^\top]\\ 
	            &= -M\,.
        \end{align*}
        By \Cref{asmp:ols_non_degen}, $M$ is positive definite and hence invertible. Now define the sample versions of the score functions:
        \[
		            \psi_i'(\beta) \;\coloneqq\; \frac{D_i}{s(X_i,Y_i)}\,X_i\bigl(Y_i - X_i^\top \beta\bigr)\,,
	            \qquad
		            \psi_n'(\beta) \;\coloneqq\; \frac1n\sum_{i=1}^n \psi_i'(\beta)\,,
	            \qquad
		            \psi'(\beta) \;\coloneqq\; \E[\psi_i'(\beta)]\,.
	        \]
	    By construction, $\psi_n'(\hat{\beta})=0$. Define
	    \[
	        \widehat M_n\coloneq\frac1n\sum_{i=1}^n
		        \frac{D_i}{s(X_i,Y_i)}X_iX_i^\top\,.
	    \]
	    Because the score is affine in \(\beta\), we have the exact identity
	       \[
	            0=\psi_n'(\hat{\beta})
		            =\psi_n'(\beta^\star)-\widehat M_n(\hat{\beta}-\beta^\star)\,.
	        \]
	    Rearranging gives
	    \[
	        \sqrt n(\hat{\beta}-\beta^\star)
		        =\widehat M_n^{-1}\frac1{\sqrt n}\sum_{i=1}^n\psi_i'(\beta^\star)\,.
	    \]
	    The law of large numbers gives \(\widehat M_n\xrightarrow{p}M\). Moreover, the multivariate central limit theorem gives
	    \[
	        \frac1{\sqrt n}\sum_{i=1}^n\psi_i'(\beta^\star)
	        \xrightarrow{d}\cN(0,B)\,,
	    \]
	    since
	    \[
	        \E\!\left[\frac{D_i}{s(X_i,Y_i)^2}X_iX_i^\top\xi_i^2\right]
	        =\E\!\left[\frac{\xi_i^2}{s(X_i,Y_i)}X_iX_i^\top\right]
	        =B\,.
	    \]
	    Slutsky's theorem proves the claim.
    \end{proof}
	    \paragraph{Asymptotic confidence regions.} By \Cref{thm:ols_asymptotic}, it holds that 
	        \[
	            n(\hat{\beta}-\beta^\star)^\top \Sigma^{-1}(\hat{\beta}-\beta^\star) \xrightarrow{d} \chi^2_d\,,
        \]
        where $\Sigma = M^{-1}BM^{-1}$ and $\chi^2_d$ is the chi-squared distribution with $d$ degrees of freedom. To build a feasible confidence region, we may define the plug-in estimators 
	        \[
	            \widehat{M}_n \coloneq \frac{1}{n}\sum_{i=1}^n \frac{D_i}{s(X_i,Y_i)}X_iX_i^T\,, \quad \widehat{B}_n \coloneq \frac{1}{n}\sum_{i=1}^n \frac{D_i}{s(X_i,Y_i)^2}X_iX_i^T \hat{\xi}^2_i\,,
        \]
        where $\hat{\xi}_i \coloneq Y_i-X_i^T\hat{\beta}$. Then, let
        \[
            \widehat{\Sigma}_n \coloneq \widehat{M}_n^{-1}\widehat{B}_n\widehat{M}_n^{-1}\,.
        \]
	        Under the assumptions of \Cref{thm:ols_asymptotic},
	        \[
		            \widehat M_n\xrightarrow{p}M\,,\qquad
		            \widehat M_n^{-1}\xrightarrow{p}M^{-1}\,,\qquad
		            \widehat B_n\xrightarrow{p}B\,,\qquad
		            \widehat\Sigma_n\xrightarrow{p}\Sigma\,.
	        \]
	        Since \(\Sigma\) is positive definite, \(\widehat\Sigma_n\) is invertible with probability tending to one and \(\widehat\Sigma_n^{-1}\xrightarrow{p}\Sigma^{-1}\).
	        Consequently,
	        \[
	            n(\hat{\beta}-\beta^\star)^\top
	            \widehat\Sigma_n^{-1}
	            (\hat{\beta}-\beta^\star)
		            \xrightarrow{d}\chi_d^2\,.
	        \]

	        \begin{corollary}[Asymptotic Confidence Region]
	        Let $c_\delta$ be the value which satisfies $\Pr(\chi^2_{d} > c_\delta) = \delta$. Then,
	        \[
	        C_\delta = \{\beta \in \R^d : n(\hat{\beta}-\beta)^\top \widehat{\Sigma}_n^{-1}(\hat{\beta}-\beta) \leq c_\delta\}
        \]
        is an asymptotically valid $(1-\delta)$ confidence region for $\beta^\star$.
        \end{corollary}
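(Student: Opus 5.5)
The statement to prove is the corollary that $C_\delta$ is an asymptotically valid $(1-\delta)$ confidence region. This is a direct consequence of the convergence displayed just before it. My plan is to reduce coverage of $\beta^\star$ to a statement about the studentized quadratic form, and then pass to the limit using the $\chi^2_d$ convergence. Since $\beta\in C_\delta$ is defined by the inequality $n(\hat\beta-\beta)^\top\widehat\Sigma_n^{-1}(\hat\beta-\beta)\le c_\delta$, we have
\[
\Pr(\beta^\star\in C_\delta)=\Pr\bigl(T_n\le c_\delta\bigr),\qquad T_n\coloneqq n(\hat\beta-\beta^\star)^\top\widehat\Sigma_n^{-1}(\hat\beta-\beta^\star).
\]
On the event where $\widehat\Sigma_n$ is singular, we adopt any convention, for example $T_n=+\infty$. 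That event has probability tending to zero by the preceding discussion, so the convention does not affect limits.

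The first step is to justify $T_n\xrightarrow{d}\chi^2_d$, taking it from the text preceding the corollary. If I were to verify it, I would write $T_n=\|\widehat\Sigma_n^{-1/2}\sqrt n(\hat\beta-\beta^\star)\|_2^2$. By \cref{thm:ols_asymptotic}, $\sqrt n(\hat\beta-\beta^\star)\xrightarrow{d}\cN(0,\Sigma)$. Consistency of $\widehat\Sigma_n$, combined with the continuous mapping theorem (the matrix inverse square root is continuous at a positive definite $\Sigma$), gives $\widehat\Sigma_n^{-1/2}\xrightarrow{p}\Sigma^{-1/2}$. Slutsky's theorem then yields $\widehat\Sigma_n^{-1/2}\sqrt n(\hat\beta-\beta^\star)\xrightarrow{d}\cN(0,I_d)$. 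Applying the continuous mapping theorem once more with $v\mapsto\|v\|_2^2$ gives $T_n\xrightarrow{d}\chi^2_d$.

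The second step is to convert distributional convergence into convergence of the coverage probability. The $\chi^2_d$ distribution is continuous for $d\ge1$, so its CDF $F$ is continuous at $c_\delta$. By the Portmanteau theorem, $\Pr(T_n\le c_\delta)\to F(c_\delta)=1-\Pr(\chi^2_d>c_\delta)=1-\delta$. This is exactly asymptotic validity: $\lim_{n\to\infty}\Pr(\beta^\star\in C_\delta)=1-\delta$.

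The only delicate point is the consistency $\widehat B_n\xrightarrow{p}B$, which the text asserts. It requires handling the plug-in residuals $\hat\xi_i^2=(\xi_i-X_i^\top(\hat\beta-\beta^\star))^2$. My approach is to expand the square and split $\widehat B_n$ into three pieces:
\begin{enumerate}
\item the oracle term $\frac1n\sum_i \frac{D_i}{s_i^2}X_iX_i^\top\xi_i^2$, which converges to $B$ by the law of large numbers;
\item cross terms, each bounded by $\alpha^{-2}\|\hat\beta-\beta^\star\|_2$ times a sample average with finite mean;
\item a quadratic term, bounded by $\alpha^{-2}\|\hat\beta-\beta^\star\|_2^2$ times $\frac1n\sum_i\|X_i\|_2^4$.
\end{enumerate}
The sample averages in the second and third pieces have finite means because $\E\|X\|_2^4<\infty$, $\E\xi^2<\infty$, and $X\perp\xi$; for the cross terms, $\E[\|X\|_2^3|\xi|]\le\E\|X\|_2^3\,(\E\xi^2)^{1/2}<\infty$. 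Overlap with constant $\alpha$ bounds the weights $D_i/s_i^2$ by $\alpha^{-2}$. Finally, $\|\hat\beta-\beta^\star\|_2=O_p(n^{-1/2})$, so the second and third pieces vanish in probability.
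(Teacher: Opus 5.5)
Your proposal is correct and follows essentially the paper's route: $T_n\xrightarrow{d}\chi^2_d$ via \cref{thm:ols_asymptotic}, consistency of $\widehat\Sigma_n$ (invertible with probability tending to one), and Slutsky/continuous mapping, after which coverage converges to $1-\delta$ by continuity of the $\chi^2_d$ CDF at $c_\delta$. Your residual expansion proving $\widehat B_n\xrightarrow{p}B$ fills in a consistency claim that the paper asserts without proof.
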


\end{document}